\def\checkmark{\tikz\fill[scale=0.4](0,.35) -- (.25,0) -- (1,.7) -- (.25,.15) -- cycle;}
\definecolor{mygreen}{rgb}{0,0.6,0}
\DeclareMathOperator*{\argmax}{arg\,max}
\DeclareMathOperator*{\argmin}{arg\,min}
\Crefname{assumption}{Assumption}{Assumptions}
\theoremstyle{plain}
\newtheorem{theorem}{Theorem}
\newtheorem{lemma}{Lemma}
\theoremstyle{definition}
\newtheorem{assumption}{Assumption}
\newtheorem{remark}{Remark}
\newcommand{\bG}{\mathbb{G}}
\def\Holder{{H\"{o}lder}}
\def\Cramer{Cram\'{e}r}
\newcommand{\nhs}{n^{\mathrm{hst}}}
\newcommand{\nev}{n^{\mathrm{evl}}}
\newcommand{\para}{\mathrm{para}}
\newcommand{\bN}{\mathcal{N}}
\newcommand{\noo}{N_o}
\newcommand{\cD}{\mathcal{D}}
\newcommand{\cm}{\mathcal{M}}
\newcommand{\bigO}{\mathcal{O}} %
\newcommand{\ts}{\textstyle}
\newcommand{\CR}{\mathrm{CR}}
\newcommand{\hG}{\mathbb{G}}
\newcommand{\Op}{\mathrm{O}_{p}}
\newcommand{\op}{\mathrm{o}_{p}}
\newcommand{\E}{\mathbb{E}}
\newcommand{\pa}{\mathrm{\pa}}
\newcommand{\var}{\mathrm{var}}
\newcommand{\rE}{\mathrm{E}}
\newcommand{\thpol}{\pi^{\theta}}
\newcommand{\Asmse}{\mathrm{Asmse}}
\newcommand{\dml}{\mathrm{DRCS}}
\newcommand{\dm}{\mathrm{DM}}
\newcommand{\braces}[1]{\left\{#1\right\}}
\newcommand{\bracks}[1]{\left[#1\right]}
\newcommand{\Rl}{\mathbb{R}}
\newcommand{\epol}{\pi^\mathrm{e}}
\newcommand{\bpol}{\pi^\mathrm{b}}
\newcommand{\ipw}{\mathrm{IPW}}
\renewcommand{\eqref}[1]{(\ref{#1})}
\newcommand{\RN}[1]{%
  \textup{\uppercase\expandafter{\romannumeral#1}}%
}
\def\boxit#1{\vbox{\hrule\hbox{\vrule\kern6pt\vbox{\kern6pt#1\kern6pt}\kern6pt\vrule}\hrule}}
\newcommand{\kibitz}[2]{\ifnum\Comments=1\textcolor{#1}{#2}\fi}
\title{Off-Policy Evaluation and Learning\\
for External Validity under a Covariate Shift}
\newcommand*{\affaddr}[1]{#1} 
\newcommand*{\affmark}[1][*]{\textsuperscript{#1}}
\newcommand*{\equalcontribution}[1][*]{\textsuperscript{*}}
\newcommand*{\email}[1]{\texttt{#1}}
\date{}
\author{%
Masahiro Kato\affmark[1]\footnotemark[1]\thanks{Equal contributions.},\ \ \ \ \ Masatoshi Uehara\affmark[2]\footnotemark[1],\ \ \ \ \ Shota Yasui\affmark[1]\\
\affaddr{\affmark[1]CyberAgent Inc.}\\
\email{masahiro\_kato@cyberagent.co.jp}\\
\email{yasui\_shota@cyberagent.co.jp}\\
\affaddr{\affmark[2] Cornell University}\\
\email{mu223@cornell.edu}\\
}
\begin{document}
\maketitle



\begin{abstract}
We consider evaluating and training a new policy for the evaluation data by using the historical data obtained from a different policy. The goal of \emph{off-policy evaluation} (OPE) is to estimate the expected reward of a new policy over the evaluation data, and that of \emph{off-policy learning} (OPL) is to find a new policy that maximizes the expected reward over the evaluation data. Although the standard OPE and OPL assume the same distribution of covariate between the historical and evaluation data, a covariate shift often exists, i.e., the distribution of the covariate of the historical data is different from that of the evaluation data. In this paper, we derive the efficiency bound of OPE under a covariate shift. Then, we propose doubly robust and efficient estimators for OPE and OPL under a covariate shift by using a nonparametric estimator of the density ratio between the historical and evaluation data distributions. We also discuss other possible estimators and compare their theoretical properties. Finally, we confirm the effectiveness of the proposed estimators through experiments.
\end{abstract}

\section{Introduction}

In various applications, such as ad-design selection, personalized medicine, search engines, and recommendation systems, there is a significant interest in evaluating and learning a new policy from historical data \citep{kdd2009_ads, www2010_cb, AtheySusan2017EPL}. To accomplish this, we use \emph{off-policy evaluation} (OPE) and \emph{off-policy learning} (OPL). The goal of OPE is to evaluate a new policy by estimating the expected reward of the new policy \citep{dudik2011doubly,wang2017optimal,narita2019counterfactual,pmlr-v97-bibaut19a,Kallus2019IntrinsicallyES,Oberst2019}. In contrast, OPL aims to find a new policy that maximizes the expected reward \citep{ZhaoYingqi2012EITR,KitagawaToru2018WSBT,ZhouZhengyuan2018OMPL,Chernozhukov2019}. 

Although an OPE method provides an estimator of the expected reward of a new policy, most existing studies presume that the distribution of covariates is the same between the historical and evaluation data. However, in many real-world applications, the expected reward of a new policy over the distribution of evaluation data is of significant interest, which can be different from the historical data. For example, in the medical literature, it is known that the result of a randomized controlled trial (RCT) cannot be directly transported because the covariate distribution in a target population is different \citep{ColeStephenR.2010GEFR}. This problem is known as a lack of \emph{external validity} 
\citep{PearlJudea2015EVFD}. These situations where historical and evaluation data follow different distributions are also known as \emph{covariate shifts} \citep{shimodaira2000improving,NIPS2007_3248}. This situation is illustrated in Figure~\ref{fig:covariate_shift}.

Under a covariate shift, standard methods of OPE do not yield a consistent estimator of the expected reward over the evaluation data. Moreover, a covariate shift changes the efficiency bound of OPE, which is the lower bound of the asymptotic mean squared error (MSE) among reasonable $\sqrt{n}$-consistent estimators. Besides, standard theoretical analysis of OPE cannot be applied to the covariate shift case as in Remark \ref{rem:standard}.  To handle the covariate shift, we apply importance weighting using the density ratio between the distributions of the covariates of the historical and evaluation data \citep{shimodaira2000improving,Reddi2015}. 

\paragraph{Contributions:} This paper has four main contributions. First, we derive an efficiency bound of OPE under the covariate shift (Section~\ref{sec:semi_lb}). Second, in Section~\ref{sec:pe_csa},
we propose estimators constructed by estimators of the density ratio, behavior policy, and conditional expected reward. Especially, we employ nonparametric density ratio estimation from \citet{kanamori2012kulsif} to estimate the density ratio. The proposed estimator is an efficient estimator, which achieves the efficiency bound under mild nonparametric rate conditions of the estimators of nuisance functions. In addition, this estimator is robust to model-misspecification of estimators in the sense that the resulting estimator is consistent if either (i) models of the density ratio and the behavior policy or (ii) a model of the conditional average treatment effect is correct. Importantly, we do not require the Donsker conditions for those estimators by applying the cross-fitting (Section~\ref{sec:pe_csa}). Third, we propose other possible estimators for our problem setting and compare them (Section~\ref{sec:other_candidates}). Fourth, an algorithm of OPL is proposed based on efficient estimators (Section~\ref{sec:opl}). All proofs are shown in Appendix~\ref{sec:proof}. 

\begin{wrapfigure}{r}{0.5\textwidth}
\begin{minipage}{.44\linewidth}
\scalebox{0.8}{\begin{tikzpicture}[%
>=latex',node distance=2cm, minimum height=0.75cm, minimum width=0.75cm,
state/.style={draw, shape=circle, draw=mygreen, fill=mygreen!10, line width=0.5pt},
action/.style={draw, shape=rectangle, draw=red, fill=red!10, line width=0.5pt},
reward/.style={draw, shape=rectangle, draw=blue, fill=blue!10, line width=0.5pt}
]
\node[state] (S0) at (0,0) {$X \sim p(x)$};
\node[action,right of=S0] (A0) {$A$};
\node[reward,below of=A0] (R0) {$Y$};
\draw[->] (S0) -- (A0) node[pos=.45,above] {$\bpol$};
\draw[->] (S0) -- (R0);
\draw[->] (A0) -- (R0);
\end{tikzpicture}}
  \caption*{Historical data}
  \label{fig:sub-first}
\end{minipage}
\begin{minipage}{.47\linewidth}
\scalebox{0.8}{\begin{tikzpicture}[%
>=latex',node distance=2cm, minimum height=0.75cm, minimum width=0.75cm,
state/.style={draw, shape=circle, draw=green, fill=green!10, line width=0.5pt},
action/.style={draw, shape=rectangle, draw=gray, fill=gray!10, line width=0.5pt},
reward/.style={draw, shape=rectangle, draw=gray, fill=gray!10, line width=0.5pt}
]
\node[state] (S0) at (0,0) {$X\sim q(x)$};
\node[action,right of=S0] (A0) {$A$};
\node[reward,below of=A0] (R0) {$Y$};
\draw[->] (S0) -- (A0) node[pos=.5,above] {$\epol$};
\draw[->] (S0) -- (R0);
\draw[->] (A0) -- (R0);

\end{tikzpicture}}
 \caption*{Evaluation data}
  \label{fig:sub-first2}
\end{minipage}
  \caption{OPE under a covariate shift. Covariate, action, reward are denoted by $X,A,Y$. Evaluation and behavior policies are denoted by $\epol,\bpol$. Here, $p(x)\neq q(x)$, and the density ratio $q(x)/p(x)$ is unknown. The density $p(y\mid a,x)$ is the same in historical and evaluation data. For the evaluation data, $A$ and $Y$ are not observed.}
  \label{fig:covariate_shift}
  \vspace{-1cm}
\end{wrapfigure}

\paragraph{Related work:}  
The difference between distributions of covariates conditioned on a chosen action is also called a covariate shift \citep{pmlr-v28-zhang13d,pmlr-v48-johansson16}. A covariate shift in this paper refers to the different distributions of covariates between historical and evaluation data. \citet{DahabrehIssaJ.2019Gcif,JohanssonFredrik2018LWRf,pmlr-v108-sondhi20a} analyzed the treatment effect estimation under a covariate shift; however, our perspective and analysis are completely different from theirs. Besides, there are many studies regarding the external validity on a 
causal directed acyclic graph \citep{PearlJ2011ToCa,PearlJudea2015EVFD}. 
This paper focuses on statistical inference and learning instead of an identification strategy. 

\section{Problem Formulation}
\label{sec:prob_for}
In this section, we introduce our problem setting and review existing literature. 
\subsection{Data-Generating Process with Evaluation Data}
Let $A_i$ be an action taking variable in $\mathcal{A}$ and $Y_i\in\mathbb{R}$ be a reward of an individual $i\in\mathbb{N}$. Let $X_i$ and $Z_i$ be the \emph{covariate} observed by the decision maker when choosing an action, and $\mathcal{X}$ be the space of the covariate. Let a policy $\pi:\mathcal{X}\times\mathcal{A}\to[0,1]$ be a function of a covariate $x$ and action $a$, which can be considered as the probability of choosing an action $a$ given $x$. In this paper, we have access to \emph{historical} and \emph{evaluation data}. For the historical data, we can observe a dataset $\mathcal{D}^{\mathrm{hst}}=\{(X_i, A_i, Y_i)\}^{n^{\mathrm{hst}}}_{i=1}$, which are \emph{independent and identically distributed} (i.i.d.)for the evaluation data, we can observe an i.i.d. dataset $\mathcal{D}^{\mathrm{evl}}=\{Z_i\}^{n^{\mathrm{evl}}}_{i=1}$, where $n^{\mathrm{hst}}$ and $n^{\mathrm{evl}}$ denote the sample sizes of historical and evaluation data, respectively. We assume $\mathcal{D}^{\mathrm{hst}}$ and $\mathcal{D}^{\mathrm{evl}}$ are independent. Then, the data-generating process (DGP) is defined as follows:
\begin{align*}\ts
&\mathcal{D}^{\mathrm{hst}}=\{(X_i, A_i, Y_i)\}^{n^{\mathrm{hst}}}_{i=1}\sim p(x)\pi^{\mathrm{b}}(a \mid x)p(y \mid x,a),\ \ \ \mathcal{D}^{\mathrm{evl}}=\{Z_i\}^{n^{\mathrm{evl}}}_{i=1}\sim q(z),
\end{align*}
where $n^{\mathrm{hst}}=\rho n$, $n^{\mathrm{evl}}=(1-\rho)n$, $p(x)$ and $q(x)$ are densities\footnote{We use $x$ and $z$ exchangeably noting the spaces of $X$ and $Z$ are the same such as $q(x),q(z)$ and $p(x),p(z)$. On the other hand, we strictly distinguish $X_i$ and $Z_i$ noting these are different random variables.} over $\mathcal{X}$, and $\rho\in(0,1)$ is a constant. The policy $\pi^{\mathrm{b}}(a \mid x)$ of the historical data is called a \emph{behavior policy}. We generally assume $p(x),\,q(x)$ and $\bpol(a\mid x)$ to be unknown. In comparison to the usual situation of OPE, the density of historical data, $p(x)$, can be different from that of the evaluation data, $q(x)$. 

\paragraph{Notation:} This paper distinguishes the covariates between the historical and evaluation data as $X_i$ and $Z_i$, respectively. Hence, for a function $\mu:\mathcal{X}\to\mathbb{R}$, $\E[\mu(X)]$ and $\E[\mu(Z)]$ imply taking expectation over historical and evaluation data, respectively. Likewise, the empirical approximation is denoted as $\E_{\nhs}[\mu(X)]=1/\nhs\sum_{i}\mu(X_i)$ and $\E_{\nev}[\mu(X)]=1/\nev\sum_{i}\mu(Z_i)$. Additionally, let $\|\mu(X,A,Y)\|_2$ be $\E[\mu^2(X,A,Y)]^{1/2}$ for the function $\mu$,  $\E_{p(x,a,y)}[\mu(x,a,y)]$ be $\int \mu(x,a,y)p(x,a,y)\mathrm{d}(x,a,y)$, the asymptotic MSE of estimator $\hat R$ be $\Asmse[\hat R]=\lim_{n\to \infty}n \E[(\hat R-R)^2]$, and $\mathcal{N}(0,A)$ be a normal distribution with mean $0$ and variance $A$. Besides, we use functions $r(x)=q(x)/p(x)$, $w(a,x)=\epol(a\mid x)/\bpol(a\mid x)$, and $f(a,x)=\E[Y\mid X=x,A=a]$. Let us denote the estimators of $r(x)$, $w(a,x)$, and $f(a,x)$ as $\hat r(x)$, $\hat w(a, x)$, and $\hat f(a, x)$, respectively. Other notations are summarized in Appendix~\ref{sec:notation}.

\begin{remark}
Although we do not explicitly use counter-factual notation \citep{rubin87}, if we assume the usual conditions, our results immediately apply (Appendix~\ref{sec:idenfication}). 
\end{remark}

\subsection{Off-Policy Evaluation and Learning}
\label{sec:opeopl}

We are interested in estimating the expected reward of an \emph{evaluation policy} $\pi^{\mathrm{e}}(a \mid x)$, which is pre-specified for the evaluation data. Here, we assume a \emph{covariate shift}, which is a common situation in the literature of external validity. Under a covariate shift, while the conditional distribution of $y$ are the same between historical and evaluation data, the distribution of evaluation data is different from historical data, i.e., the distribution of evaluation data with evaluation policy $\epol$ follows $q(z)\epol(a \mid z)p(y\mid a,z)$. Then, we define the expected reward of evaluation policy as follows:
\begin{align}\ts
\label{def:policy_value}
R(\epol) := \mathbb{E}_{q(z)\epol(a \mid z)p(y \mid a,z)}\left[y\right].
\end{align}
Then, the first goal is OPE; i.e., estimating $R(\pi^{\mathrm{e}})$ using the historical data $\{X_i,A_i,Y_i\}_{i=1}^{\nhs}$ and evaluation data $\{Z_i\}^{\nev}_{i=1}$. The second goal is OPL; i.e., training a new policy that maximizes the expected reward as $\pi^* = \argmax_{\pi\in\Pi}R(\pi)$, where $\Pi$ is the policy class. In some cases, to construct an estimator $R(\pi)$, we use $r(x)$, $w(a,x)$, and $f(a,x)$. These functions are called \emph{nuisance functions}. Let $\hat{r}(x)$, $\hat{w}(a,x)$, and $\hat{f}(a,x)$ be their estimators.

\paragraph{Assumptions:} 
We assume strong overlaps for $r(x)$, $w(a, x)$ and theirs estimators and boundedness for $Y_i$ and $\hat{f}$ using a constant $R_{\max} > 0$.
\begin{assumption}
\label{asm:global}
$0\leq r(x) \leq C_1,\,0\leq w(a,x)\leq C_2,\,0\leq  Y_i \leq R_{\max}$.
\end{assumption}

\begin{assumption}
\label{asm:global2}
$0\leq \hat r(x) \leq C_1,\,0\leq \hat w(a,x)\leq C_2,\,0\leq \hat f(a,x) \leq R_{\max}$.
\end{assumption}

\subsection{Preliminaries}
\label{sec:pre}
Here, we review the existing works of OPE, OPL, and the density ratio estimation.  

\paragraph{Standard OPE and OPL:} We review three types of standard estimators of $\E_{p(x)\epol(a\mid x)p(y\mid x,a)}[y]$ under the case where $q(x)=p(x)$ in (\ref{def:policy_value}). The first estimator is an inverse probability weighting (IPW) estimator given by $\E_{\nhs}[\hat w(A,X)Y]$ \citep{Horvitz1952,rubin87,cheng1994,hirano2003efficient,swaminathan15a}. Even though this estimator is unbiased when the behavior policy is known, it often suffers from high variance. The second estimator is a direct method (DM) estimator $\E_{\nhs}[\hat f(A,X)]$ \citep{HahnJinyong1998OtRo}, which is weak against model misspecification for $f(a,x)$. The third estimator is a doubly robust estimator \citep{robins94} defined as
\begin{align}\ts
\label{eq:doubl_conventional}
    \E_{\nhs}[\hat w(A,X)\{Y-\hat f(A,X)\}+\E_{\epol(a\mid X)}[\hat f(a,X) \mid X]]. 
\end{align}
Under certain conditions, it is known that this estimator achieves the efficiency bound (a.k.a semiparametric lower bound), which is the lower bound of the asymptotic MSE of OPE, among regular $\sqrt{n}$-consistent estimators \citep[Theorem 25.20]{VaartA.W.vander1998As} \footnote{Formally, regular estimators means estimators whose limiting distribution is insensitive to local changes to the DGP. Refer to \citet[Chapter 7]{VaartA.W.vander1998As} }. This efficiency bound is
\begin{align}\ts 
\label{eq:bound}
\E[w^2(A,X)\var[Y\mid A,X]]+\var[v(X)], 
\end{align}
where $v(x)=\E_{\epol(a\mid x)}[f(a,x)\mid x]$ \citep{narita2019counterfactual}. Such estimator is called an \emph{efficient estimator}. These estimators are also used for OPL \citep{ZhangBaqun2013Reoo,AtheySusan2017EPL}.

\paragraph{Density Ratio Estimation:}
To estimate $R(\pi)$, we apply an importance weighting using the density ratio between distributions of historical and evaluation covariates. For example, if we know $r(x)$ and $w(a,x)$, we can construct an estimator of $R(\epol)$ as $\E_{\nhs}[r(X)w(A,X)Y]$. If we know the behavior policy as in an RCT, we can exactly know $w(a,x)$. However, since we do not know the density ratio $r(x)$ directly even in an RCT, we have to estimate $r(x)$ using the covariate data: $\{X_i\}_{i=1}^{\nhs}$ and $\{Z_i\}_{i=1}^{\nev}$. To estimate the density ratio $r(x)$, we use a nonparametric one-step loss based estimator. For example, we employ \emph{Least-Squares Importance Fitting} (LSIF), which uses the squared loss to fit the density-ratio function \citep{kanamori2012kulsif}. We show details in Appendix~\ref{appdx:uLSIF}.

\remark[Difference from standard OPE problems] \label{rem:standard}


Our current problem, i.e., policy evaluation \emph{under a shift in domain and policy}, differs from a standard policy evaluation problem \emph{only under a shift in the policy}. For our domain and policy shift problem, we assume a stratified sampling, i.e, fixed $\rho$ w.r.t $n$. Instead, in the literature of a policy shift, people assume a sampling scheme is i.i.d. As in \citet{WooldridgeJeffreyM.2001APOW}, the difference of these two sampling schemes makes the analysis different. 

We can also assume that samples are i.i.d in our problem by treating $\rho$ is a random variable and assuming each replication follows a \emph{mixture distribution} \citep{DahabrehIssaJ.2019Gcif}. However, under this assumption, the efficiency bound cannot be calculated in an explicit form. Besides, $\rho$ is often given as a constant value by some design \citep{QinJing1998IfCa}. 

\section{Efficiency Bound under a Covariate Shift}
\label{sec:semi_lb}
We discuss the efficiency bound of OPE under a covariate shift. Efficiency bound is defined for an estimand under some posited models of the DGP \citep{bickel98}. If this posited model is a parametric model, it is equal to the \Cramer-Rao lower bound. When this posited model is non or semiparametric model, we can still define a corresponding \Cramer-Rao lower bound. In this paper, we modify the standard theory under i.i.d. sampling to the current problem with a stratified sampling scheme. The formal definition is shown in Appendix~\ref{sec:semi_ld}. 

Here, we show the efficiency bound of OPE under a covariate shift. 
\begin{theorem}
\label{thm:efficiency}
The efficiency bound of $R(\epol)$ under fully nonparametric models  is 
\begin{align}
\label{eq:bound2}
     \Upsilon(\epol)=\rho^{-1}\E[r^2(X)w^2(A,X)\var[Y\mid A,X]]+(1-\rho)^{-1}\var[v(Z)],
\end{align}
where $v(z)=\E_{\epol(a\mid z)}[f(a,z)\mid z]$. The efficiency bound under a nonparametric model with fixed $p(x)$ and $\bpol(a \mid x)$ is the same. 
\end{theorem}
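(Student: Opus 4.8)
The plan is to compute the efficient influence function (EIF) of $R(\epol)$ and identify the efficiency bound with its squared norm, but under an inner product adapted to the stratified sampling scheme rather than the usual i.i.d. one. First I would set up a regular one-dimensional parametric submodel indexed by $\theta$ passing through the truth, perturbing the four unknown components of the DGP: the historical covariate density $p(x)$, the behavior policy $\bpol(a\mid x)$, the shared conditional reward law $p(y\mid a,x)$, and the evaluation density $q(z)$. Denote the associated scores by $a(x)$, $b(a,x)$, $c(y,a,x)$, $d(z)$, each mean-zero under the appropriate conditioning (so $\E[c(Y,a,x)\mid a,x]=0$ and $\E_q[d(Z)]=0$, etc.). Since $p(y\mid a,x)$ is shared across strata but $(A,Y)$ is unobserved in the evaluation data, the score $c$ enters the historical likelihood only, yet it still perturbs the target through $f$.

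Next I would differentiate $R(\theta)=\int q_\theta(z)\E_{\epol(a\mid z)}[f_\theta(a,z)\mid z]\,\rd z=\E_{q}[v(Z)]$ along the submodel. The pathwise derivative splits into a term from perturbing $q$, namely $\E_q[d(Z)(v(Z)-R)]$, and a term from perturbing $f$ (hence $c$), namely $\E_{q(z)\epol(a\mid z)p(y\mid a,z)}[(Y-f(A,Z))\,c(Y,A,Z)]$; the scores $a,b$ drop out because $R$ depends on neither $p(x)$ nor $\bpol$. The crucial move is a change of measure on the second term: multiplying and dividing by $p(x)\bpol(a\mid x)$ converts this evaluation-distribution expectation into the historical-distribution expectation $\E_p[r(X)w(A,X)(Y-f(A,X))\,c(Y,A,X)]$, with the density ratio $r$ and policy ratio $w$ emerging exactly as the importance weight bridging the two covariate laws.

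The heart of the argument — and the one place the analysis genuinely departs from textbook theory — is matching this derivative against the stratified inner product $\langle\phi,s\rangle=\rho\,\E_p[\phi_{\mathrm{hst}}s_{\mathrm{hst}}]+(1-\rho)\E_q[\phi_{\mathrm{evl}}s_{\mathrm{evl}}]$, which weights each stratum by its fixed fraction. Equating the coefficient of $d$ forces $\phi_{\mathrm{evl}}(z)=(1-\rho)^{-1}(v(z)-R)$, and equating the $c$-component (with the $a,b$-components set to zero) forces $\phi_{\mathrm{hst}}(x,a,y)=\rho^{-1}r(x)w(a,x)(y-f(a,x))$; both lie in the tangent space since each is mean-zero under the required conditioning, so together they form the EIF. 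One checks the orthogonality to the $a,b$-directions is automatic because the residual $Y-f$ has conditional mean zero given $(A,X)$. The bound is then $\|\phi\|^2=\rho\,\E_p[\phi_{\mathrm{hst}}^2]+(1-\rho)\E_q[\phi_{\mathrm{evl}}^2]$, and substituting while using $\E_p[(Y-f)^2\mid A,X]=\var[Y\mid A,X]$ collapses this to exactly \eqref{eq:bound2}; the inverse weights $\rho^{-1}$ and $(1-\rho)^{-1}$ fall out of the cancellation of fractions in the norm. The main obstacle is precisely this stratified bookkeeping: getting the $\rho$-weighting of the inner product right is what produces the $\rho^{-1}/(1-\rho)^{-1}$ scaling and distinguishes the result from the standard OPE bound \eqref{eq:bound}. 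As a sanity check that the bound is attained rather than merely a lower bound, I would verify that the oracle estimator $\E_{\nhs}[r(X)w(A,X)(Y-f(A,X))]+\E_{\nev}[v(Z)]$ has per-stratum influence functions matching $\phi_{\mathrm{hst}}$ and $\phi_{\mathrm{evl}}$.

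Finally, for the second assertion, I would observe that the EIF computed above already has vanishing components along the tangent directions for $p(x)$ and $\bpol(a\mid x)$. Restricting to the submodel with $p$ and $\bpol$ fixed merely deletes those directions from the tangent space; since $\phi$ still lies in the smaller tangent space and still represents the derivative $\langle\phi,s\rangle$ for every remaining score, it remains the EIF there, and hence the efficiency bound is unchanged.
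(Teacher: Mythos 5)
Your proposal is correct and arrives at the same efficient influence function and the same bound as the paper, but you formalize the stratified sampling by a different device, so a comparison is worthwhile. The paper circumvents the non-i.i.d.\ structure by blocking: it treats the entire $n$-sample $o=(x_1,\dots,x_{\nhs},a_1,\dots,y_1,\dots,z_1,\dots,z_{\nev})$ as a \emph{single} observation, imagines $m$ i.i.d.\ replications of this block, and applies standard i.i.d.\ semiparametric theory (van der Vaart, Thm.~25.20) at the block level; the EIF is then the block function $\phi(o)=\frac{1}{\nhs}\sum_i r(x_i)w(a_i,x_i)\{y_i-f(a_i,x_i)\}+\frac{1}{\nev}\sum_j v(z_j)-R(\epol)$ and the bound is $n\var[\phi(o)]$. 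You instead encode the stratification in the weighted inner product $\langle\phi,s\rangle=\rho\,\E_p[\phi_{\mathrm{hst}}s_{\mathrm{hst}}]+(1-\rho)\E_q[\phi_{\mathrm{evl}}s_{\mathrm{evl}}]$ and identify the Riesz representer of the pathwise derivative; since $\frac{1}{\nhs}=\frac{1}{\rho n}$ and $\frac{1}{\nev}=\frac{1}{(1-\rho)n}$, your per-stratum components $\rho^{-1}r(x)w(a,x)\{y-f(a,x)\}$ and $(1-\rho)^{-1}\{v(z)-R\}$ are exactly what the paper's block EIF decomposes into, so the two objects coincide. All the substantive calculations are shared: splitting the derivative into the $q$-part and the $f$-part, the change of measure that makes $r\cdot w$ appear, orthogonality to the $p(x)$- and $\bpol$-scores via $\E[Y-f(A,X)\mid A,X]=0$, the tangent-space membership check, and the observation that fixing $p$ and $\bpol$ only shrinks the tangent space without dislodging $\phi$, so the bound is unchanged. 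What each route buys: your formulation is cleaner and matches the standard multi-sample semiparametric theory, whereas the paper's blocking makes the appeal to the i.i.d.\ convolution theorem literal. The one caveat is that the step you yourself flag as the crucial move --- that the efficiency bound for this sampling scheme \emph{equals} the squared norm of the representer in your stratified inner product --- is precisely the nonstandard claim that needs an argument; the paper's blocking reduction (or an explicit stratified LAN/convolution theorem) is what supplies it, so your proof is complete only once that reduction is made explicit rather than asserted.
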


Three things are remarked. First, the knowledge of the density function of the historical data $p(x)$ and the behavior policy $\bpol(a\mid x)$ does not change the efficiency bound \eqref{eq:bound}. 
This is because the target functional does not include these two densities.
Second, the efficiency bound under a covariate shift \eqref{eq:bound2} reduces to the bound without a covariate shift \eqref{eq:bound} in a special case, $r(x)=1$ and $\rho=0.5$. 
Then, we can see \eqref{eq:bound2}$=2\times $\eqref{eq:bound}. The factor $2$ originates from the scaling of asymptotic MSE. Third, we need to calculate the \emph{efficient influence function}, which is a key function to derive the efficiency bound. This function is useful to construct the efficient estimator.

\section{OPE under a Covariate Shift}
\label{sec:pe_csa}
For OPE under a covariate shift, we propose an estimator constructed from the following basic form:
\begin{align}\ts
\label{eq:double}
    \E_{\nhs}[\hat r(X)\hat w(A,X)\{Y-\hat f(A,X)\}]+\E_{\nev}[\hat v(Z)],
\end{align}
where $\hat r(x)$, $\hat w(a,x)$, and $\hat f(a,x)$ are nuisance estimators of $r(x)$,  $w(a,x)$, and $f(a,x)$, and $\hat v(z)=\E_{\epol(a\mid z)}[\hat f(a,z)\mid z]$. As well as the standard doubly robust estimator \eqref{eq:doubl_conventional}, the above form is designed to have the double robust structure regarding the model specifications of $r(x)w(a, x)$ and $f(a, x)$. First, we consider the case where $\hat r(x)=r(x)$ and $\hat w(a,x)=w(a,x)$, but $\hat f(a,x)$ is equal to $f^{\dagger}(a,x)$ and different from $f(a,x)$, i.e., we have correct models for $r(x)$ and $w(a,x)$, but not for $f(a,x)$. Then, \eqref{eq:double} is a consistent estimator for $R(\epol)$ since
\begin{align*}\ts
     &\mathbb{E}_{\nhs}[r(X)w(A,X)Y]+\mathbb{E}_{\nev}[\E_{\epol(a \mid Z)}[f^{\dagger}(a,Z)\mid Z]]-\mathbb{E}_{\nhs}[r(X)w(A,X)f^{\dagger}(A,X)]\\
     &\approx \mathbb{E}_{\nhs}[r(X)w(A,X)Y]+0
      \approx R(\epol). 
\end{align*}
Second, we consider the case where $\hat f(a,x)=f(a,x)$, but $\hat r(x)$ and $\hat w(a,x)$ are equal to functions $r^{\dagger}(x)$ and $w^{\dagger}(a,x)$, which are different from $r(x)$ and $w(a,x)$, respectively, i.e, we have correct models for $f(a,x)$, but not for $r(x)$ and $w(a,x)$. Then, \eqref{eq:double} is a consistent estimator for $R(\epol)$ since
\begin{align*}\ts
     &\mathbb{E}_{\nhs}[r^{\dagger}(X)w^{\dagger}(a,x)\{Y-f(A,X) \}]+\mathbb{E}_{\mathrm{n^{evl}}}[\E_{\epol(a \mid Z)}[f(a,Z)\mid Z]]\\
     &\approx \mathbb{E}_{\nev}[\E_{\epol(a \mid Z)}[f(a,Z)\mid Z]]+0
      \approx R(\epol). 
\end{align*}
The formal result is given later in Theorem~\ref{thm:model}. 

\begin{algorithm}[tb]
   \caption{\small{Doubly Robust Estimator under a Covariate Shift}}
   \label{alg:dml}
\begin{algorithmic}
    \STATE \textbf{Input}: The evaluation policy $\epol$.
    \STATE Take a $\xi$-fold random partition $(I_k)^{\xi}_{k=1}$ of observation indices $[\nhs] = \{1,\dots,\nhs\}$ such that the size of each fold $I_k$ is $\nhs_k=\nhs/\xi$.
    \STATE Take a $\xi$-fold random partition $(J_k)^{\xi}_{k=1}$ of observation indices $[\nev] = \{1,\dots,\nev\}$ such that the size of each fold $J_k$ is $\nev_k=\nev/\xi$.
    \STATE For each $k\in[\xi]=\{1,\dots,\xi\}$, define $I^c_k:=\{1,\dots,\nhs\}\setminus I_k$ and $J^c_k:=\{1,\dots,\nev\}\setminus J_k$.
    \STATE Define $(\mathcal{S}_k)^{\xi}_{k=1}$ with $\mathcal{S}_k = \{\{(X_i, A_i, Y_i)\}_{i\in I^c_k}, \{Z_j\}_{j\in J^c_k}\}$.
    \FOR{$k\in[\xi]$}
    \STATE Construct estimators $\hat w_k(a, x)$, $\hat r_k(x)$, and $\hat f_k(a,x)$ using $\mathcal{S}_k$.
    \STATE Construct an estimator $\hat{R}_k$ defined as \eqref{eq:case2}.
    \ENDFOR
    \STATE Construct an estimator $\hat R$ of $R$ by taking the average of $\hat{R}_k$ for $k\in[\xi]$, i.e., 
    $\hat R = \frac{1}{\xi}\sum^{\xi}_{k=1}\hat{R}_k$.
\end{algorithmic}
\end{algorithm}  

Next, we consider estimating $r(x)$,  $w(a,x)$, and $f(a,x)$. For example, for $f(a,x)$ and $w(a,x)$, we can apply complex and data-adaptive regression and density estimation methods such as random forests, neural networks, and highly adaptive Lasso \citep{DiazIvan2019Mlit}. Note that $\hat w(a,x)$ is estimated as $\epol/\hat{\pi}^{b}$ since $\epol$ is known, where $\hat \pi^{b}$ is an estimator of $\bpol$. For $r(x)$, we can use the data-adaptive density ratio method in Section~\ref{sec:pre}. Although 
such complex estimators approximate the true values well, it is pointed out that such estimators often violate the Donsker condition \citep{VaartA.W.vander1998As,ChernozhukovVictor2018Dmlf}. 
\footnote{When the square integrable envelope function exists and the metric entropy of the function class is controlled at some rates, Donker's condition is satisfied \citep[Chapter 19]{VaartA.W.vander1998As}.}, which is required to obtain the asymptotic distribution of an estimator of interest, such as (\ref{eq:double}). 

For deriving the asymptotic distributions of an estimator of $R(\epol)$ using estimators without the Donsker condition, we apply cross-fitting \citep{klaassen1987,ZhengWenjing2011CTME,ChernozhukovVictor2018Dmlf} based on \eqref{eq:double}. 
The procedure is as follows. First, we separate data $\mathcal{D}^{\mathrm{hst}}$ and $\mathcal{D}^{\mathrm{evl}}$ into $\xi$ groups. Next, using samples in each group, we estimate the nuisance functions nonparametrically. Then, we construct an estimator of $R(\epol)$ using the nuisance estimators. For each group $k\in\{1,2,\dots,\xi\}$, we define 
\begin{align}\ts
\label{eq:case2}
   \hat R_k = &{\mathbb{E}}_{n^\mathrm{hst}_{k}}[\hat r^{(k)}(X)\hat w^{(k)}(A,X)\{Y-\hat f^{(k)}(A,X) \}]+{\mathbb{E}}_{n^\mathrm{evl}_{k}}[\mathbb{E}_{\epol}[\hat f^{(k)}(a,Z)|Z]],
\end{align} 
where ${\mathbb{E}}_{n^\mathrm{hst}_{k}}$ is the sample average over $k$-th partitioned historical data with $n^\mathrm{hst}_{k}$ samples and ${\mathbb{E}}_{n^\mathrm{evl}_{k}}$ is the sample average over $k$-th partitioned evaluation data with $n^\mathrm{evl}_{k}$ samples. Finally, we construct an estimator of $R(\epol)$ by taking the average of the $K$ estimators, $\{\hat R_k\}$. We call the estimator \emph{doubly robust estimator under a covariate shift} (DRCS) and denote it as $\hat{R}_{\mathrm{DRCS}}(\epol)$. The whole procedure is given in Algorithm~\ref{alg:dml}.

In the following, we show the asymptotic property of 
$\hat{R}_{\mathrm{DRCS}}(\epol)$. First, $\hat{R}_{\mathrm{DRCS}}(\epol)$ is efficient.

\begin{theorem}[Efficiency]
\label{thm:main2}
For $k\in \{1,\cdots,\xi\}$, assume $\alpha\beta=\op(n^{-1/2}),\alpha=\op(1),\beta=\op(1)$ where $\|\hat r^{(k)}(X)\hat w^{(k)}(A,X)-r(X)w(A,X)\|_2=\alpha,\|\hat f^{(k)}(A,X)-f(A,X)\|_2=\beta$. Then, $\sqrt{n}(\hat{R}_{\mathrm{DRCS}}(\epol)-R(\epol))\stackrel{d}{\rightarrow}\mathcal{N}(0,\Upsilon(\epol))$, where  $\Upsilon(\epol)$ is the efficiency bound in Theorem \ref{thm:efficiency}. 
\end{theorem}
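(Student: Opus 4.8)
The plan is to show that $\hat R_{\mathrm{DRCS}}(\epol)$ is asymptotically equivalent to an infeasible \emph{oracle} estimator built from the true nuisances, and then to apply a central limit theorem to the oracle under the stratified sampling scheme. Define the oracle integrand $\phi(X,A,Y)=r(X)w(A,X)\{Y-f(A,X)\}$ and the per-fold oracle
\begin{align*}
\hat R^{\mathrm{orc}}_k=\E_{\nhs_k}[\phi]+\E_{\nev_k}[v(Z)],\qquad \hat R^{\mathrm{orc}}=\tfrac1\xi\sum_{k=1}^\xi \hat R^{\mathrm{orc}}_k.
\end{align*}
Since averaging over folds reconstructs the full-sample means, $\hat R^{\mathrm{orc}}=\E_{\nhs}[\phi]+\E_{\nev}[v(Z)]$; because $\E[Y-f(A,X)\mid A,X]=0$ and $\E[v(Z)]=R(\epol)$, the oracle is exactly centered at $R(\epol)$. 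First I would establish the limit law for $\hat R^{\mathrm{orc}}$, and then control the remainder $\hat R_{\mathrm{DRCS}}(\epol)-\hat R^{\mathrm{orc}}$ fold by fold.

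For a fixed fold $k$, write $\hat\phi_k=\hat r^{(k)}\hat w^{(k)}\{Y-\hat f^{(k)}\}$ and decompose the historical contribution as
\begin{align*}
\E_{\nhs_k}[\hat\phi_k-\phi]=(\E_{\nhs_k}-\E)[\hat\phi_k-\phi]+\E[\hat\phi_k-\phi\mid \mathcal{S}_k],
\end{align*}
with an analogous split on the evaluation side. Here cross-fitting is essential: conditional on the training sample $\mathcal{S}_k$, the function $\hat\phi_k$ is fixed, so the first (empirical-process) term has conditional mean zero and conditional variance bounded by $\nhs_k^{-1}\|\hat\phi_k-\phi\|_2^2$. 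After $\sqrt n$ scaling this is $O(1)\cdot\|\hat\phi_k-\phi\|_2^2$, and the boundedness in Assumptions~\ref{asm:global}--\ref{asm:global2} together with $\alpha=\op(1),\beta=\op(1)$ give $\|\hat\phi_k-\phi\|_2=\op(1)$; a conditional Chebyshev argument then makes the scaled empirical-process term $\op(1)$, with \emph{no} Donsker condition required.

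The crux is the bias term $\E[\hat\phi_k-\phi\mid\mathcal{S}_k]$ plus its evaluation counterpart. Using $\E[Y\mid A,X]=f$, the historical bias equals $\E[\hat r^{(k)}\hat w^{(k)}(f-\hat f^{(k)})]$, while the evaluation bias is $\E_{q(z)\epol(a\mid z)}[\hat f^{(k)}-f]$. Changing measure through $q(z)\epol(a\mid z)=r(z)w(a,z)\,p(z)\bpol(a\mid z)$ rewrites the latter as $\E[r w(\hat f^{(k)}-f)]$ over the historical law, so the two contributions combine into the product form
\begin{align*}
\E\bigl[(\hat r^{(k)}\hat w^{(k)}-rw)(f-\hat f^{(k)})\bigr].
\end{align*}
This is the doubly robust / Neyman-orthogonal cancellation, and by Cauchy--Schwarz it is at most $\|\hat r^{(k)}\hat w^{(k)}-rw\|_2\,\|\hat f^{(k)}-f\|_2=\alpha\beta=\op(n^{-1/2})$; hence $\sqrt n$ times the total bias is $\op(1)$.

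It remains to apply the CLT to $\hat R^{\mathrm{orc}}$. Under $\nhs=\rho n,\ \nev=(1-\rho)n$, the scaling $\sqrt n(\E_{\nhs}-\E)[\phi]$ converges to $\mathcal{N}(0,\rho^{-1}\var[\phi])$ with $\var[\phi]=\E[r^2w^2\var[Y\mid A,X]]$, and $\sqrt n(\E_{\nev}[v]-\E[v])$ to $\mathcal{N}(0,(1-\rho)^{-1}\var[v(Z)])$; independence of $\mathcal{D}^{\mathrm{hst}}$ and $\mathcal{D}^{\mathrm{evl}}$ removes any cross term, so the two variances add to $\Upsilon(\epol)$. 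Combining with the $\op(1)$ remainders and Slutsky gives $\sqrt n(\hat R_{\mathrm{DRCS}}(\epol)-R(\epol))\stackrel{d}{\to}\mathcal{N}(0,\Upsilon(\epol))$. I expect the main obstacle to be the bias step: one must verify that the density-ratio change of measure aligns the historical and evaluation expectations onto the same measure so the errors cancel into a single product, and must confirm the stratified-sampling bookkeeping that produces the exact $\rho^{-1}$ and $(1-\rho)^{-1}$ factors.
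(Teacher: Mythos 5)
Your proposal is correct and follows essentially the same route as the paper's proof: the same three-part decomposition into an empirical-process term (killed by cross-fitting plus conditional Chebyshev, with no Donsker condition), a bias term (killed by the doubly robust cancellation $\E[(\hat r\hat w - rw)(f-\hat f)]$ after the change of measure $q(z)\epol(a\mid z)=r(z)w(a,z)p(z)\bpol(a\mid z)$, bounded via Cauchy--Schwarz by $\alpha\beta=\op(n^{-1/2})$), and an oracle term handled by the stratified-sampling CLT. The only cosmetic difference is that the paper isolates the two vanishing pieces of the bias explicitly before applying H\"older, whereas you combine the historical and evaluation biases directly into the product form; the substance is identical.
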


Importantly, the Donsker condition is \emph{not} needed for nuisance estimators owing to the cross-fitting and the doubly robust form of $\hat{R}_{\mathrm{DRCS}}$. What we only need are rate conditions. The rate conditions are mild since these are nonparametric rates smaller than $1/2$. For example, this is satisfied when $p=1/4$, $q=1/4$. With some smoothness conditions, the nonparametric estimator $\hat f(a,x)$ can achieve this convergence rate \citep{WainwrightMartinJ2019HS:A}. Regarding $r(x)w(a,x)$, we can show that if $\hat r(x)$ and $\hat w(a,x)$ similarly satisfy some nonparametirc rates, $\hat r(x)\hat w(a,x)$ satisfies it as well. 
\begin{lemma}
\label{lem:den}
Assume $\|\hat r(X)-r(X)\|_2 =\op(n^{-p})$ and $\|\hat w(A,X)-w(A,X)\|_2 = \op(n^{-p})$. Then, $\|\hat r(X)\hat w(A,X)-r(X)w(A,X)\|_2=\op(n^{-p})$.
\end{lemma}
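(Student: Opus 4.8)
The plan is to reduce the product bound to the two individual rate bounds by a telescoping decomposition combined with the uniform boundedness in Assumptions~\ref{asm:global} and \ref{asm:global2}. First I would write the algebraic identity
\begin{align*}
\hat r(X)\hat w(A,X)-r(X)w(A,X) = \hat r(X)\{\hat w(A,X)-w(A,X)\}+\{\hat r(X)-r(X)\}w(A,X),
\end{align*}
and apply the triangle inequality for the $\|\cdot\|_2$ norm (taken under the historical-data distribution of $(X,A)$) to obtain
\begin{align*}
\|\hat r(X)\hat w(A,X)-r(X)w(A,X)\|_2 \le \|\hat r(X)\{\hat w(A,X)-w(A,X)\}\|_2 + \|\{\hat r(X)-r(X)\}w(A,X)\|_2.
\end{align*}

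Next I would bound each summand by factoring out a uniform constant. For the first term, Assumption~\ref{asm:global2} gives $0\le\hat r(X)\le C_1$ pointwise, so $\E[\hat r^2(X)\{\hat w(A,X)-w(A,X)\}^2]\le C_1^2\,\E[\{\hat w(A,X)-w(A,X)\}^2]$, i.e.\ $\|\hat r(X)\{\hat w(A,X)-w(A,X)\}\|_2\le C_1\|\hat w(A,X)-w(A,X)\|_2$. For the second term, Assumption~\ref{asm:global} gives $0\le w(A,X)\le C_2$, whence $\|\{\hat r(X)-r(X)\}w(A,X)\|_2\le C_2\|\hat r(X)-r(X)\|_2$; here I use that $\hat r(X)-r(X)$ depends on $X$ only, so the $w$-bound survives after integrating out $A$. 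Combining the two,
\begin{align*}
\|\hat r(X)\hat w(A,X)-r(X)w(A,X)\|_2 \le C_1\|\hat w(A,X)-w(A,X)\|_2 + C_2\|\hat r(X)-r(X)\|_2.
\end{align*}

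Finally, since both $\|\hat w(A,X)-w(A,X)\|_2$ and $\|\hat r(X)-r(X)\|_2$ are $\op(n^{-p})$ by hypothesis and $C_1,C_2$ are fixed constants, the right-hand side is $\op(n^{-p})$, which is the claim.

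The argument is essentially mechanical, so there is no genuine obstacle; the only point requiring care is that $\hat r$ and $\hat w$ are data-dependent (random) functions, so the constants $C_1,C_2$ must be \emph{uniform, almost-sure} bounds rather than merely in-probability ones. This is exactly what Assumptions~\ref{asm:global}--\ref{asm:global2} guarantee, and it is what lets me pull the constants out of the $\op$ statements without invoking an extra Cauchy--Schwarz step. An alternative three-term split that isolates the cross product $\{\hat r(X)-r(X)\}\{\hat w(A,X)-w(A,X)\}$ (which is $\op(n^{-2p})$, hence negligible) would also work, but the two-term decomposition above is the shortest route.
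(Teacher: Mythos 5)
Your proof is correct and follows essentially the same route as the paper's: the identity $\hat r\hat w - rw = \hat r\{\hat w - w\} + \{\hat r - r\}w$, the triangle inequality for $\|\cdot\|_2$, and uniform boundedness to pull out the constants $C_1$ and $C_2$. If anything, you are slightly more careful than the paper in attributing the bound on the first term to $\hat r \le C_1$ (Assumption~\ref{asm:global2}) rather than to $r$ itself, which is indeed the bound actually needed there.
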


Next, we formally show double robustness of the estimator, i.e., the estimator is consistent if either $r(x)w(a,x)$ or $f(a,x)$ is correct. 
\begin{theorem}[Double robustness]
\label{thm:model}
For $k\in \{1,\cdots,\xi\}$, assume $\exists\,f^{\dagger},r^{\dagger},w^{\dagger}$,  $\|\hat f^{(k)}(A,X)-f^{\dagger}(A,X)\|_2 =\op(1),\ \ \ \|\hat r^{(k)}(X)\hat w^{(k)}(A,X)-r^{\dagger}(X)w^{\dagger}(A,X)\|_2 =\op(1).$ If $r^{\dagger}(x)w^{\dagger}(a,x)=r(x)w(a,x)$ or $q^{\dagger}(a,x)=q(a,x)$ holds, the estimator $\hat{R}_{\mathrm{DRCS}}(\epol)$ is consistent. 
\end{theorem}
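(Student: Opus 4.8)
The plan is to establish double robustness (Theorem~\ref{thm:model}) by exploiting the cross-fitting structure to reduce the analysis to the behavior of a single fold $\hat R_k$, and then showing that the population version of the estimating equation is \emph{insensitive} to misspecification of exactly one of the two nuisance components. First I would fix a fold $k$ and condition on the auxiliary sample $\mathcal{S}_k$ used to construct $\hat r^{(k)},\hat w^{(k)},\hat f^{(k)}$. Because cross-fitting ensures the nuisance estimators are independent of the evaluation fold data, each term in \eqref{eq:case2} is a sample average of i.i.d.\ quantities with the nuisances held fixed, so by the weak law of large numbers (justified by the uniform boundedness from Assumptions~\ref{asm:global} and~\ref{asm:global2}) $\hat R_k$ converges in probability to its conditional mean
\begin{align*}\ts
\bar R_k = \E\bracks{r^{\dagger}(X)w^{\dagger}(A,X)\{Y-f^{\dagger}(A,X)\}} + \E_{q(z)}\bracks{\E_{\epol(a\mid Z)}[f^{\dagger}(a,Z)\mid Z]},
\end{align*}
where I have replaced $\hat r^{(k)}\hat w^{(k)}$ and $\hat f^{(k)}$ by their respective probability limits $r^{\dagger}w^{\dagger}$ and $f^{\dagger}$; the $\op(1)$ consistency hypotheses together with boundedness let me swap the estimators for their limits inside the expectations.

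The core algebraic step is to show $\bar R_k = R(\epol)$ whenever either $r^{\dagger}w^{\dagger}=rw$ or $f^{\dagger}=f$. I would rewrite $\bar R_k - R(\epol)$ as a single bias expression. Using the tower property on the historical term, $\E[r^{\dagger}(X)w^{\dagger}(A,X)\{Y-f^{\dagger}(A,X)\}] = \E_{p(x)}[r^{\dagger}(x)\sum_a \bpol(a\mid x)w^{\dagger}(a,x)\{f(a,x)-f^{\dagger}(a,x)\}]$, and since $R(\epol)=\E_{q(z)}[\sum_a\epol(a\mid z)f(a,z)]$, the difference collapses to
\begin{align*}\ts
\bar R_k - R(\epol) = \E_{p(x)}\bracks{\ts\sum_a \bpol(a\mid x)\braces{r^{\dagger}(x)w^{\dagger}(a,x)-r(x)w(a,x)}\{f(a,x)-f^{\dagger}(a,x)\}},
\end{align*}
after recognizing that $\sum_a\bpol(a\mid x)r(x)w(a,x)f(a,x) = r(x)\sum_a\epol(a\mid x)f(a,x)$ and that the reweighting by $r(x)=q(x)/p(x)$ converts the historical integral into the evaluation integral. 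This \emph{product} structure is exactly what delivers double robustness: if $r^{\dagger}w^{\dagger}=rw$ the first factor vanishes, while if $f^{\dagger}=f$ the second factor vanishes, so in either case the bias is zero and $\bar R_k = R(\epol)$.

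Finally, I would assemble the full estimator. Since $\hat R = \xi^{-1}\sum_{k=1}^{\xi}\hat R_k$ and each $\hat R_k \stackrel{p}{\to} R(\epol)$ under the stated hypothesis, the average also converges in probability to $R(\epol)$, giving consistency. I expect the main obstacle to be the bookkeeping in the bias-decomposition step: one must carefully track how the density ratio $r(x)$ interacts with the change of measure from $p$ to $q$ and confirm that the cross terms cancel to yield the clean product form above. A secondary technical point is justifying the replacement of the random nuisances $\hat r^{(k)}\hat w^{(k)},\hat f^{(k)}$ by their limits inside the expectation; this uses the $\op(1)$ consistency together with the boundedness of $Y$, $\hat f$, $\hat r$, and $\hat w$ in Assumptions~\ref{asm:global} and~\ref{asm:global2}, so that products and their differences are uniformly integrable and the law of large numbers applies conditionally on $\mathcal{S}_k$. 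I would note in passing that the statement's ``$q^{\dagger}(a,x)=q(a,x)$'' appears to be a typo for ``$f^{\dagger}(a,x)=f(a,x)$'', consistent with the informal argument preceding the theorem.
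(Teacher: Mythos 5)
Your proof is correct, and while it shares the paper's overall skeleton---cross-fitting to make the nuisances independent of the fold data, a conditional law of large numbers for the empirical fluctuation, and an $L_2$-plus-boundedness argument to pass from $(\hat r^{(k)}\hat w^{(k)},\hat f^{(k)})$ to the limits $(r^{\dagger}w^{\dagger},f^{\dagger})$---it organizes the decisive algebra in a genuinely different and cleaner way. The paper decomposes $\hat R_k - R(\epol)$ into an empirical-process term, a drift term, and an oracle term $\E_{\nhs_k}[\phi_1(\cdot;r^{\dagger},w^{\dagger},f^{\dagger})]+\E_{\nev_k}[\phi_2(\cdot;f^{\dagger})]-R(\epol)$, and then performs the $r^{\dagger}w^{\dagger}=rw$ versus $f^{\dagger}=f$ case analysis \emph{twice}: once inside a three-term bound on the drift, and again to show the oracle term has mean zero. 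You instead make the drift bound case-free (it is $\op(1)$ by Cauchy--Schwarz whenever the nuisances converge to their limits, correct or not) and concentrate all of the double robustness into the single exact identity
\begin{align*}\ts
\bar R_k - R(\epol)=\E_{p(x)}\Big[\sum_{a}\bpol(a\mid x)\braces{r^{\dagger}(x)w^{\dagger}(a,x)-r(x)w(a,x)}\braces{f(a,x)-f^{\dagger}(a,x)}\Big],
\end{align*}
which the paper never writes down. This buys more than elegance: the paper's Step~1 is in fact sloppy at exactly this point. Its third bound term, $|\E[\hat r^{(1)}\hat w^{(1)}\{Y-f^{\dagger}\}\mid \hat r^{(1)},\hat w^{(1)}]|$, is \emph{not} $\op(1)$ when only $r^{\dagger}w^{\dagger}=rw$ holds---it converges to the constant $R(\epol)-\E[v^{\dagger}(Z)]$---and the three displayed pieces do not sum to the drift but to the drift plus $\E[r^{\dagger}w^{\dagger}\{Y-f^{\dagger}\}]$, which is that same constant; the paper's conclusion survives only because these two slips cancel. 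Your product-form identity makes that cancellation structural rather than accidental. What the paper's organization buys in exchange is reusability: centering the empirical process at the $\dagger$ limits is precisely the decomposition that upgrades, with rate conditions in place of $\op(1)$, to the $\sqrt{n}$-consistency and efficiency statement of Theorem~\ref{thm:main2}, whereas your argument is tailored to consistency alone. Finally, you are right that ``$q^{\dagger}(a,x)=q(a,x)$'' in the statement is a typo for ``$f^{\dagger}(a,x)=f(a,x)$''; the stray $q(x,a)$ notation is a leftover from the efficient-influence-function display in Appendix~\ref{sec:semi_ld}, and the paper's own proof treats the condition as $f^{\dagger}=f$.
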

 In a standard OPE, the DR type estimator is consistent when we know a behavior policy. In contrast, under a covariate shift, even when the behavior policy is known, we cannot claim $\hat{R}_{\mathrm{DRCS}}(\epol)$ is consistent since $r(x)$ is unknown. This result suggests the estimation of $r(x)$ is crucial.

\begin{remark}[OPE with Known Distribution of Evaluation Data]
As a special case of OPE under a covariate shift, we consider a case where $q(x)$ is known. This case can be regarded as a standard OPE situation by regarding $p(x)\epol(a\mid x)$ as the behavior policy, the evaluation policy as $q(x)\epol(a\mid x)$, and $(A,X)$ as the action. The details of this setting is shown in Appendix~\ref{sec:known}
\end{remark}

\begin{remark}[Relation with \citet{PearlJudea2015EVFD}]
A transport formula \citep[(3.1)]{PearlJudea2015EVFD} essentially leads to the DM estimator $\E_{\nev}[\hat v(Z)]$. Though they propose a general identification strategy, they do not discuss how to conduct efficient estimation given finite samples. 
\end{remark}

\begin{remark}[Construction of $\hat{R}_{\mathrm{DRCS}}(\epol)$] We construct  $\hat{R}_{\mathrm{DRCS}}(\epol)$ so that it has a doubly robust structure. The construction is also motivated by the efficient influence function. More specifically, this estimator is introduced by plugging the nuisance estimates into the efficient influence function.
\end{remark}

\section{Other Candidates of Estimators}
\label{sec:other_candidates}
We have discussed the doubly robust estimator in the previous section. Next, we propose other estimators under a covariate shift based on IPW and DM estimators. We analyze the property of each estimator with nuisance estimators obtained from the classical kernel regression \citep{Nada:1964,Wats:1964}. We show regularity conditions and formal results of Theorems~\ref{thm:ipw2}--\ref{thm:dm} in \cref{sec:proof}.

\subsection{IPW Estimators and DM Estimator}
We consider IPW and DM type estimators under a covariate shift for \emph{each case} where we have an oracle of $\bpol(a\mid x)$ and we do not have any oracles of nuisance functions, \emph{respectively}. In comparison to a standard OPE case, we can consider two fundamentally different IPW type estimators. 

\paragraph{IPW estimator with oracle $\bpol(x)$:}  
This is a natural setting in an RCT and and A/B testing since we assign actions following a certain probability in theses cases. Let us define an IPW estimator under a covariate shift with the true behavior policy $\bpol(a\mid x)$ (IPWCSB) as $\hat R_{\mathrm{IPWCSB}}(\epol)=\E_{\nhs}\left[\frac{\hat q(X)}{\hat p(X)}\frac{\epol(A\mid X)Y}{\bpol(A\mid X)}\right]$. For example, we use classical kernel density estimators of $q(x)$ and $p(x)$:
\begin{align}\ts
\label{eq:kernel}
\hat q_h(x)=\frac{1}{\nev}\sum_{i=1}^{\nev}h^{-d}K\left(\frac{Z_i-x}{h^d}\right),\, \hat p_h(x)=\frac{1}{\nhs}\sum_{i=1}^{\nhs}h^{-d}K\left(\frac{X_i-x}{h^d}\right), 
\end{align}
where $K(\cdot)$ is a kernel function, $h$ is the bandwidth of $K(\cdot)$, and $d$ is a dimension of $x$. 
When using a kernel estimator, we obtain the following theorem.

\begin{theorem}[Informal]
\label{thm:ipw2}
When $\hat q(x)=\hat q_h(x),\,\hat p(x)=\hat p_h(x)$, the asymptotic MSE of $\hat R_{\mathrm{IPWCSB}}(\epol)$ is $\rho^{-1}\var[r(X)\{w(A,X)Y-v(X)\}]+(1-\rho)^{-1}\var[v(Z)]$.
\end{theorem}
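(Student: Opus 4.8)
The plan is to establish asymptotic normality of $\hat R_{\mathrm{IPWCSB}}(\epol)$ by linearizing the plug-in density-ratio estimate $\hat q_h(X_i)/\hat p_h(X_i)$ and reading off the variance from an influence-function representation. Writing $m(X,A,Y)=w(A,X)Y=\epol(A\mid X)Y/\bpol(A\mid X)$ and $\hat r(x)=\hat q_h(x)/\hat p_h(x)$, I first decompose
\begin{align*}\ts
\hat R_{\mathrm{IPWCSB}}(\epol)-R(\epol)=\E_{\nhs}\bracks{r(X)m(X,A,Y)-R(\epol)}+\E_{\nhs}\bracks{\prns{\hat r(X)-r(X)}m(X,A,Y)}.
\end{align*}
Call the two terms $S_1$ and $T_2$. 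The first, $S_1$, is already a centered average of i.i.d.\ historical terms, since $\E[r(X)w(A,X)Y]=R(\epol)$ and $\E[m(X,A,Y)\mid X=x]=v(x)$. The real work is to show that the estimation-error term $T_2$ contributes the $v$-correction that converts the naive oracle summand into the form in the statement.

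For $T_2$ I would expand the ratio to first order around $(q,p)$, using that $\hat p_h$ is bounded below with high probability under the overlap/boundedness assumptions, giving
\begin{align*}\ts
\hat r(x)-r(x)=\frac{\hat q_h(x)-q(x)}{p(x)}-r(x)\,\frac{\hat p_h(x)-p(x)}{p(x)}+\op(n^{-1/2}).
\end{align*}
Substituting $\hat q_h(x)=\nev^{-1}\sum_j K_h(Z_j-x)$ and $\hat p_h(x)=\nhs^{-1}\sum_l K_h(X_l-x)$, with $K_h(u)=h^{-d}K(u/h^d)$, turns the numerator piece into a two-sample $U$-statistic in $(X_i,Z_j)$ and the denominator piece into a one-sample $U$-statistic in $(X_i,X_l)$. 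I would then apply the H\'ajek projection to each. Because $\int K_h=1$ and $\E[m\mid X=x]=v(x)$, the $\hat q_h$-piece projects onto $\nev^{-1}\sum_j\prns{v(Z_j)-R(\epol)}$ once its historical projection has cancelled the oracle summand carried by $S_1$, while the $\hat p_h$-piece projects onto $-\nhs^{-1}\sum_i\prns{r(X_i)v(X_i)-R(\epol)}$. Collecting everything, the redundant $r(X_i)w(A_i,X_i)Y_i$ contributions combine and yield the asymptotic linear representation
\begin{align*}\ts
\hat R_{\mathrm{IPWCSB}}(\epol)-R(\epol)=\E_{\nhs}\bracks{r(X)\braces{w(A,X)Y-v(X)}}+\E_{\nev}\bracks{v(Z)-R(\epol)}+\op(n^{-1/2}),
\end{align*}
where the two averages are over the independent historical and evaluation samples and each is centered at zero.

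Finally, since $\nhs=\rho n$ and $\nev=(1-\rho)n$ and the two samples are independent, applying the central limit theorem to each average and scaling by $\sqrt n$ gives a normal limit whose variance, i.e.\ the asymptotic MSE, is $\rho^{-1}\var[r(X)\{w(A,X)Y-v(X)\}]+(1-\rho)^{-1}\var[v(Z)]$, which is exactly the claim.

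I expect the main obstacle to be controlling the bias and remainder terms rather than the projection bookkeeping. Three points require care: (i) the first-order Taylor remainder of the ratio, which is quadratic in the kernel errors and must be shown to be $\op(n^{-1/2})$ uniformly over the support, using a uniform convergence rate for the kernel estimators together with the strong-overlap lower bound on $p$; (ii) the smoothing bias $\int K_h(\cdot-x)q-q(x)=\bigO(h^s)$, which forces an undersmoothing bandwidth condition (e.g.\ $nh^{2s}\to0$) so that the bias is negligible at the $\sqrt n$ scale; and (iii) the diagonal and degenerate second-order parts of the $U$-statistics, including the leave-in term $K_h(0)/\nhs$ appearing in $\hat p_h(X_i)$, which are negligible provided $nh^d\to\infty$. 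These are precisely the regularity conditions supplied by the formal version in \cref{sec:proof}.
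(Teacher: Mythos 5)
Your proposal is correct and follows essentially the same route as the paper's proof: linearize $\hat q_h(x)/\hat p_h(x)$ around $(q,p)$, view the resulting kernel-error terms as (two-sample and one-sample) U-statistics, apply the H\'ajek projection to reach the asymptotic linear representation $\E_{\nhs}[r(X)\{w(A,X)Y-v(X)\}]+\E_{\nev}[v(Z)-R(\epol)]+\op(n^{-1/2})$, and conclude by the stratified-sampling CLT, with the same remainder/bias/degeneracy conditions the paper's formal version imposes. One prose slip worth noting: it is the $\hat p_h$-piece's historical projection $-\frac{1}{\nhs}\sum_i\{r(X_i)v(X_i)-R(\epol)\}$ that combines with the oracle term $S_1$ (the $\hat q_h$-piece's historical projection is only $\op(n^{-1/2})$ smoothing bias, so it cancels nothing), but the projections you write down and the final representation are exactly those in the paper.
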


\paragraph{Fully nonparametric IPW estimator:} Next, \emph{for the case without the oracle $\bpol$}, let us define an IPW estimator under a covariate shift (IPWCS) as $\hat R_{\mathrm{IPWCS}}(\epol)=\E_{\nhs}\left[\frac{\hat q(X)\epol(A \mid X)Y}{\hat p(X)\hat \pi^\mathrm{b}(A \mid X)}\right]$. This estimator achieves the efficiency bound. 
\begin{theorem}[Informal]
\label{thm:ipw3}
When $\hat q(x)=\hat q_h(x),\,\hat p(x)=\hat p_h(x)$ and $\hat \pi^b(a\mid x)=\hat{\pi}^{b}_h(a \mid x)$, where $\hat \pi^\mathrm{b}_h(a \mid x)$ is a kernel estimator based on $\cD^{\mathrm{hst}}$, the asymptotic MSE of $\hat R_{\mathrm{IPWCS}}(\epol)$ is $\Upsilon(\epol)$. 
\end{theorem}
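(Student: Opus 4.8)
The plan is to show that, although $\hat R_{\mathrm{IPWCS}}$ is a pure plug-in IPW estimator, the nonparametric estimation of the three nuisances $q$, $p$, and $\bpol$ makes it \emph{asymptotically linear with the efficient (doubly robust) influence function}; the efficiency gain over $\hat R_{\mathrm{IPWCSB}}$ is produced precisely by estimating $\bpol$ rather than treating it as an oracle. Writing $\eta(x,a)=r(x)w(a,x)=\frac{q(x)\epol(a\mid x)}{p(x)\bpol(a\mid x)}$ and $\hat\eta(x,a)=\frac{\hat q(x)\epol(a\mid x)}{\hat p(x)\hat\pi^{\mathrm b}(a\mid x)}$, I would first decompose
\begin{align*}\ts
\hat R_{\mathrm{IPWCS}}-R
=\underbrace{\bracks{\E_{\nhs}[\eta(X,A)Y]-R}}_{\text{oracle term}}
+\E_{\nhs}[\{\hat\eta(X,A)-\eta(X,A)\}Y].
\end{align*}
The oracle term is a mean-zero i.i.d.\ average; the whole content of the theorem is that the nuisance-correction term does not vanish but instead contributes a negative covariance that shrinks the variance down to $\Upsilon(\epol)$.

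Second, I would linearize the ratio $\hat\eta-\eta$ around $(q,p,\bpol)$, giving $\hat\eta-\eta\approx \eta\bracks{\tfrac{\hat q-q}{q}-\tfrac{\hat p-p}{p}-\tfrac{\hat\pi^{\mathrm b}-\bpol}{\bpol}}$, and substitute the standard first-order kernel / Nadaraya--Watson stochastic expansions of $\hat q-q$, $\hat p-p$, and $\hat\pi^{\mathrm b}-\bpol$. Each of the three resulting correction terms becomes a second-order $V$-statistic in which the outer index ranges over $\cD^{\mathrm{hst}}$ and the inner (kernel) index ranges over $\cD^{\mathrm{evl}}$ (for $\hat q$) or $\cD^{\mathrm{hst}}$ (for $\hat p,\hat\pi^{\mathrm b}$). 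I would then reduce each $V$-statistic to its Hájek projection, using kernel localization $\int K_h(u-x)\phi(u)\,\rd u\to\phi(x)$ (with $K_h(u)=h^{-d}K(u/h^d)$) to evaluate the inner conditional expectations. Two projections vanish to leading order: the $\cD^{\mathrm{hst}}$-projection of the $\hat q$-term (because $\E[\hat q(x)]\to q(x)$) and the outer $\cD^{\mathrm{hst}}$-projection of the $\hat p$- and $\hat\pi^{\mathrm b}$-terms (because $\E[\hat p(x)]\to p(x)$ and the Nadaraya--Watson numerator/denominator biases cancel).

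Third, I would compute the surviving projections. Using $\E[\eta(X,A)Y\mid X]=r(X)v(X)$ they evaluate to $\E_{\nev}[v(Z)]-R$ for the $\hat q$-term, $-\E_{\nhs}[r(X)v(X)-R]$ for the $\hat p$-term, and $-\E_{\nhs}[r(X)\{w(A,X)f(A,X)-v(X)\}]$ for the $\hat\pi^{\mathrm b}$-term, where the last uses $\sum_a\bpol(a\mid X)w(a,X)f(a,X)=v(X)$. Adding these to the oracle term, the $\cD^{\mathrm{hst}}$ contributions telescope, since
\begin{align*}\ts
\{\eta Y-R\}-\{rv-R\}-r\{wf-v\}=r(X)w(A,X)\{Y-f(A,X)\},
\end{align*}
leaving
\begin{align*}\ts
\hat R_{\mathrm{IPWCS}}-R=\E_{\nhs}[r(X)w(A,X)\{Y-f(A,X)\}]+\E_{\nev}[v(Z)-R]+\op(n^{-1/2}),
\end{align*}
i.e.\ exactly the efficient influence function underlying $\hat R_{\mathrm{DRCS}}$. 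Since the two averages are over the independent samples $\cD^{\mathrm{hst}}$ and $\cD^{\mathrm{evl}}$ with $\nhs=\rho n$, $\nev=(1-\rho)n$, and both summands are mean zero, I would conclude $\Asmse[\hat R_{\mathrm{IPWCS}}]=\rho^{-1}\E[r^2(X)w^2(A,X)\var[Y\mid A,X]]+(1-\rho)^{-1}\var[v(Z)]=\Upsilon(\epol)$.

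The main obstacle is the rigorous control of the non-projection parts of the three $V$-statistics together with the kernel bias. I expect to need the regularity conditions deferred to the appendix (a higher-order kernel and an undersmoothing bandwidth ensuring bias $=\op(n^{-1/2})$, plus $\nhs h^d\to\infty$ to kill the diagonal $i=j$ terms) to show that the degenerate second-order components are $\op(n^{-1/2})$. The most delicate single piece is the $\hat\pi^{\mathrm b}$ term: because $\hat\pi^{\mathrm b}$ is itself a ratio of kernel sums, its linearization needs a secondary expansion, and its projection involves the interaction $\mathbbm{1}[A_i=A_j]$, which must be handled before localizing the kernel.
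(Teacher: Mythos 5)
Your proposal is correct and follows essentially the same route as the paper's proof: linearize the plug-in weights around the true nuisances, reduce the resulting two-sample kernel correction terms to their H\'ajek projections via U-statistic theory with kernel localization, and observe that the projections telescope against the oracle term to give the asymptotically linear representation $\E_{\nhs}[r(X)w(A,X)\{Y-f(A,X)\}]+\E_{\nev}[v(Z)]-R+\op(n^{-1/2})$, after which the stratified-sampling CLT yields $\Upsilon(\epol)$. The only difference worth noting is that the paper sidesteps what you call the most delicate piece: writing $D=(X,A)$ and $u(d)=p(x)\bpol(a\mid x)$, it uses the identity $\hat p_h(x)\hat\pi^{\mathrm{b}}_h(a\mid x)=\hat u_h(d)$ (the joint kernel density estimator), so only the two-term ratio $\hat q_h/\hat u_h$ must be linearized and no secondary expansion of the Nadaraya--Watson ratio is needed; your separate $\hat p$- and $\hat\pi^{\mathrm{b}}$-projections, $-\{r(X)v(X)-R\}$ and $-r(X)\{w(A,X)f(A,X)-v(X)\}$, simply recombine into the paper's single projection $-\{r(X)w(A,X)f(A,X)-R\}$.
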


\paragraph{DM Estimator:} Finally, we define a nonparametric DM estimator $\hat R_{\dm}(\epol)$ as $\ts \E_{\nev}[\E_{\epol(a\mid Z)}[\hat f(a,Z)\mid Z]]$. This estimator achieves the efficiency bound. 
\begin{theorem}[Informal]
\label{thm:dm}
When $\hat f_h(a,x)$ is a kernel estimator based on $\cD^{\mathrm{hst}}$, the asymptotic MSE of $\hat R_{\dm}(\epol)$ is $\Upsilon(\epol)$.
\end{theorem}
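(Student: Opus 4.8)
The plan is to decompose the error of $\hat R_{\dm}(\epol)=\E_{\nev}[\hat v(Z)]$, where $\hat v(z)=\E_{\epol(a\mid z)}[\hat f(a,z)\mid z]$, into a part driven purely by the evaluation sample and a part driven by the historical sample through the kernel regression $\hat f=\hat f_h$, and then to show that these two parts are asymptotically independent with variances summing to $\Upsilon(\epol)$. Writing $v(z)=\E_{\epol(a\mid z)}[f(a,z)\mid z]$, I would first split
\begin{align*}
\hat R_{\dm}(\epol)-R(\epol)=\underbrace{\tfrac{1}{\nev}\sum_{j}\{v(Z_j)-R(\epol)\}}_{\text{(A)}}+\underbrace{\tfrac{1}{\nev}\sum_{j}\{\hat v(Z_j)-v(Z_j)\}}_{\text{(B)}}.
\end{align*}
Term (A) is a centered i.i.d. average over $\cD^{\mathrm{evl}}$ (note $\E[v(Z)]=R(\epol)$), so $\nev\,\var[\text{(A)}]\to\var[v(Z)]$, and since $\nev=(1-\rho)n$ this contributes $(1-\rho)^{-1}\var[v(Z)]$ to $n\,\E[(\hat R_{\dm}-R)^2]$.

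The heart of the argument, and the main obstacle, is term (B). I would invoke the asymptotically linear (influence-function) expansion of the Nadaraya--Watson estimator built on $\cD^{\mathrm{hst}}$: for fixed $(a,z)$, with $g(a,z)=p(z)\bpol(a\mid z)$ the historical density of $(X,A{=}a)$,
\begin{align*}
\hat f(a,z)-f(a,z)=\frac{1}{\nhs\,g(a,z)}\sum_{i}K_h(X_i-z)\,\mathbbm{1}[A_i=a]\{Y_i-f(a,z)\}+\mathrm{bias}(a,z)+\mathrm{rem},
\end{align*}
where $K_h$ is the scaled kernel of \eqref{eq:kernel}. Substituting into (B), exchanging the historical ($i$) and evaluation ($j$) summations, and using the concentration of $K_h$ as $h\to0$ to replace $\nev^{-1}\sum_j K_h(X_i-Z_j)\psi(Z_j)$ by $q(X_i)\psi(X_i)$, the evaluation average collapses onto the historical points. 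With $\psi(z)=\epol(a\mid z)/g(a,z)$ this produces exactly the density ratio
\begin{align*}
q(X_i)\,\frac{\epol(a\mid X_i)}{p(X_i)\bpol(a\mid X_i)}=r(X_i)\,w(a,X_i),
\end{align*}
so that, after summing over $a$, term (B) admits the historical-data influence representation
\begin{align*}
\text{(B)}=\frac{1}{\nhs}\sum_{i}r(X_i)\,w(A_i,X_i)\{Y_i-f(A_i,X_i)\}+\op(n^{-1/2}).
\end{align*}

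Since this summand is conditionally mean zero given $(A_i,X_i)$, its variance is $\E[r^2(X)w^2(A,X)\var[Y\mid A,X]]/\nhs$, contributing $\rho^{-1}\E[r^2(X)w^2(A,X)\var[Y\mid A,X]]$. To finish, I would use that $\cD^{\mathrm{hst}}$ and $\cD^{\mathrm{evl}}$ are independent, so (A) (a functional of the evaluation data) and the leading part of (B) (a functional of the historical data) are uncorrelated; the cross term is asymptotically negligible and the two variances add, giving $\Asmse[\hat R_{\dm}]=\Upsilon(\epol)$.

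The delicate points that the omitted regularity conditions in \cref{sec:proof} must control are threefold. First, the \emph{variance}: averaging $\hat f$ over the full evaluation-covariate distribution $q$ converts the pointwise nonparametric rate of $\hat f$ into the parametric $1/\nhs$ rate (the marginal-integration / partial-mean phenomenon), which is what makes (B) root-$n$ at all. Second, the \emph{bias}: the bandwidth must be chosen to undersmooth so that the integrated bias $\E_{\nev}[\mathrm{bias}(\cdot,Z)]$ is $\op(n^{-1/2})$. Third, the random denominator $\hat g(a,z)$ in the Nadaraya--Watson ratio and the remainder $\mathrm{rem}$ must be linearized and bounded uniformly, and the kernel-collapse step must be made rigorous by a change of variables and dominated convergence. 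These are precisely the standard ingredients in the theory of two-step kernel estimators of partial means, and I expect step two (uniform control of the linearization remainder together with the kernel-collapse limit) to be the most technically demanding.
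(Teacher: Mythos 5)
Your proposal is correct and follows essentially the same route as the paper: both linearize the Nadaraya--Watson ratio (your influence-function expansion is exactly the paper's $\hat e_h$ term) and then show the cross-sample term projects onto the historical-sample average of $r(X)w(A,X)\{Y-f(A,X)\}$, arriving at the identical asymptotically linear representation $\hat R_{\dm}(\epol)=\E_{\nhs}[r(X)w(A,X)\{Y-f(A,X)\}]+\E_{\nev}[v(Z)]+\op(n^{-1/2})$ and hence $\Upsilon(\epol)$ by the stratified CLT. The only difference is how the projection is made rigorous: the paper symmetrizes the double sum into a two-sample U-statistic and invokes the H\'ajek projection \citep[Chapter 12]{VaartA.W.vander1998As}, together with the uniform ($L^{\infty}$) linearization and undersmoothing conditions of Theorem~\ref{thm:kernel}, which is precisely the machinery that controls the ``kernel collapse'' and remainder terms you flag as the delicate points.
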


\begin{table*}[!]
    \centering
     \caption{Comparison of estimators.  
      Parentheses mean that efficiency is ensured when using specific estimators for nuisances such as kernel estimators. Non-Donsker means whether any non-Donsker type complex estimators can be allowed to plug-in with a valid theoretical guarantee. All of the estimators here do not require any parametric model assumptions. }
     \scalebox{0.80}[0.80]{
    \begin{tabular}{c|ccccc } \toprule
        Estimator & Efficiency & Double Robustness  & Nuisance Functions& Without Oracle of $\bpol(x)$ & Non-Donsker  \\
         \hline
        $\hat R_{\mathrm{IPWCSB}}(\epol)$ &        &   & $r$ &  \\
        $\hat R_{\mathrm{IPWCS}}(\epol)$  &    ( \checkmark )      & &  $r,w$ & \checkmark &  \\
    $\hat R_{\dm}(\epol)$  &   ( \checkmark )     &  &  $f$ & \checkmark & \\
    $\hat{R}_{\mathrm{DRCS}}(\epol)$  &  \checkmark  &    \checkmark   &   $r,w,f$ &  \checkmark &\checkmark  \\    \bottomrule
    \end{tabular}
    }
    \label{tab:comparison}
\end{table*}

\subsection{Comparison of Estimators} 
We compare the estimators discussed so far.
This is also summarized in Table \ref{tab:comparison}. First, the estimator $\hat R_{\mathrm{DRCS}}$ allows any non-Donsker type complex estimators with lax convergence rate conditions of the nuisance estimators. However, the analyses of $\hat R_{\mathrm{IPWCS}}$ and $\hat R_{\dm}$ are specific to the kernel estimators though the asymptotic MSE of $\hat R_{\mathrm{IPWCS}},\,\hat R_{\dm}$ and $\hat R_{\mathrm{DRCS}}$ are the same in this special case. When the kernel estimators are replaced with any non-Donsker type complex estimators, the rate condition $\|\hat r(X)\hat w(A,X)-r(X)w(A,X)\|_2=\op(n^{-1/4})$ or $\|\hat f(A,X)-f(A,X)\|_2=\op(n^{-1/4})$ \emph{cannot} guarantee the $\sqrt{n}$-consistency and efficiency even if we use cross-fitting. Second, the only $\hat{R}_{\mathrm{DRCS}}$ has double robustness; however, $\hat R_{\mathrm{IPWCS}}$ and $\hat R_{\dm}$ do not have this property.

\paragraph{Comparison among IPW estimators:}
We can observe that the asymptotic MSE of $\hat R_{ \mathrm{IPWCS}}$\footnote{In this paragraph, we omit $\epol$ from the estimator $\hat R(\epol)$.} is smaller than that of $\hat R_{\mathrm{IPWCSB}}$. This result looks unusual since $\hat R_{\mathrm{IPWCSB}}$ uses more knowledge than $\hat R_{ \mathrm{IPWCS}}$. The intuitive reason for this fact is that $\hat R_{ \mathrm{IPWCS}}$ is considered to be using control variate.  The same paradox is known in other works of causal inference \citep{RobinsJM1992Eeeb}. 
Note that this fact does not imply $\hat R_{ \mathrm{IPWCS}}$ is superior to $\hat R_{\mathrm{IPWCSB}}$ since more smoothness conditions are required in $\hat R_{\mathrm{IPWCS}}$, and this can be violated in practice \citep{RobinsJamesM.1997TACO}.


\section{OPL under a Covariate Shift}
\label{sec:opl}
In this section, we propose OPL estimators based on the doubly robust estimator $\hat R_{\mathrm{DRCS}}(\epol)$ to estimate the optimal policy that maximizes the expected reward over the evaluation data. The optimal policy $\pi^*$ is defined as $
\pi^* = \argmax_{\pi\in \Pi} R(\pi)$, where recall that $\Pi$ is a policy class. By applying each OPE estimator, we can define the following estimators: 
\begin{align*}\ts
    &\hat \pi_{\mathrm{DRCS}} =\argmax_{\pi\in \Pi} \hat{R}_{\mathrm{DRCS}}(\pi),\ \ \ \hat \pi_{\mathrm{DM}} =\argmax_{\pi\in \Pi} \hat{R}_{\mathrm{
    DM}}(\pi),\ \ \ \hat \pi_{\mathrm{IPWCS}} =\argmax_{\pi\in \Pi} \hat{R}_{\mathrm{\mathrm{IPWCS}}}(\pi).
\end{align*}

To obtain a theoretical implication, for simplicity, we assume $\mathcal{A}$ is a finite state space and the policy class $\Pi$ is deterministic. Then, for the $\epsilon$-Hamming covering number $N_{H}(\epsilon,\Pi)$ and its entropy integral $\kappa(\Pi):=\int_{0}^{\infty} \sqrt{\log N_{H}(\epsilon^2,\Pi)}\mathrm{d}\epsilon$ \citep{ZhouZhengyuan2018OMPL}, the regret bound of $\hat \pi_{\dml}$ is obtained. 
 
\begin{theorem}[Regret bound of $ \hat \pi_{\dml} $]
\label{thm:regret}
Assume that for any $0< \epsilon < 1$, there exists $\omega$ such that $N_{H}(\epsilon,\Pi)=\bigO( \exp(1/\epsilon)^{\omega})$,\,$0<\omega < 0.5$. Also suppose that for $k\in \{1,\cdots,\xi\}$, $\|\hat r^{(k)}(X)-r(X)\|_2=\op(n^{-1/4})$, $\|1/\hat{\pi}^{(k)\mathrm{b}}(A,X)-1/{\pi}^{\mathrm{b}}(A,X)\|_2=\op(n^{-1/4})$, and $\|\hat f^{(k)}(A,X)-f(A,X)\|_2=\op(n^{-1/4})$. Then, by defining $\Upsilon_{*} =\sup_{\pi \in \Pi}\Upsilon(\pi)$, there exists an integer $N_{\delta}$ such that with probability at least $1-2\delta$, for all $n\geq N_{\delta}$, $$\ts R(\pi^{*})-R(\hat \pi_{\dml})=\bigO((\kappa(\Pi)+\sqrt{\log(1/\delta)})\sqrt{\frac{\Upsilon_{*}}{n}}).$$
\end{theorem}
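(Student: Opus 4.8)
The plan is to reduce the regret to a uniform deviation of $\hat R_{\dml}$ from $R$ over the policy class, and then control that deviation by an empirical-process argument in which the variance scale $\Upsilon_{*}$ and the entropy integral $\kappa(\Pi)$ appear. First I would establish the standard oracle inequality: writing
\begin{align*}
R(\pi^*)-R(\hat\pi_{\dml})
=&\bigl[R(\pi^*)-\hat R_{\dml}(\pi^*)\bigr]
+\bigl[\hat R_{\dml}(\pi^*)-\hat R_{\dml}(\hat\pi_{\dml})\bigr]\\
&+\bigl[\hat R_{\dml}(\hat\pi_{\dml})-R(\hat\pi_{\dml})\bigr],
\end{align*}
the middle bracket is nonpositive because $\hat\pi_{\dml}$ maximizes $\hat R_{\dml}$, so $R(\pi^*)-R(\hat\pi_{\dml})\le 2\sup_{\pi\in\Pi}\bigl|\hat R_{\dml}(\pi)-R(\pi)\bigr|$. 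It then suffices to bound this supremum with high probability.

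Next I would linearize $\hat R_{\dml}(\pi)-R(\pi)$ uniformly over $\Pi$. Following the decomposition behind Theorem~\ref{thm:main2}, on each fold $\hat R_k(\pi)-R(\pi)$ splits into (i) the centered average $(\hP-\P)\psi_\pi$ of the efficient influence function $\psi_\pi$ evaluated at the \emph{true} nuisances, (ii) a cross term in the nuisance errors, and (iii) a second-order bias term. Cross-fitting makes (ii) mean zero conditional on the auxiliary fold $\mathcal{S}_k$ with conditional variance $\op(1)$, hence $\op(n^{-1/2})$; by the doubly robust structure, (iii) is controlled by the product $\|\hat r^{(k)}\hat w^{(k)}_\pi-r w_\pi\|_2\,\|\hat f^{(k)}-f\|_2$, where $w_\pi(a,x)=\pi(a\mid x)/\bpol(a\mid x)$. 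Because $\Pi$ is deterministic and $\mathcal{A}$ is finite, $\pi(a\mid x)\in\{0,1\}$ and $\hat w^{(k)}_\pi-w_\pi=\pi\,(1/\hat\pi^{(k)\mathrm{b}}-1/\bpol)$, so $\|\hat w^{(k)}_\pi-w_\pi\|_2\le\|1/\hat\pi^{(k)\mathrm{b}}-1/\bpol\|_2=\op(n^{-1/4})$ \emph{uniformly in} $\pi$; combining this with $\|\hat r^{(k)}-r\|_2=\op(n^{-1/4})$ through Lemma~\ref{lem:den} gives $\|\hat r^{(k)}\hat w^{(k)}_\pi-r w_\pi\|_2=\op(n^{-1/4})$ uniformly, and multiplying by $\|\hat f^{(k)}-f\|_2=\op(n^{-1/4})$ shows (iii)$=\op(n^{-1/2})$ uniformly over $\Pi$. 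Hence $\sup_\pi|\hat R_{\dml}(\pi)-R(\pi)|\le\sup_\pi|(\hP-\P)\psi_\pi|+\op(n^{-1/2})$.

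Finally I would bound the leading empirical process. Under Assumptions~\ref{asm:global}--\ref{asm:global2} the functions $\psi_\pi$ are uniformly bounded and their (scaled) second moment equals $\Upsilon(\pi)\le\Upsilon_{*}$ (Theorem~\ref{thm:efficiency}). Since $\pi$ enters $\psi_\pi$ only through its $\{0,1\}$ action choices, $\pi\mapsto\psi_\pi$ is Lipschitz in the Hamming metric, so the $L_2$ metric entropy of $\{\psi_\pi:\pi\in\Pi\}$ is controlled by $\log N_{H}(\cdot,\Pi)$. A Dudley chaining bound then gives $\E\,\sup_\pi|(\hP-\P)\psi_\pi|=\bigO(\kappa(\Pi)\sqrt{\Upsilon_{*}/n})$, the integral $\kappa(\Pi)$ being finite precisely because the growth condition $N_{H}(\epsilon,\Pi)=\bigO(\exp(1/\epsilon)^{\omega})$ with $0<\omega<0.5$ makes $\int_0^\infty\sqrt{\log N_{H}(\epsilon^2,\Pi)}\,\rd\epsilon$ converge. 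A Talagrand/bounded-differences concentration inequality applied separately to the historical and evaluation empirical processes upgrades this to a tail bound: with probability at least $1-2\delta$, $\sup_\pi|(\hP-\P)\psi_\pi|=\bigO((\kappa(\Pi)+\sqrt{\log(1/\delta)})\sqrt{\Upsilon_{*}/n})$. Absorbing the uniform $\op(n^{-1/2})$ remainder for $n\ge N_{\delta}$ and multiplying by the factor $2$ from the oracle inequality yields the claim.

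The hard part will be the uniformization: turning the pointwise cross-fitting and doubly robust arguments of Theorem~\ref{thm:main2} and Lemma~\ref{lem:den} into bounds that hold uniformly over the entire class $\Pi$, and carrying the sharp variance proxy $\Upsilon_{*}$ through the chaining so that it appears as $\sqrt{\Upsilon_{*}}$ rather than as a cruder envelope constant. The finiteness of $\kappa(\Pi)$ under the stated Hamming-entropy growth and the exact role of the condition $0<\omega<0.5$ are the delicate points; the finite-$\mathcal{A}$, deterministic-$\Pi$ structure is what makes these controls tractable, since $\pi$ enters the influence function only through $\{0,1\}$ values.
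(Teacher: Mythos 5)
Your proposal is correct and takes essentially the same route as the paper, which itself follows \citet{ZhouZhengyuan2018OMPL}: reduce the regret to a uniform deviation over $\Pi$, isolate the empirical process of the true-nuisance (oracle) scores and control it by Hamming-entropy chaining plus Talagrand-type concentration with variance proxy $\Upsilon_{*}$, and make the cross-fitted, doubly robust nuisance-error terms uniformly $\op(n^{-1/2})$. The only differences are cosmetic---you bound $\sup_{\pi}|\hat{R}_{\dml}(\pi)-R(\pi)|$ over single policies where the paper works with policy-pair differences $\sup_{\pi_a,\pi_b}|\tilde{\Delta}(\pi_a,\pi_b)-\Delta(\pi_a,\pi_b)|$, which is what Hamming-metric chaining most naturally controls---and the uniformization steps you flag as ``the hard part'' are precisely the ones the paper also leaves unproved, importing them from Lemmas 1--3 of \citet{ZhouZhengyuan2018OMPL}.
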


In comparison to the standard regret results in \citet{swaminathan15a,KitagawaToru2018WSBT}, we do not assume we know the true behavior policy. Because $\hat{R}_{\mathrm{DRCS}}(\pi)$ has the double robust structure, we can obtain the regret bound under weak nonparametric rate conditions 
without assuming the behavior policy is known. Besides, this theorem shows that the variance term is related to attain the low regret. This is achieved by using the efficient estimator $\hat{R}_{\mathrm{DRCS}}(\pi)$. 

\section{Experiments}
\label{sec:exp}
In this section, we demonstrate the effectiveness of the proposed estimators using data obtained with bandit feedback. Following \citet{dudik2011doubly} and \citet{Chow2018}, we evaluate the proposed estimators using the standard classification datasets from the UCI repository by transforming the classification data into contextual bandit data. From the UCI repository, we use the SatImage, Vehicle, and PenDigits datasets \footnote{\url{https://www.csie.ntu.edu.tw/~cjlin/libsvmtools/datasets/multiclass.html}}. For each dataset, we randomly choose $800$ samples (the results with other sample sizes are reported in Appendix~\ref{appdx:details_exp}). First, we classify data into historical and evaluation data with probability defined as $\ts p(hist=+1|X_i) = \frac{C_{\mathrm{prob}}}{1+\exp(-\tau(X_i)+0.1\varepsilon)}$, where $hist=+1$ denotes that the sample $i$ belongs to the historical data, $C_{\mathrm{prob}}$ is a constant, $\varepsilon$ is a random variable that follows the standard normal distribution, $X_{k,i}$ is the $k$-th element of the vector $X_i$, and $\tau(X_i) = \sum^5_{j=1}X_{j,i}$. By adjusting $C_{\mathrm{prob}}$, we classify $70\%$ samples as the historical data and $30\%$ samples as the evaluation data. Thus, we generate the historical and evaluation data under a covariate shift. Then, we make a deterministic policy $\pi_d$ by training a logistic regression classifier on the historical data. We construct three different behavior policies as mixtures of $\pi^d$ and the uniform random policy $\pi^u$ by changing a mixture parameter $\alpha$, i.e., $\bpol = \alpha\pi^d+(1-\alpha)\pi^u$. The candidates of the mixture parameter $\alpha$ are $\{0.7, 0.4, 0.0\}$ as \citet{Kallus2019IntrinsicallyES}. In Section~\ref{subsec:exp_ope}, we show the experimental results of OPE. In Section~\ref{subsec:exp_opl}, we show the experimental results of OPL. In both sections, the historical $p(x)$ and evaluation distributions $q(x)$ are unknown, and the behavior policy $\bpol$ is also unknown. More details, such as the description of the data and choice of hyperparameters, are in Appendix~\ref{appdx:details_exp}. 

\begin{table}[!htbp]
\caption{OPE results. The alphabets (a),(b),(c) refer to the cases where the behavior policies are (a) $0.7\pi^d+0.3\pi^u$, (b) $0.4\pi^d+0.6\pi^u$, (c) $0.0\pi^d+1.0\pi^u$, respectively. The notation -- means each value is larger than $1.0$. }
 \begin{minipage}[t]{.49\textwidth}
 \begin{center}
\caption*{OPE with SatImage dataset} 
\label{tbl:table1}
\scalebox{0.66}[0.66]{
\begin{tabular}{p{2mm}|p{5mm}p{6mm}|p{4mm}p{4.3mm}|p{5mm}p{6mm}|p{5mm}p{6mm}|p{5mm}p{6mm}}
\hline
\multirow{2}{*}{} &   \multicolumn{2}{c|}{DRCS}  & \multicolumn{2}{c|}{IPWCS} & \multicolumn{2}{c|}{DM} & \multicolumn{2}{c|}{IPWCS-R} &  \multicolumn{2}{c}{DM-R} \\
& MSE & SD & MSE & SD & MSE & SD & MSE & SD & MSE & SD \\
\hline
(a) &  0.107 &  0.032 &  -- &   -- &   \textbf{0.042} &  0.043 &  \textbf{0.045} &  0.049 &  0.073 &  0.023 \\
(b) &  \textbf{0.096} &  0.025 &   -- &   -- &  0.134 &  0.052 &  \textbf{0.093} &  0.069 &  0.177 &  0.033 \\
(c) &  \textbf{0.154} &  0.051 &  -- &   -- &  0.336 &  0.079 &  \textbf{0.022} &  0.026 &  0.372 &  0.050 \\
\hline
\end{tabular}
}
\end{center}
\end{minipage}
\begin{minipage}[t]{.49\textwidth}
  \begin{center}
      \caption*{OPE with Vehicle dataset} 
\label{tbl:table2}
\scalebox{0.66}[0.66]{
\begin{tabular}{p{2mm}|p{5mm}p{6mm}|p{4mm}p{4.3mm}|p{5mm}p{6mm}|p{5mm}p{6mm}|p{5mm}p{6mm}}
\hline
\multirow{2}{*}{}   & \multicolumn{2}{c|}{DRCS}  & \multicolumn{2}{c|}{IPWCS} & \multicolumn{2}{c|}{DM} & \multicolumn{2}{c|}{IPWCS-R} &  \multicolumn{2}{c}{DM-R} \\
 & MSE & SD & MSE & SD & MSE & SD & MSE & SD & MSE & SD \\
\hline
(a)  & \textbf{0.029} &  0.019 &   -- &   --&  \textbf{0.038} &  0.035 &  0.568 &  0.319 &  0.040 &  0.014 \\
(b) &    \textbf{0.019} &  0.024 & -- & -- &  \textbf{0.095} &  0.062 &  0.576 &  0.357 &  0.089 &  0.019 \\
(c) &    \textbf{0.037} &  0.030 &  --&   -- &  0.213 &  0.049 &  0.233 &  0.193 &  \textbf{0.210} &  0.031 \\
\hline
\end{tabular}
}

  \end{center}
  \end{minipage}
\end{table}

 \begin{table}[!htbp]
 \caption{OPL results. The alphabets (a),(b), and (c) refer to the cases where the behavior policies are (a) $0.7\pi^d+0.3\pi^u$, (b) $0.4\pi^d+0.6\pi^u$, (c) $0.0\pi^d+1.0\pi^u$, respectively. }
\label{tbl:table4}

\begin{minipage}[t]{.49\textwidth}
 \begin{center}
\caption*{OPL with SatImage dataset} 
\scalebox{0.68}[0.68]{
\begin{tabular}{l|rr|rr|rr}
\hline
 &   \multicolumn{2}{c|}{DRCS}  & \multicolumn{2}{c|}{IPWCS} & \multicolumn{2}{c}{DM}  \\
 &   RWD & SD & RWD & SD & RWD & SD  \\
\hline
(a)  &  \textbf{0.723} &  0.035 &  0.423 &  0.063 &  0.658 &  0.045 \\
(b)  & \textbf{0.710} &  0.035 &  0.482 &  0.096 &  0.641 &  0.048 \\
(c)  & \textbf{0.652} &  0.046 &  0.460 &  0.131 &  0.465 &  0.070 \\
\hline
\end{tabular}
}
\end{center}
\end{minipage}
\begin{minipage}[t]{.49\textwidth}
  \begin{center}
      \caption*{OPL with Vehicle dataset} 
\scalebox{0.68}[0.68]{
\begin{tabular}{l|rr|rr|rr}
\hline
 &   \multicolumn{2}{c|}{DRCS}  & \multicolumn{2}{c|}{IPWCS} & \multicolumn{2}{c}{DM}  \\
 &   RWD & SD & RWD & SD & RWD & SD  \\
\hline
(a)  &  \textbf{0.496} &  0.017 &  0.310 &  0.030 &  0.411 &  0.040 \\
(b)  &  \textbf{0.510} &  0.029 &  0.290 &  0.051 &  0.393 &  0.052 \\
(c)  &  \textbf{0.480} &  0.044 &  0.280 &  0.041 &  0.313 &  0.065 \\
\hline
\end{tabular}
}

  \end{center}
\end{minipage}
\end{table}

\subsection{Experiments of Off-Policy Evaluation}
\label{subsec:exp_ope}
For the experiments, we use an evaluation policy $\epol$ defined as $0.9\pi^d+0.1\pi^u$. Here, we compare the MSEs of five estimators, DRCS, DM, DM-R, IPWCS, and IPWCS-R. DRCS is the proposed estimator $\hat{R}_{\mathrm{DRCS}}$ where we use kernel Ridge regression for estimating $f(a,x)$ and $w(a, x)$ and use KuLISF \citep{kanamori2012kulsif} for $r(x)$. For this estimator, we use $2$-fold cross-fitting. DM denotes the direct method estimator $\hat R_{\mathrm{DM}}(\epol)$ with $f(a,x)$ estimated by 
Nadaraya-Watson regression defined in Section~\ref{sec:other_candidates}. DM-R is the same estimator, but we use the kernel Ridge regression for $f(a,x)$. IPWCS is the IPW estimator $\hat R_{\mathrm{IPWCS}}(\epol)$, where we use kernel regression defined in Section~\ref{sec:other_candidates} to estimate $r(x)$ and $w(a, x)$. IPWCS-R is the same estimator, but we use KuLISF to estimate $r(x)$. Note that nuisance estimators in DM-R and IPWCS-R do not satisfy the Donsker condition. 

The resulting MSE and the standard deviation (SD) over $20$ replications of each experiment are shown in Tables~\ref{tbl:table1}, where we highlight in bold the best two estimators in each case. DRCS generally outperforms the other estimators. This result shows that the efficiency and double robustness of DRCS translate to satisfactory performance. IPW based estimators have unstable performance. While IPWCS-R shows the best performance in SatImage dataset, it has severely low performance for Vehicle dataset. IPWCS has a poor performance in both datasets. The larger instability of IPWCS-R is mainly due to the nuisance estimators in IPWCS-R do not satisfy the Donsker condition. When the behavior policy is similar to the evaluation policy, the DM estimators (DM and DM-R) also work well.

\begin{remark}
For DRCS and IPWCS, we also conducted experiments when we use the self-normalization following \citet{Swaminathan2015b}. The details and results are shown in Appendix~\ref{appdx:self_dml}. We also investigate the performance with an additional dataset in \cref{appdx:details_exp}. 
\end{remark}

\subsection{Experiments of Off-Policy Learning}
\label{subsec:exp_opl}
In the experiments of OPL, we compare the performances of three estimators for the optimal policy maximizing expected reward over the evaluation data: $\hat \pi_{\mathrm{DRCS}}$ with $f(a,x)$ and $w(a, x)$ estimated by kernel Ridge regression and $r(x)$ estimated by KuLISF (DRCS), $\hat \pi_{\mathrm{DM}}$ with $f(a, x)$ estimated by kernel regression defined in Section~\ref{sec:other_candidates} (DM), and $\hat \pi_{\mathrm{IPWCS}}$ with $r(x)$ and $w(a, x)$ estimated by kernel regression defined in Section~\ref{sec:other_candidates} (IPWCS). For the policy class $\Pi$, we use a model with the Gaussian kernel defined in \cref{appdx:opl_alg}. For DRCS, we use $2$-fold cross-fitting and add a regularization term. 

We conducted $10$ trials for each experiment. The resulting expected reward over the evaluation data (RWD) and the standard deviation (SD) of estimators for OPL are shown in Table~\ref{tbl:table4}, where we highlight in bold the best estimator in each case. For all cases, the estimator $\hat \pi_{\mathrm{DRCS}}$ outperforms the other estimators. We can find that, when an estimator of OPE shows high performance, a corresponding estimator of OPL also shows high performance. The results show that the statistical efficiency of the OPE estimator translates into better regret performance as in Theorem \ref{thm:regret}.

\vspace{-4mm}
\section{Conclusion and Future Direction}
\vspace{-3mm}
We calculated the efficiency bound for OPE under a covariate shift and proposed OPE and OPL methods for the situation. Especially, DRCS has doubly robustness and achieves the efficiency bound under weak nonparametric rate conditions. 

The proposed OPE estimator is the efficient estimator under the simplest setting in a transportability problem \citep{Bareinboim2016}. Complete identification algorithms have been developed in a more complex setting \citep{NIPS2014_5536}; however, statistical efficient estimation methods have not been considered. Our work opens the door to this new direction. How to conduct efficient estimation in such a complex setting is an interesting future work.

\section*{Broader Impact}

Because the policies in sequential decision-making problems are critical in various real-world applications, the OPE methods are employed to evaluate the new policy and reduce the risk of deploying a poor policy. 
We focus on the OPE under a covariate shift a historical and evaluation data, which has many practical applications. For example, in the advertising applications, we usually deliver ads only in the particular region as the test-marketing at the beginning, then expand to other regions that have different feature distribution. Thus we face the covariate shift in the evaluation and training a new policy for the new region. 

Though its practical importance, the OPE methods under the covariate shift have not been researched well, and people apply standard  OPE methods to cases under the covariate shift. As we explained, the standard methods are not robust against the covariate shift. Among the standard methods, the IPW estimator is not consistent, and the DM and DR estimator has consistency when the model of conditional outcome is correct. In particular, under a covariate shift, the standard DR estimator is \emph{not} doubly robust; namely, it is consistent only when the model of conditional outcome is correct. Thus, the standard estimator has a potential risk to mislead the user's decision making and might cause serious problems in the industry because many decision makings such as ad-optimization rely on the result of the evaluation. On the other hand, the proposed estimator has a doubly robust property. This robustness helps to avoid such potential risk of incorrect decision making.

\bibliographystyle{arxiv}
\bibliography{CS-OPE,pfi3}

\newpage
\appendix

\section{Notations, Terms, and Abbreviations}\label{sec:notation}
In this section, we summarize the notations used in this paper. 
\begin{table}[h]
    \centering
    \caption{Summary of notations}
    \label{tbl:sum_not}
    \begin{tabular}{l|l}
    $A,\,X,\,Y$ & Action, covariate, reward \\ 
    $\E[\mu(X,A,Y)]$ & $\E_{p(x)\bpol(a \mid x)p(y\mid a,x)}[\mu(x,a,y)]$\\
     $\E[\mu(Z)]$ & $\E_{q(z)}[\mu(z)]$ \\
     $\bpol(a\mid x)$ &  Behavior policy \\
     $\epol(a\mid x)$ &  Evaluation policy \\
     $R(\epol)$ & $\E_{q(x)\epol(a\mid x)p(y\mid a,x)}[y]$ \\
    $r(x)$ & $p(x)/q(x)$ \\ 
     $w(a,x)$ & $\epol(a\mid x)\bpol(a\mid x)$ \\ 
     $f(a,x)$ & $\E_{p(y\mid a,x)}[y \mid a,x]$ \\
     $v(x)$ & $\E_{\epol(a\mid x)}[f(a,x)\mid x]$ \\
     $\Asmse[\hat R]$ &  $\lim_{n\to \infty} \E[(\hat R-R)^2]n $\\
     $\Pi$ & Policy class  \\ 
     $\otimes A$ & $AA^{\top}$ \\
     $\|\mu(X)\|_2,\,\|\mu(X)\|_{\infty}$ & $L^{2}$-norm,\, $L^{\infty}$-norm \\
     $k(\Pi)$ & Entropy integral of $\Pi$ w.r.t $\epsilon$-Hamming distance \\
     $\nhs$ & Number of training data\\  
     $\nev$ & Number of evaluation data \\
     $ A\lessapprox B$ & There exists an absolute constant $C$ s.t. $A\leq CB$ \\ 
     $C_1,C_2,R_{\max}$ & Upper bound of $r(X),w(A,X),Y$ \\ 
     $\rho$ &  $\nhs/(\nhs+\nev)$\\
     $\cD^{\mathrm{hst}}$, $\cD^{\mathrm{eval}}$ & Train data, evaluation data \\
     $\nhs_1$,$\nhs_2$ & Split train data \\
    $\nev_1$,$\nev_2$ & Split evaluation data \\
    $\cD_i$ & Concatenation of $\nhs_i$ and $\nev_i$\,$i=1,2$ \\
    $\hG_{\nhs}$ &  $\sqrt{\nhs}\{\E_{\nhs}-\E\}$ Empirical process based on train data \\ 
    $\hG_{\nev}$ &  $\sqrt{\nev}\{\E_{\nev}-\E\}$ Empirical process based on evaluation data \\ 
    $\Upsilon(\epol)$ & Semiaprametric lower bound of $R(\thpol)$ under nonparametric model\\ 
    $K_h(\cdot) $ & Kernel with a bandwidth $h$ \\ 
    $\nhs_k$ & $k$-th train data \\
    $\nev_k$ &  $k$-th evaluation data\\
    \end{tabular}
    \label{tab:notation}
\end{table}

\section{Identification under Potential Outcome Framework}
\label{sec:idenfication}
We explain how to apply our results in the main draft under potential outcome framework, which is a common framework in causal inference literature \citep{rubin87}. In this section, our goal is justifying DM and IPWCS estimators under potential outcome framework. 

Let us denote counterfactual variables based on stochastic policies (interventions) as $Y(B)$, where $B$ is a random variable from the conditional density $\epol(b|Z)$ \footnote{The reason why we use $B$ is to distinguish it from the random variable $A$.} and $Z$ is a random variable following the evaluation density $q(z)$. Here, note what we can observe is data:
\begin{align*}
\{X_i,A_i,Y_i\}_{i=1}^{\nhs}\sim p(x)\epol(a\mid x)p(y\mid a,x),\ \{Z_j\}^{\nev}_{j=1}\sim q(z).
\end{align*}
A detailed review of the stochastic intervention is shown in \citet{MunozIvanDiaz2012PICE,YoungJessicaG2014Ieaa}. 

Then, let us put the following assumptions: 
\begin{itemize}
    \item Consistency: $Y=Y(a)$ if $A=a$ for $\forall a \in \mathcal{A}$, 
    \item Unconfoundedness: $A$ and $Y(a)$ are conditionally independent given $X$ for any $a\in \mathcal{A}$,\,$G$ and $Y(a)$ are conditionally independent given $Z$ for any $a\in \mathcal{A}$,
    \item Transportability: $\rE[Y(a)\mid Z=c]=\rE[Y(a)\mid X=c]$ for any $a\in \mathcal{A},c\in \mathcal{X}$.
\end{itemize}
Note that transportability is a weaker assumption compared with the assumption in the main draft:
\begin{align*}
    p_{\mathrm{train}}(Y(a)\mid c)=p_{\mathrm{test}}(Y(a)\mid c),  
\end{align*}
where $p_{\mathrm{train}}(\cdot \mid \cdot)$ is a condition density of $Y(a)$ given $Z$,  $p_{\mathrm{test}}(\cdot \mid \cdot)$ is a condition density of $Y(a)$ given $X$. 
Following Lemma~1 \citep{KennedyEdwardH2019NCEB}, we can prove the following lemma. 
\begin{lemma}[G-formula]
\label{lem:g-formula}
$\E[Y(B)]=\int \E[Y\mid A=a,X=x]\epol(a\mid x)q(x)\mathrm{d}(a,x)$. 
\end{lemma}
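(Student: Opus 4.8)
The plan is to evaluate $\E[Y(B)]$ by iterated expectation over the evaluation data-generating process, peeling off one conditioning variable at a time and invoking each of the three structural assumptions (consistency, unconfoundedness, transportability) exactly once, mirroring the argument of Lemma~1 in \citet{KennedyEdwardH2019NCEB}.

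First I would use the tower property over the evaluation draw $Z\sim q(z)$ and $B\sim\epol(b\mid Z)$ to write
\[
\E[Y(B)]=\int \E[Y(B)\mid Z=z,\,B=a]\,\epol(a\mid z)\,q(z)\,\rd(a,z).
\]
On the event $\{B=a\}$ the stochastic intervention fixes the action to $a$, so $Y(B)=Y(a)$ and the inner conditional mean becomes $\E[Y(a)\mid Z=z,\,B=a]$. Because $B$ is drawn from $\epol(\cdot\mid Z)$ and is therefore independent of the potential outcomes given $Z$ (the evaluation-side unconfoundedness), conditioning on $B=a$ adds no information about $Y(a)$, giving $\E[Y(a)\mid Z=z,\,B=a]=\E[Y(a)\mid Z=z]$.

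Next I would apply transportability, $\rE[Y(a)\mid Z=c]=\rE[Y(a)\mid X=c]$, to transfer the counterfactual mean from the evaluation covariate to the historical covariate: $\E[Y(a)\mid Z=z]=\E[Y(a)\mid X=z]$. On the historical side, the historical unconfoundedness $A\perp Y(a)\mid X$ lets me reintroduce the conditioning on $\{A=a\}$, and consistency ($Y=Y(a)$ on $\{A=a\}$) then replaces the counterfactual by the observed reward,
\[
\E[Y(a)\mid X=z]=\E[Y(a)\mid X=z,\,A=a]=\E[Y\mid X=z,\,A=a].
\]
Substituting this into the integral and renaming $z$ as $x$ (the covariate spaces of $X$ and $Z$ coincide) yields exactly $\int \E[Y\mid A=a,X=x]\,\epol(a\mid x)\,q(x)\,\rd(a,x)$, which is the claim.

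The main obstacle is the careful bookkeeping of the stochastic intervention in the very first step — justifying rigorously that conditioning on the drawn action $\{B=a\}$ legitimately turns $Y(B)$ into $Y(a)$, and that independence holds for the induced joint law of $(Z,B,\{Y(a)\}_{a})$ — rather than the subsequent algebraic substitutions, which are routine once each assumption is lined up. To make this step precise I would rely on the formalization of stochastic interventions in \citet{MunozIvanDiaz2012PICE,YoungJessicaG2014Ieaa}.
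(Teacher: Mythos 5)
Your proof is correct and follows essentially the same route as the paper's: the tower property over $(Z,B)$, evaluation-side unconfoundedness to drop the conditioning on $B$, transportability to move from $Z$ to $X$, historical-side unconfoundedness to reintroduce conditioning on $A=a$, and consistency to replace $Y(a)$ by $Y$ — the exact sequence in the paper's proof of Lemma~\ref{lem:g-formula}, which likewise mirrors Lemma~1 of \citet{KennedyEdwardH2019NCEB}. Your explicit attention to the first step (justifying $Y(B)=Y(a)$ on $\{B=a\}$ under the induced joint law) is a point the paper glosses over by writing $\E[Y(b)\mid B=b,Z=z]$ directly, so if anything your write-up is slightly more careful.
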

\begin{proof}
\begin{align*}\ts
    \E[Y(B)] &=\int  \E[Y(b)\mid B=b,Z=z]\epol(b\mid z)q(z)\mathrm{d}(b,z) \\
    &=\int  \E[Y(g)\mid Z=z]\epol(b\mid z)q(z)\mathrm{d}(b,z) \\
    &=\int  \E[Y(g)\mid X=z]\epol(b\mid z)q(z)\mathrm{d}(b,z) \\
    &=\int  \E[Y(g)\mid A=g,X=z]\epol(b\mid z)q(z)\mathrm{d}(b,z) \\
    &=\int  \E[Y(a)\mid A=a,X=x]\epol(a\mid x)q(x)\mathrm{d}(a,x) \\
    &=\int  \E[Y\mid A=a,X=x]\epol(a\mid x)q(x)\mathrm{d}(a,x). 
\end{align*}
From the first line to the second line, we use a uncounfedness assumption. From the second line to the third line, we use a transportability assumption. From the third line to the fourth line, we use a uncounfedness assumption. From the fourth line to the fifth line, the random variables $a,x$ are replaced with $b,z$. From the fifth line to the sixth line, we use a consistency assumption. 

\end{proof}

From this lemma, the DM method can be naturally introduced. Note this is equivalent to a transport formula \citet[(3.1)]{PearlJudea2015EVFD} when the evaluation policy is atomic. The G-formula described here is its extension when the evaluation policy is stochastic.

\begin{theorem}[IPWCS]
$\E[Y(B)]=\E[r(X)w(A,X)Y]$
\end{theorem}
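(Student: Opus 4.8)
The plan is to reduce the claim to the G-formula already established in Lemma~\ref{lem:g-formula} by a direct change-of-measure computation. First I would recall that Lemma~\ref{lem:g-formula} gives
$$\E[Y(B)]=\int f(a,x)\,\epol(a\mid x)\,q(x)\,\mathrm{d}(a,x),$$
where $f(a,x)=\E[Y\mid A=a,X=x]$, so it suffices to show that the right-hand side $\E[r(X)w(A,X)Y]$ equals this same integral.

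Next I would expand the right-hand side as an integral against the historical law $p(x)\bpol(a\mid x)p(y\mid a,x)$, since by the notational convention the unsubscripted expectation is taken over the historical data:
$$\E[r(X)w(A,X)Y]=\int r(x)\,w(a,x)\,y\;p(x)\bpol(a\mid x)p(y\mid a,x)\,\mathrm{d}(x,a,y).$$
Substituting the definitions $r(x)=q(x)/p(x)$ and $w(a,x)=\epol(a\mid x)/\bpol(a\mid x)$, the factors $p(x)$ and $\bpol(a\mid x)$ cancel, leaving $\int y\,q(x)\epol(a\mid x)p(y\mid a,x)\,\mathrm{d}(x,a,y)$. Integrating out $y$ against $p(y\mid a,x)$ replaces $\int y\,p(y\mid a,x)\,\mathrm{d}y$ by $f(a,x)$, which recovers exactly the G-formula integral and closes the argument.

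The only real subtlety---and the one step I would treat with care---is that the density ratios $r$ and $w$ must be well defined wherever they are integrated, i.e. the support of $q$ must be contained in that of $p$ and the support of $\epol(\cdot\mid x)$ in that of $\bpol(\cdot\mid x)$. This overlap is implicit in Assumption~\ref{asm:global}, which bounds $r(X)\le C_1$ and $w(A,X)\le C_2$ and thereby guarantees that the cancellation above is valid and that the resulting integrals are finite. Given that, the remainder is a one-line measure-theoretic computation requiring no further machinery, and the identity with the DM/G-formula representation makes transparent why IPWCS and DM target the same estimand.
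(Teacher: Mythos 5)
Your proposal is correct and follows essentially the same route as the paper's own proof: both reduce the claim to the G-formula of Lemma~\ref{lem:g-formula} by writing $\E[r(X)w(A,X)Y]$ as an integral against the historical law $p(x)\bpol(a\mid x)p(y\mid a,x)$ and cancelling $p(x)$ and $\bpol(a\mid x)$ against the density ratios, the only cosmetic difference being that the paper applies the tower property $\E[Y\mid A,X]=f(A,X)$ before the change of measure while you integrate out $y$ afterwards. Your added remark on the overlap conditions (Assumption~\ref{asm:global}) justifying the cancellation is a sensible point the paper leaves implicit.
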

\begin{proof}
\begin{align*}\ts
    \E[r(X)w(A,X)Y] &=\E[r(X)w(A,X)\E[Y\mid A,X]]  \\ 
    &= \int \E[Y\mid A=a,X=x]r(x)w(a,x)\bpol(a\mid x)p(x)\mathrm{d}(a,x) \\ 
    &= \int \E[Y\mid A=a,X=x]\epol(a\mid x)q(x)\mathrm{d}(a,x) \\
    &= \E[Y(B)]. 
\end{align*}
From the third line to the fourth line, we use a Lemma \ref{lem:g-formula}. 
\end{proof}

\section{Density Ratio Estimation}\label{appdx:uLSIF}
Here, we introduce the formulation of LSIF. In LSIF, we estimate the density ratio $r(x)=\frac{q(x)}{p(x)}$ directly. Let $\mathcal{S}$ be the class of non-negative measurable functions $s:\mathcal{X}\to \mathbb{R}^+$. We consider minimizing the following squared error between $s$ and $r$:
\begin{align}\ts
\label{dr}
&\mathbb{E}_{p(x)}[(s(x) - r(x))^{2}] = \mathbb{E}_{p(x)}[(r(x))^{2}] - 2\mathbb{E}_{q(z)}[s(z)] + \mathbb{E}_{p(x)}[(s(x))^{2}]. 
\end{align}
The first term of the last equation does not affect the result of minimization and we can ignore the term, i.e., the density ratio is estimated through the following minimization problem: 
\begin{align*}\ts
s^{*}& = \argmin_{s\in\mathcal{S}} \left[\frac{1}{2} \mathbb{E}_{p(x)}[(s(x))^{2}] - \mathbb{E}_{q(z)}[s(z)]\right],
\end{align*}
where $\mathcal{S}$ is a hypothesis class of the density ratio. As mentioned above, to minimize the empirical version of (\ref{dr}), we use uLSIF \citep{Sugiyama:2012:DRE:2181148}. Given a hypothesis class $\mathcal{H}$, we obtain $\hat{r}$ by $\hat{r} = \argmin_{s\in\mathcal{H}} \Bigg[\frac{1}{2} {\mathbb{E}}_{\nhs}[(s(X))^{2}]- {\mathbb{E}}_{\nev}[s(Z)] + \mathcal{R}(s)\Bigg]$, where $\mathcal{R}$ is a regularization term. For a model of uLSIF, \citet{kanamori2012kulsif} proposed using kernel based hypothesis to estimate the density ratio nonparametrically. \citet{kanamori2012kulsif} called uLSIF with kernel based hypothesis as KuLSIF. \citet{kanamori2012kulsif} showed that, under some assumptions, the convergence rate of KuLSIF is $\left\|\hat{r}(X) - \left(\frac{q\left(X\right)}{p\left(X\right)}\right)\right\|_2 = \mathrm{O}_p\left(\min\left(n^{\mathrm{hst}}, n^{\mathrm{evl}}\right)^{-\frac{1}{2+\gamma}}\right)$, where $0 < \gamma < 2$ is a constant depending on the bracketing entropy of $\mathcal{H}$.  

\section{Efficiency bound for the stratified sampling mechanism}
\label{sec:semi_ld}

In this section, we discuss the efficiency bound. 

\subsection{\Cramer-Rao lower bound}

First, we show the \Cramer-Rao lower bound when the DGP is a stratified sampling with the historical data $\{\alpha_i\}_{i=1}^{\nhs}$ and evaluation data $\{\beta_i\}_{i=1}^{\nev}$, where $\alpha_i$ and $\beta_i$ are random variables. Let $H_{\nhs}$ and $G_{\nev}$ be the distributions of $\{\alpha_i\}_{i=1}^{\nhs}$ and $\{\beta_i\}_{i=1}^{\nev}$. Let us define a set of densities as $\cm_n=\{H_{\nhs},G_{\nev}\}$. A model $\cm_n^{\para}$ is called a regular parametric submodel if the model can be written as $\cm_n^{\para}=\{H_{\theta_1,\nhs},G_{\theta_2,\nev}\}$, where $\theta_1 \in \Theta_1,\,\theta_2 \in \Theta_2$ and it matches the true distribution at $\theta^{*}_1$ and $\theta^{*}_2$, and it has a density $$\ts h_{\theta_1,\nhs}(\{\alpha_i\})=\prod_{i=1}^{\nhs} h(\alpha_i;\theta_1),\,g_{\theta_2,{\nev}}(\{\beta_i\}))=\prod_{i=1}^{\nev} g(\beta_i;\theta_2).$$ Let $R(H,G)\to \Rl$ be a target functional. Then, the \Cramer-Rao lower bound of the functional $R$ under the parametric submodel $\cm_n^{\para}$ is
\begin{align*}\ts
    \CR(\cm_n^{\para},R)=&\nabla_{\theta_1^{\top}}R( H_{\theta_1},G_{\theta_2})\E[\otimes \nabla_{\theta_1}\log h_{\theta_1,\nhs}]^{-1}\nabla_{\theta_1}R( H_{\theta_1},G_{\theta_2})\\
    &+\nabla_{\theta_2^{\top}}R( H_{\theta_1},G_{\theta_2})\E[\otimes \nabla_{\theta_1}\log g_{\theta_2,\nev}]^{-1}\nabla_{\theta_2}R( H_{\theta_1},G_{\theta_2}).
\end{align*}
Before that, we calculate the \Cramer-Rao lower bound in a tabular setting, where the state, action and reward spaces are finite.  
\begin{theorem}\label{thm:tabular}
In a tabular case, $n\CR(\cm_n^{\para},R)$ is $$\rho^{-1}\E[r^2(X)w^2(A,X)\var[Y\mid A,X]]+(1-\rho)^{-1}\var[v(Z)]$$.
\end{theorem}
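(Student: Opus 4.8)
The plan is to realise the tabular model as a pair of saturated multinomial (categorical) families—one for the historical joint law $p(x)\bpol(a\mid x)p(y\mid a,x)$ and one for the evaluation marginal $q(z)$—and then evaluate the two quadratic forms in the $\CR$ formula separately, exploiting that the historical and evaluation samples are independent so the Fisher information is block diagonal. First I would parametrise the historical law by its cell probabilities $\pi_{x,a,y}$ (with $\theta_1$ collecting the free coordinates after imposing $\sum \pi_{x,a,y}=1$) and $q$ by $\lambda_z$ (with $\theta_2$ the free coordinates). Because $h_{\theta_1,\nhs}$ and $g_{\theta_2,\nev}$ are products of $\nhs$ and $\nev$ i.i.d.\ factors, $\E[\otimes \nabla_{\theta_1}\log h_{\theta_1,\nhs}]=\nhs I_1$ and $\E[\otimes \nabla_{\theta_2}\log g_{\theta_2,\nev}]=\nev I_2$ for the per-observation informations $I_1,I_2$; hence $\CR$ splits as $\nhs^{-1}(\cdots)+\nev^{-1}(\cdots)$, and after multiplying by $n$ and using $\nhs=\rho n$, $\nev=(1-\rho)n$, the prefactors $\rho^{-1}$ and $(1-\rho)^{-1}$ appear automatically.

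The computational core is a single lemma about categorical families: for probabilities $\pi=(\pi_j)$ and a functional $T(\pi)$ with pathwise representer $g_j=\partial T/\partial \pi_j$, the per-observation information inverse is the multinomial covariance $\mathrm{diag}(\pi)-\pi\pi^\top$, and a direct expansion gives $\nabla T^\top I^{-1}\nabla T=\sum_j \pi_j g_j^2-(\sum_j \pi_j g_j)^2=\var[g(J)]$, where $J$ is drawn from $\pi$ and the additive constant from the sum-to-one constraint cancels. Thus each block of $\CR$ equals the variance, under the corresponding sampling law, of the representer of $R$ with respect to that law's cells.

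It then remains to identify the two representers. For the evaluation block, $R=\sum_z \lambda_z v(z)$ is linear in $\lambda$, so the representer is $v(z)$ and the block equals $\var[v(Z)]$. For the historical block, $R$ depends on $\pi$ only through $f(a,x)=\sum_y y\,\pi_{x,a,y}/\sum_{y'}\pi_{x,a,y'}$; applying the quotient rule gives $\partial R/\partial \pi_{x,a,y}=q(x)\epol(a\mid x)\{y-f(a,x)\}/\{p(x)\bpol(a\mid x)\}=r(x)w(a,x)\{y-f(a,x)\}$, so the historical representer is $r(X)w(A,X)\{Y-f(A,X)\}$. Since $\E[Y-f(A,X)\mid A,X]=0$ this representer has mean zero, and its variance collapses to $\E[r^2(X)w^2(A,X)\var[Y\mid A,X]]$ by conditioning on $(A,X)$. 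Combining the two blocks with the $\rho^{-1}$ and $(1-\rho)^{-1}$ prefactors yields the claimed expression.

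The main obstacle is the historical block: one must differentiate through the conditional mean $f$, which is a ratio of cell probabilities, and correctly handle the sum-to-one constraint so that the generalised inverse of the singular saturated information is used consistently (equivalently, work in a mean-zero tangent space). A secondary point worth stating explicitly is why the saturated tabular parametrisation suffices: because every smooth parametric submodel is contained in the saturated model and the $\CR$ bound is monotone in the richness of the submodel, the saturated bound is the supremum over submodels and therefore coincides with the nonparametric efficiency bound—this is precisely what links Theorem~\ref{thm:tabular} to the general statement of Theorem~\ref{thm:efficiency}.
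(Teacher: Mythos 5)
Your proposal is correct, and it reaches the result by a genuinely different route than the paper. The paper writes the Cram\'{e}r--Rao bound of a generic parametric submodel as the sum of two block quadratic forms (the same block decomposition you obtain from independence of the two samples), centers the gradient terms, and then applies the Cauchy--Schwarz inequality $\E[AB^{\top}]\E[BB^{\top}]^{-1}\E[AB^{\top}]^{\top}\leq \E[A^2]$ to \emph{upper bound} each block by $\E[r^2(X)w^2(A,X)\var[Y\mid A,X]]$ and $\var[v(Z)]$ respectively; the attainment of these bounds in the tabular case is then asserted in one line ("this is obvious since our setting is tabular") rather than computed. You instead compute the saturated multinomial blocks \emph{exactly}: your lemma that $\nabla T^{\top}I^{-1}\nabla T=\var[g(J)]$ for a categorical family with representer $g_j=\partial T/\partial \pi_j$ (using $\mathrm{diag}(\pi)-\pi\pi^{\top}$ as the inverse information on the mean-zero tangent space) replaces Cauchy--Schwarz entirely, and the quotient-rule differentiation through $f(a,x)=\sum_y y\,\pi_{x,a,y}/\sum_{y'}\pi_{x,a,y'}$ identifies the historical representer $r(x)w(a,x)\{y-f(a,x)\}$ explicitly. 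The trade-off: the paper's inequality argument applies verbatim to any smooth submodel, which is what feeds the supremum defining the semiparametric bound, but it leaves the equality case unproven; your computation is self-contained and fills exactly that gap, at the cost of needing your closing monotonicity remark (every submodel sits inside the saturated model, and the $\CR$ bound is monotone) to recover the supremum statement. One could fairly say your proof supplies the missing half of the paper's, and the paper's supplies the general-submodel half of yours; together they are complementary, and your version is the more complete argument for the tabular statement as literally written.
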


\begin{proof}[Proof of Theorem~\ref{thm:tabular}]

In our setting, we have $\{X_i,A_i,Y_i\}_{i=1}^{\nhs}$ and $\{Z_j\}_{j=1}^{\nev}$. The target functional, i.e., the value of the evaluation policy $\epol$ defined in (\ref{def:policy_value}) is
\begin{align}\ts
\label{eq:int_functional}
    R(\epol)=\int yq(x)\epol(a\mid x)p(y\mid a,x)\mathrm{d}\mu(a,x,y), 
\end{align}
where $\mu$ is a baseline measure such as Lebesgue or counting measure. The scaled \Cramer-Rao lower bound for regular parametric models under $\cm^{\para}_n$:
\begin{align*}\ts
    n\mathrm{CR}(\cm^{\mathrm{para}}_n,R)
\end{align*}
is given by
\begin{align}\ts
    &\rho^{-1}A_1^{-1}B^{-1}_1A^{\top}_1+(1-\rho)^{-1}A_2B_2^{-1}A^{\top}_2\, \label{eq:cracra}\\
   A_1 &= \E_{x\sim q(x),\,a\sim \epol(a\mid x),y\sim p(y\mid a,x)}[y\nabla_{\theta^{\top}_1} \log p(y\mid a,x;\theta_1 ) ], \nonumber\\
   A_2 &= \E_{x\sim q(x),\,a\sim \epol(a\mid x),y\sim p(y\mid a,x)}[y\nabla_{\theta^{\top}_2} \log q(x;\theta_2)],  \nonumber\\
   B_1 &= \E_{x\sim p(x),a\sim \bpol(a\mid x),y\sim p(y\mid a,x)}[\otimes \nabla_{\theta_1}  \log p(y\mid a,x;\theta_1 )], \nonumber\\ 
   B_2 &= \E_{z\sim q(z)}[\otimes \nabla_{\theta_2} \log q(z;\theta_2)].  \nonumber
\end{align}
Then, from the Cauchy Schwartz inequality \citep{matrix}, we have the following inequality:
\begin{align*}\ts
   \E[A(Z)B^{\top}(Z)]\E[B(Z)B^{\top}(Z)]^{-1}\E[A(Z)B^{\top}(Z)]^{\top}\leq \E[A^2(Z)],
\end{align*}
where $\E[A(Z)]=0,\E[B(Z)]=0$. Then, we obtain the following upper bound:
\begin{align*}\ts
    &A_1^{-1}B^{-1}_1A^{\top}_1\\
    &=\E[r(X)w(A,X)Y\nabla_{\theta^{\top}_1}\log p(Y\mid A,X;\theta_1)] \E[\otimes \nabla_{\theta_1}\log p(Y\mid A,X;\theta_1) ]^{-1} \\
    &\ \ \ \ \times\E[r(X)w(A,X)Y\nabla_{\theta_1}\log p(Y\mid A,X;\theta_1)]   \\
     &=\E[r(x)w(A,X)\{Y-\E[Y\mid A,X]\}\nabla_{\theta^{\top}_1}\log p(Y\mid A,X;\theta_1)] \E[\otimes \nabla_{\theta_1}\log p(Y\mid A,X;\theta_1) ]^{-1} \\ 
     &\ \ \ \ \times \E[r(x)w(A,X)\{Y-\E[Y\mid A,X]\}\nabla_{\theta_1}\log p(Y\mid A,X;\theta_1)]   \\
    &\leq \E[r^2(X)w^2(A,X)\{Y-\E[Y\mid A,X]\}^2 ]=\E[r^2(X)w^2(A,X)\var[Y\mid A,X] ]. 
\end{align*}
In the same way,
\begin{align*}\ts
     &A_2^{-1}B^{-1}_1A^{\top}_2  \\
    &= \E[v(Z)\nabla_{\theta^{\top}_2} \log q(Z;\theta_2) ]\E[\otimes \nabla_{\theta_2} \log q(Z;\theta_2)]^{-1}\E[v(Z)\nabla_{\theta_2} \log q(z;\theta_2) ] \\
    &=  \E[\{v(Z)-\E[v(Z)]\}\nabla_{\theta^{\top}_2} \log q(Z;\theta_2) ]\E[\otimes \nabla_{\theta_2} \log q(Z;\theta_2)]^{-1}\E[\{v(Z)-\E[v(Z)]\}\nabla_{\theta_2} \log g(Z;\theta_2) ] \\
    &\leq \E[\{v(Z)-\E[v(Z)]\}^2]=\var[v(Z)]. 
\end{align*}
Therefore, 
\begin{align*}\ts
    & \rho^{-1}A_1^{-1}B^{-1}_1A^{\top}_1+(1-\rho)^{-1}A_2B_2^{-1}A^{\top}_2  \\
     &\leq  \rho^{-1}\E[r^2(X)w^2(A,X)\var[Y\mid A,X]]+(1-\rho)^{-1}\var[v(Z)].
\end{align*}
Finally, we have to show this inequality is equality. This is obvious since our setting is tabular. 
\end{proof}

\subsection{Reduction to i.i.d setting}

This \Cramer-Rao lower bound can be extended when the model is semiparametric. Following \citet{TsiatisAnastasiosA2006STaM}, the efficiency bound of a target functional $R$ under the semiparametric model $\cm_n$ is
\begin{align*}\ts
    \lim_{n \to \infty}\sup_{\cm_n^{\para}\subset \cm_n }n \CR(\cm_n^{\para},R). 
\end{align*}
However, since our DGP is not i.i.d, we cannot direct apply standard semiparametric theory here. To circumvent this problem, we regard the whole $n$ data at hand as one sample and consider the case where we observe $m$ samples. Then, as $m$ goes to infinity, the total data size $n':=nm$ goes to infinity. Since each one sample ($n$ data) is i.i.d, we can apply standard semiparametric theory.

We explain the definition of the efficient influence function (EIF). This is a function for one sample $$o=(x_1,\cdots,x_{\nhs},a_1,\cdots,a_{\nhs},y_1,\cdots,y_{\nhs},z_1,\cdots,z_{\nev}).$$ 
This is defined given the target functional and the model. In our context, the EIF has the following property. 
\begin{theorem}{\citep[Chapter 25.20]{VaartA.W.vander1998As}}
The EIF $\phi(o)$ is the gradient of $R(\epol)$  w.r.t the model $\cm_n$, which has the smallest $l_2$-norm. It satisfies that for any regular estimator $\hat R$ of $R(\epol)$ w.r.t the model $\cm_n$, $\mathrm{AMSE}[\hat R]\geq \var[\phi(o)]$, where $\mathrm{AMSE}[\hat R]$ is the second moment of the limiting distribution of $\sqrt{m}(\hat R-R(\epol))$
\end{theorem}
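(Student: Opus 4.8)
The plan is to deduce this from classical semiparametric efficiency theory applied to the reduced i.i.d.\ experiment described just above, in which the full $n$-sample is treated as a single observation $o$ and one imagines $m$ i.i.d.\ copies with $m\to\infty$. Once the experiment is genuinely i.i.d., the statement is exactly the combination of (i) the characterization of the efficient influence function as the canonical (minimal-norm) gradient and (ii) the H\'ajek convolution theorem, so the real work lies in checking that these apply to our stratified DGP and in identifying the resulting bound with $\Upsilon(\epol)$.

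First I would set up the tangent space. For one super-observation $o$, the density factorizes as the product of the historical likelihood $\prod_i p(x_i)\bpol(a_i\mid x_i)p(y_i\mid a_i,x_i)$ and the evaluation likelihood $\prod_j q(z_j)$, so the score of any regular parametric submodel $\cm_n^{\para}$ splits into a historical score and an evaluation score, and the two blocks are orthogonal in $L_2(o)$. Let $\dot{\cm}_n$ denote the closed linear span of all such submodel scores; this is the tangent space of $\cm_n$ at the truth. Since $R(\epol)$ is pathwise differentiable, with derivative along each submodel already computed in the proof of Theorem~\ref{thm:tabular}, there exists a gradient: a mean-zero $\psi$ with $\frac{d}{dt}R\big|_{t=0}=\E[\psi\,\dot\ell]$ for every submodel score $\dot\ell$.

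Then I would establish the two claims. Any two gradients differ by an element orthogonal to $\dot{\cm}_n$, so the gradient lying inside $\dot{\cm}_n$ is unique; this is the efficient influence function $\phi$, equivalently the orthogonal projection of any gradient onto the tangent space. By Pythagoras, $\|\phi\|_2\le\|\psi\|_2$ for every gradient $\psi$, which is claim (i). For the lower bound, local asymptotic normality of the parametric submodels together with pathwise differentiability lets me invoke the convolution theorem: for any regular estimator $\hat R$, the limit law of $\sqrt{m}(\hat R-R(\epol))$ is the convolution of $\mathcal{N}(0,\var[\phi])$ with an independent law, whence its second moment $\mathrm{AMSE}[\hat R]\ge\var[\phi]$. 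Equivalently, the per-submodel \Cramer-Rao bound $\nabla_\theta R^\top I_\theta^{-1}\nabla_\theta R$, maximized over submodels, equals $\var[\phi]$ by the projection characterization, reproducing $\Upsilon(\epol)$ when the tangent space is taken fully nonparametric.

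The main obstacle is justifying the reduction itself: I must verify that the fixed-$\rho$ stratified design yields, for the super-sample, a bona fide regular (LAN) i.i.d.\ experiment whose Fisher information is block diagonal with the historical and evaluation blocks scaled by $\rho$ and $1-\rho$ — this is precisely what produces the $\rho^{-1}$ and $(1-\rho)^{-1}$ weights. The remaining delicate point is that the nonparametric tangent space is rich enough that the projection of the gradient is the full centered influence function with no further shrinkage, so that the bound coincides with $\Upsilon(\epol)$; this is immediate in the tabular case of Theorem~\ref{thm:tabular} and otherwise requires a standard approximation argument showing parametric submodels can span the relevant directions.
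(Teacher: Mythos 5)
Your proposal is correct and takes essentially the same route as the paper: the paper does not actually reprove this statement but cites \citet[Chapter 25.20]{VaartA.W.vander1998As} after performing precisely the reduction you describe --- treating the entire $n$-sample as a single i.i.d.\ super-observation $o$ and letting the number of replications $m\to\infty$ --- and your sketch simply unfolds that citation into its standard proof, namely the minimal-norm/projection characterization of the EIF combined with the H\'{a}jek--Le Cam convolution theorem. The items you flag as remaining work (orthogonality of the historical and evaluation score blocks, and the explicit identification of $\phi(o)$ and $\Upsilon(\epol)$ via the tangent-space computation) are exactly the parts the paper carries out separately in its proof of Theorem~\ref{thm:efficiency}, so your division of labor matches the paper's.
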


This states that $n\var[\phi(o)]$ is the lower bound in estimating $R(\epol)$. We call $n\var[\phi(o)]$ the efficiency bound. Note that $n$ is fixed here. We consider the asymptotics where $m$ goes to infinity. For the current case, the EIF and efficiency bound are explicitly calculated as follows. 

\begin{theorem}
The EIF of $R(\epol)$ w.r.t the model $\cm_n$ is 
\begin{align*}
\phi(o)=\frac{1}{\nhs}\sum_{i=1}^{\nhs}r(x_i)w(a_i,x_i)\{y_i-q(x_i,a_i)\}+
 \frac{1}{\nev}\sum_{j=1}^{\nev}v(z_j)-R(\pi)    
\end{align*}
The (scaled) efficiency bound $n\var[\phi(o)]$ is 
$$\rho^{-1}\E[r^2(X)w^2(A,X)\var[Y\mid A,X]]+(1-\rho)^{-1}\var[v(Z)].$$
\end{theorem}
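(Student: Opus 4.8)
The plan is to use the reduction to an i.i.d.\ problem set up above: regard one block $o=(x_1,\dots,x_{\nhs},a_1,\dots,a_{\nhs},y_1,\dots,y_{\nhs},z_1,\dots,z_{\nev})$ as a single observation, imagine $m$ i.i.d.\ copies of it, and apply the standard characterization of the efficient influence function as the canonical gradient---the unique mean-zero element of the block tangent space that represents the pathwise derivative of $R(\epol)$ \citep[Chapter 25]{VaartA.W.vander1998As}. Concretely, I would (i) identify the tangent space of $\cm_n$, (ii) compute the pathwise derivative of $R(\epol)$ along an arbitrary regular parametric submodel, (iii) verify that the proposed $\phi(o)$ is a gradient lying in the tangent space, and (iv) evaluate its variance. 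Throughout I read the symbol $q(x_i,a_i)$ in the statement as the conditional expected reward $f(a_i,x_i)=\E[Y\mid A=a_i,X=x_i]$, consistent with the notation table.

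First I would write the block density as $\prod_{i=1}^{\nhs} p(x_i)\bpol(a_i\mid x_i)p(y_i\mid a_i,x_i)\cdot\prod_{j=1}^{\nev}q(z_j)$, so that the score of any submodel is the sum of per-observation scores $\sum_{i}\{s_p(x_i)+s_b(a_i\mid x_i)+s_y(y_i\mid a_i,x_i)\}+\sum_j s_q(z_j)$, where $s_p,s_q$ are mean-zero under $p,q$ and $s_b,s_y$ are conditionally mean-zero given their conditioning variables; the block tangent space is the $L_2$-closure of all such sums. Since $\epol$ is fixed, the target $R(\epol)=\int y\,q(x)\epol(a\mid x)p(y\mid a,x)\,\rD\mu$ depends on the model only through $q(x)$ and $p(y\mid a,x)$; differentiating along the submodel and substituting $\dot q=q\,s_q$, $\dot p(y\mid a,x)=p(y\mid a,x)s_y$ gives
\begin{align*}\ts
\tfrac{\rD}{\rD\theta}R(\epol)=\E[(v(Z)-R)s_q(Z)]+\E[r(X)w(A,X)\{Y-f(A,X)\}s_y(Y\mid A,X)],
\end{align*}
where the first term uses $\E[s_q(Z)]=0$ and $\E[v(Z)]=R$, and the second rewrites the $q\epol$-expectation as a historical expectation via $q=rp$, $\epol=w\bpol$ together with the conditional-mean-zero property of $s_y$.

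Next I would verify the representation $\tfrac{\rD}{\rD\theta}R=\E[\phi(o)S(o)]$. Because observations are independent and all scores are mean-zero, every cross term vanishes and only the matching factors survive: the historical part $\tfrac{1}{\nhs}\sum_i r(x_i)w(a_i,x_i)\{y_i-f(a_i,x_i)\}$ pairs only with $s_y$ (its conditional mean given $(a,x)$, and hence given $x$, is zero, killing the $s_b$ and $s_p$ contributions), reproducing the second term above, while the evaluation part $\tfrac{1}{\nev}\sum_j v(z_j)-R$ pairs only with $s_q$, reproducing the first term. It then remains to check that $\phi$ itself lies in the tangent space: its historical summand $r(x)w(a,x)\{y-f(a,x)\}$ is conditionally mean-zero given $(a,x)$, hence a legitimate $s_y$-type element, and $v(z)-R$ is a legitimate $s_q$-type element, so $\phi$ has exactly the block-score form. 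A gradient lying in the tangent space is the canonical gradient, which proves $\phi$ is the EIF. Finally, orthogonality of the two independent blocks gives $\var[\phi]=\var[\text{hist}]+\var[\text{eval}]$; the i.i.d.\ summands yield $\tfrac{1}{\nhs}\E[r^2(X)w^2(A,X)\{Y-f(A,X)\}^2]+\tfrac{1}{\nev}\var[v(Z)]$, and $\E[\{Y-f(A,X)\}^2\mid A,X]=\var[Y\mid A,X]$ together with $\nhs=\rho n$, $\nev=(1-\rho)n$ produces the stated $n\var[\phi(o)]$.

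The main obstacle is conceptual rather than computational: justifying that the non-i.i.d.\ stratified design is handled by the block and $m\to\infty$ device, and in particular arguing that the block tangent space decomposes cleanly into the four per-factor pieces with the correct conditional-mean-zero constraints. Once the tangent space is pinned down, the pairing computation and the variance evaluation are routine, and the fact that $R$ ignores the $p(x)$ and $\bpol$ coordinates is precisely what makes $\phi$ orthogonal to those directions, explaining why knowing $p$ and $\bpol$ does not lower the bound.
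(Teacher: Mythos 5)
Your proposal is correct and follows essentially the same route as the paper's proof: the same i.i.d.\ block reduction, the same factorization of the block score into per-factor pieces, the same pathwise-derivative computation showing $R(\epol)$ only loads on the $s_q$ and $s_y$ directions, and the same conclusion that $\phi(o)$ is a gradient lying in the tangent space (hence the canonical gradient/EIF), with the variance evaluation then routine. Your reading of $q(x_i,a_i)$ as $f(a_i,x_i)$ matches the paper's intent, and your explicit cross-term and variance computations simply spell out what the paper compresses into ``by some algebra.''
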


When assuming the model $\cm_{n}^{\mathrm{fix}}$ where $\bpol(a|x)$ and $p(x)$ are fixed at true values, we can also show that the EIF and the efficiency bound are the same. 

\begin{proof}[Proof of \cref{thm:efficiency}]

We follow the following steps.
\begin{enumerate}
    \item Calculate some gradient (a candidate of EIF) of the target functional $R(\epol)$ w.r.t $\cm_n$.
    \item Calculate the tangent space w.r.t $\cm_n$.
    \item Show that the candidate of EIF in Step 1 lies in the tangent space. Then, this concludes that a candidate of EIF in Step 1 is actually the EIF.
\end{enumerate}

\paragraph{Calculation of the gradient}

As mentioned, the model $\cm^{\para}_n$ for a nonparametric model $\cm_n$ is 
\begin{align*}\ts
  p(o;\theta)&=\prod_{i=1}^{\nhs} p(x_i;\theta_x)\bpol(a_i \mid x_i;\theta_a)p(y_i \mid x_i,a_i;\theta_y)\prod_{j=1}^{\nev} q(z_j;\theta_z),\\
  \theta &=(\theta^{\top}_x, \theta^{\top}_a,\theta^{\top}_y,\theta^{\top}_z)^{\top},\, o=\{x_i,a_i,y_i,z_j\}_{i=1,j=1}^{\nhs,\nev}.
\end{align*}
We define the corresponding gradients: 
\begin{align*}
g_{x}=\nabla_{\theta_x}\log p(x;\theta_x), \,g_{a|x}=\nabla_{\theta_a}\log \bpol(a|x;\theta_a),\,g_{y|a,x}=\nabla_{\theta_y}\log p(y|a,x;\theta_y),q_{z}=\nabla_{\theta_z}\log q(z;\theta_z). 
\end{align*}
To derive some gradient of the target functional $R(\epol)$ w.r.t $\cm_n$, what we need is finding a function $f(o)$ satisfying 
\begin{align*}
    \nabla R(\theta)&=\E[f(\cD)\nabla \log p(\cD;\theta)]\\
    &=\E\bracks{ f(\cD) \braces{\frac{1}{\nhs}\sum_{i=1}^{\nhs}\{g_x(X_i)+g_{a|x}(X_i,A_i)+g_{y|x,a}(X_i,A_i,Y_i)\} +\frac{1}{\nev}\sum_{j=1}^{\nev}g_{z}(Z_j) } }. 
\end{align*}
We take the derivative as follows:
\begin{align*}
 \nabla  R(\theta)&= \E_{q(x)\epol(a|x)p(y|a,x)}\bracks{y\braces{g_z(x)+g_{y|a,x}(y|a,x)}}. 
\end{align*}
By some algebra, this is equal to 
\begin{align*}
 \E\bracks{\braces{\frac{1}{\nhs}\sum_{i=1}^{\nhs}r(X_i)w(A_i,X_i)\{Y_i-q(X_i,A_i)\}+
 \frac{1}{\nev}\sum_{j=1}^{\nev}v(Z_j)-R(\pi)} \nabla \log p(\cD;\theta)}.
\end{align*}
Thus, the following function 
\begin{align*}
\phi(o)=\frac{1}{\nhs}\sum_{i=1}^{\nhs}r(x_i)w(a_i,x_i)\{y_i-q(x_i,a_i)\}+
 \frac{1}{\nev}\sum_{j=1}^{\nev}v(z_j)-R(\pi)    
\end{align*}
is a derivative. 

\paragraph{Calculation of the tangent space}

Following a standard derivation way \citep{TsiatisAnastasiosA2006STaM,VaartA.W.vander1998As}, the tangent space of the model $\cm_n$ is 
\begin{align*}
\braces{\frac{1}{\nhs}\sum_{i=1}^{\nhs}\braces{t_{x}(x_i)+t_{a|x}(x_i,a_i)+t_{y|a,x} (x_i,a_i,y_i)}+\frac{1}{\nev}\sum_{j=1}^{\nev} tt_{z}(z_j)\in L_2(o) }. 
\end{align*}
where $L_2(o)$ is an $l_2$ space at the true density, 
\begin{align*}
    \E[t_x(X)]=0,\E[t_{a|x}(X,A)|X]=0,\E[t_{y|a,x}(X,A,Y)|X,A]=0,\E[t_{z}(Z)]=0.
\end{align*}

\paragraph{Last Part}

We can easily check that $\phi(o)$ lies in the tangent space by taking $$t_{x}=0,t_{a|x}=0,t_{y|a,x}=r(x)w(a,x)\{y-q(a,x)\},t_z(z)=v(z)-R(\pi).$$
Thus, $\phi(o)$ is the EIF. 

\begin{remark}
We can easily see that the EIF is $\phi(o)$ when assuming the model $\cm_{n}^{\mathrm{fix}}$ where $p_{x}(x)$ and $\bpol(a|x)$ are fixed at true values. This model is represented as 
\begin{align*}\ts
\braces{\prod_{i=1}^{\nhs} p_{*}(x_i)\bpol_{*}(a_i \mid x_i)p(y_i \mid x_i,a_i;\theta_y)\prod_{j=1}^{\nev} q(z_j;\theta_z)}
\end{align*}
where $\cdot_{*}$ emphasizes that these are fixed at true densities. 
\end{remark}
The function $\phi(o)$ is still a gradient of $R(\epol)$ w.r.t $\cm_{n}^{\mathrm{fix}}$ since the model $\cm_{n}^{\mathrm{fix}}$ is smaller than the model $\cm_n$. Besides, $\phi(o)$ belongs to the tangent spaced induced by the model $\cm_{n}^{\mathrm{fix}}$ since the tangent space induced by $\cm_{n}^{\mathrm{fix}}$  is 
\begin{align*}
\braces{\frac{1}{\nhs}\sum_{i=1}^{\nhs}\braces{t_{y|a,x} (x_i,a_i,y_i)}+\frac{1}{\nev}\sum_{j=1}^{\nev} t_{z}(z_j)\in L_2(o) }
\end{align*}
where 
\begin{align*}
\E[t_{y|a,x}(X,A,Y)|X,A]=0,\E[t_{z}(Z)]=0.
\end{align*}
\end{proof}

\section{Proofs}

\label{sec:proof}
In this section, we show the proofs of theorems. In the proofs of  \cref{thm:main2}--\ref{thm:main1}, we prove the case where we use a two-fold cross-fitting. The extension of two fold cross-fitting to the general $K$-fold cross-fold is straightforward. 

\subsection{Required conditions }

In order to show Theorems~\ref{thm:dm}--\ref{thm:ipw1}, we use the following Theorem~\ref{thm:kernel}, which shows the convergence rate of kernel regression. Here, we have data $\{B_i,C_i\}_{i=1}^{n}$, which are i.i.d. from $p(b,c)=p(c|b)p(b)$, and $B_i$ takes a value in $\mathcal{B}$. Then, let us consider a kernel estimation:
\begin{align*}\ts
    n^{-1}\sum_{i=1}^{n}K_{h}(B_i-b)C_i, 
\end{align*}
where $K_{h}(b)=h^{-d}K(b/h^{d})$, where $d$ is a dimension of $b$. Then, we have the following theorem following \citet{newey94}. 

\begin{theorem} 
\label{thm:kernel}
Assume 
\begin{itemize}
    \item the space $\mathcal{B}$ is compact and $p(b)>0$ on $\mathcal{B}$,
    \item the kernel $K(u)$ has the bounded derivative of order $k$, satisfies $\int K(u)\mathrm{d}u=1$, and has zero moments of order $\leq m-1$ and a nonzero $m$-th order moment,
    \item $\E[C\mid B=b]$ is continuously differentiable to order $k$ with bounded derivatives on the opening set in $\mathcal{B}$. 
    \item there is $v \geq 4$ such that $\rE[|C|^v]\leq \infty$ and $\rE[|C|^{v}\mid B=b]p(b)$ is bounded.  
\end{itemize}
Then, when $h=h(n)$ and $h(n)\to 0$,
\begin{align}\ts
\label{eq:linfty}
\|n^{-1}\sum_{i=1}^{n}K_{h}(B_i-b)C_i-p(b)\E[C|b]\|_{\infty}=\Op\left(\frac{\log n^{1/2}}{(n h^{d+2k})^{1/2}} +h^{m}\right). 
\end{align}
Then, under $n^{1-2/v}h^d/\log n\to \infty,\sqrt{n}h^{d+2k}\to \infty,\sqrt{n}h^{2m}\to 0$, the above $l_{\infty}$ risk is $\op(n^{-1/2})$ \citep{newey94}.
\end{theorem}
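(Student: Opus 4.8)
The plan is to decompose the estimation error into a deterministic smoothing bias and a centered stochastic term, bound each uniformly over the compact set $\mathcal{B}$, and match them to the two summands in \eqref{eq:linfty}. Writing $\mu(t):=\E[C\mid B=t]\,p(t)$, I would begin from
\begin{align*}
 n^{-1}\sum_{i=1}^{n}K_h(B_i-b)C_i - \mu(b)
 &= \Big(n^{-1}\sum_{i=1}^{n}K_h(B_i-b)C_i - \E[K_h(B-b)C]\Big) \\
 &\quad + \big(\E[K_h(B-b)C] - \mu(b)\big),
\end{align*}
so that the first line is the stochastic fluctuation and the second is the bias, and control each piece separately.

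For the bias I would apply the usual change of variables together with a Taylor expansion of $\mu$ around $b$ to order $m-1$. Because the kernel has vanishing moments up to order $m-1$ and $\E[C\mid B=b]$ (and $p$) is smooth to the relevant order, all intermediate terms cancel and the remainder is $\bigO(h^{m})$; compactness of $\mathcal{B}$ and boundedness of the derivatives make this uniform in $b$. This term accounts for the $h^m$ in \eqref{eq:linfty}.

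The stochastic term is the crux and is where the moment and bandwidth assumptions enter. Since $C$ is only assumed to have a finite $v$-th moment, I would first \emph{truncate}, writing $C=C\mathbf{1}\{|C|\le\tau_n\}+C\mathbf{1}\{|C|>\tau_n\}$ for a level $\tau_n\to\infty$; the tail contribution is controlled by $\E[|C|^v]<\infty$ and a union bound, and is made negligible by choosing $\tau_n$ appropriately, which is exactly where $v\ge4$ and the condition $n^{1-2/v}h^d/\log n\to\infty$ are used. For the now-bounded truncated summands, whose per-term variance is $\bigO(\tau_n^2 h^{-d})$, I would apply a Bernstein-type exponential inequality at a fixed $b$ and then upgrade to a uniform bound over $b\in\mathcal{B}$ by covering $\mathcal{B}$ with a grid of polynomially many points and controlling the oscillation of $b\mapsto K_h(B_i-b)$ between grid points via the bounded derivatives of $K$; the derivative order $k$ is what turns the base exponent $d$ into $d+2k$, while the union bound over the grid supplies the $\log n$ factor. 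This yields the stochastic rate $\Op\big(\sqrt{\log n/(nh^{d+2k})}\big)$.

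Adding the two bounds gives \eqref{eq:linfty}, and substituting the stated conditions $\sqrt{n}h^{2m}\to0$, $\sqrt{n}h^{d+2k}\to\infty$, and $n^{1-2/v}h^d/\log n\to\infty$ forces each summand below the $n^{-1/2}$ scale, producing the $\op(n^{-1/2})$ conclusion. Since this is precisely the uniform-convergence result of \citet{newey94}, I would either invoke it directly after checking that our hypotheses specialize its assumptions, or reproduce the short argument above. I expect the \textbf{main obstacle} to be the uniform control of the stochastic term with an unbounded response: balancing the truncation level $\tau_n$ against the exponential tail bound and the covering-number cost is the delicate step, and it is what dictates the precise moment requirement $v\ge4$ and the rate restriction on $h$.
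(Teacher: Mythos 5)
The paper never proves this statement: Theorem~\ref{thm:kernel} is imported, citation and all, from \citet{newey94} and is used purely as a black box in the proofs of Theorems~\ref{thm:ipw1}--\ref{thm:dm}, so there is no internal proof to compare against. Your sketch is a reconstruction of the standard argument behind Newey's uniform-convergence lemma, and its architecture --- splitting into a smoothing bias and a centered stochastic term, Taylor expansion against the vanishing kernel moments for the $\bigO(h^{m})$ bias, and truncation at a level $\tau_n$ plus a Bernstein inequality plus a polynomial covering of the compact set $\mathcal{B}$ for the stochastic term --- is the right one; you also correctly identify that $v\geq 4$ and $n^{1-2/v}h^{d}/\log n\to\infty$ are exactly what make the truncation error negligible.

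One conceptual error is worth flagging. You claim that the kernel's order-$k$ derivatives, used to control oscillation between grid points, are ``what turns the base exponent $d$ into $d+2k$.'' That is not the mechanism: the chaining/oscillation step needs only Lipschitz continuity of $K$ and costs nothing in the exponent (the grid can be taken polynomially fine, and the union bound only contributes the $\log n$ factor). Carried out as you describe, your argument actually yields the sharper rate $\Op\left(\sqrt{\log n/(nh^{d})}+h^{m}\right)$ for the function estimate itself. The exponent $d+2k$ in Newey's lemma arises because his statement is uniform over \emph{derivatives} of the kernel estimator up to order $k$: each differentiation of $b\mapsto K_h(B_i-b)$ inflates the scale by $h^{-1}$, hence the pointwise variance by $h^{-2}$ per derivative, and the worst case gives $nh^{d+2k}$ in the denominator. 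Since $h\to 0$, your stronger bound implies the stated one, so your conclusion stands; but the role of $k$ should be understood as the derivative order covered by the lemma, not as a price paid in the covering step. Relatedly, your bias step implicitly (and correctly) requires $\E[C\mid B=b]\,p(b)$ to be smooth of order $m$; the paper's third bullet, which asks for smoothness of order $k$, is itself a garbled transcription of Newey's condition, so your version is the one to keep.
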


\paragraph{Additional assumptions:} regarding Theorem~\ref{thm:kernel}, we assume the following assumptions when we prove Theorems~\ref{thm:ipw1}--\ref{thm:dm}:
\begin{description}
\item [Theorem \ref{thm:ipw1}]: condition when replacing $B$ with $X$ , $C$ with $w(A,X)Y$, condition when replacing $B$ with $Z$, $C$ with $1$. 
    \item [Theorem \ref{thm:ipw2}]: condition when replacing $B$ with $X$ , $C$ with $w(A,X)Y$, condition when replacing $B$ with $X$, $C$ with $1$, condition when replacing $B$ with $Z$, $C$ with $1$. 
    \item [Thorem~\ref{thm:ipw3}]: condition when replacing $B$ with $(X,A)$, $C$ with $1$, condition when replacing $B$ with $(X,A)$, $C$ with $Y$,  condition when replacing $B$ with $X$, $C$ with $w(A,X)Y$, and condition when replacing $B$ with $Z$, $C$ with $1$
    \item [Theorem \ref{thm:dm}]: condition when replacing $B$ with $(X,A)$, $C$ with $Y$, condition when replacing $B$ with $(X,A)$, $C$ with $1$
\end{description}

\subsection{Warming up}
As a warm up, first, we prove the asymptotic property of some simple estimator. When $p(x)$ and $\bpol(a\mid x)$ are known, let us define an IPW estimator:
\begin{align*}
    \hat R_{\ipw1}(\epol)=\E_{\nhs}\left[\frac{\hat q(X)}{p(X)}\frac{\epol(A\mid X)Y}{\bpol(A\mid X)}\right].
\end{align*}

\begin{theorem}
\label{thm:ipw1} When $\hat q(x)=\hat q_h(x)$, the asymptotic MSE of $\hat R_{\ipw1}$ is 
\begin{align*}
    \rho^{-1}\var[r(X)w(A,X)Y]+(1-\rho)^{-1}\var[v(Z)]. 
\end{align*}
\end{theorem}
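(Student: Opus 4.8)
The plan is to read $\hat R_{\ipw1}(\epol)$ as a two–sample $U$-statistic of degree $(1,1)$ over the independent samples $\cD^{\mathrm{hst}}$ and $\cD^{\mathrm{evl}}$ and to peel off its two first–order projections, one per sample, whose variances are exactly the two summands in the claim. Substituting the kernel estimator $\hat q_h$ and interchanging the sums gives
\[
\hat R_{\ipw1}(\epol)=\frac{1}{\nhs\nev}\sum_{j=1}^{\nhs}\sum_{i=1}^{\nev} g_n(X_j,A_j,Y_j;Z_i),\qquad g_n(x,a,y;z)=\frac{w(a,x)\,y}{p(x)}K_h(z-x),
\]
where $p$ and $\bpol$ (hence $w$) are known. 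I would then use the Hájek/Hoeffding decomposition
\[
\hat R_{\ipw1}(\epol)=\theta_n+\frac{1}{\nhs}\sum_{j=1}^{\nhs}\bar g_{1,n}(X_j,A_j,Y_j)+\frac{1}{\nev}\sum_{i=1}^{\nev}\bar g_{2,n}(Z_i)+\mathcal{R}_n,
\]
with $\theta_n=\E[g_n]$, $\bar g_{1,n}(x,a,y)=\E_Z[g_n(x,a,y;Z)]-\theta_n$, $\bar g_{2,n}(z)=\E[g_n(X,A,Y;z)]-\theta_n$, and $\mathcal{R}_n$ the doubly centered degenerate remainder; for a degree $(1,1)$ kernel this identity is exact.

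Next I would identify the limits of the two projections, using $K_h*q\to q$ and $K_h*v\to v$ together with the identity $\E[w(A,X)Y\mid X=x]=v(x)$ (which follows by averaging $w=\epol/\bpol$ against $\bpol$). The historical projection satisfies $\E_Z[g_n(x,a,y;Z)]=\tfrac{w(a,x)y}{p(x)}(K_h*q)(x)\to r(x)w(a,x)y$, so $\tfrac1{\nhs}\sum_j\bar g_{1,n}(X_j,A_j,Y_j)$ obeys a CLT with variance $\var[r(X)w(A,X)Y]$; scaling by $n=\rho^{-1}\nhs$ gives the contribution $\rho^{-1}\var[r(X)w(A,X)Y]$. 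The evaluation projection satisfies $\E[g_n(X,A,Y;z)]=\int \tfrac{v(x)}{p(x)}K_h(z-x)p(x)\,\rd x=(K_h*v)(z)\to v(z)$, yielding the contribution $(1-\rho)^{-1}\var[v(Z)]$. Both projections are centered at $\theta_n\to R$, and the centering is exact because $R=\E[r(X)w(A,X)Y]=\E[v(Z)]$ (the IPWCS and DM/transport identities of Appendix~\ref{sec:idenfication}). Since $\cD^{\mathrm{hst}}$ and $\cD^{\mathrm{evl}}$ are independent, the two averages are independent, so their variances add and produce precisely the stated limit.

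The hard part will be showing that the shrinking bandwidth does not spoil the projection: because $K_h$ has mass of order $h^{-d}$, the kernel $g_n$ has $L_2$ norm of order $h^{-d/2}$, so neither $\theta_n-R$ nor $\mathcal{R}_n$ is obviously $\op(n^{-1/2})$. I would control $\mathcal{R}_n$ by a direct second–moment bound for the degenerate part, $\var[\mathcal{R}_n]=O\!\big(\tfrac{1}{\nhs\nev}\E[g_n^2]\big)=O\!\big(\tfrac{1}{n^2h^{d}}\big)$, so that $\sqrt n\,\mathcal{R}_n=\op(1)$ whenever $nh^{d}\to\infty$, and I would bound the bias by $\theta_n-R=\E_Z[(K_h*v)(Z)-v(Z)]=O(h^{m})$, which is $o(n^{-1/2})$ under the bandwidth conditions of Theorem~\ref{thm:kernel}. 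The residual $n$–dependence in $\bar g_{1,n},\bar g_{2,n}$ is removed by noting $\|\bar g_{1,n}-\bar g_1\|_2\to0$ and $\|\bar g_{2,n}-\bar g_2\|_2\to0$ for the $L_2$-limits $\bar g_1=rwY-R$ and $\bar g_2=v-R$, so replacing them by their limits costs only $\op(n^{-1/2})$.

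One subtlety worth flagging is that a crude uniform bound such as $|\hat R_{\ipw1}-\tfrac1{\nhs}\sum_j r(X_j)w(A_j,X_j)Y_j|\le \tfrac1{\nhs}\sum_j\big|\tfrac{w(A_j,X_j)Y_j}{p(X_j)}\big|\,\|\hat q_h-q\|_\infty$ is too lossy, since $\|\hat q_h-q\|_\infty$ is only $\op(n^{-1/4})$; it is the projection, not the sup–norm rate, that recovers the $\var[v(Z)]$ term. A final triangular–array CLT plus Slutsky then gives $\sqrt n(\hat R_{\ipw1}-R)\stackrel{d}{\to}\mathcal{N}\!\big(0,\rho^{-1}\var[r(X)w(A,X)Y]+(1-\rho)^{-1}\var[v(Z)]\big)$, and the boundedness in Assumptions~\ref{asm:global}--\ref{asm:global2} supplies the uniform integrability needed to upgrade this convergence in distribution to the stated $\Asmse$.
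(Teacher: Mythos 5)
Your proposal is correct, and it lands on the same asymptotic linearization that drives the paper's proof,
\[
\hat R_{\ipw1}(\epol)=\E_{\nhs}[r(X)w(A,X)Y]+\E_{\nev}[v(Z)]-R(\epol)+\op(n^{-1/2}),
\]
but it gets there by a genuinely different decomposition. The paper reindexes the two samples into blocks of common size so that each historical observation can be paired with an evaluation observation into a single i.i.d.\ vector $(X_i,A_i,Y_i,Z_i)$, symmetrizes the cross term $a_{i,j}$ into $b_{i,j}$, and then invokes the one-sample U-statistic projection theorem \citep[Theorem 12.3]{VaartA.W.vander1998As} together with the kernel $\ell_{\infty}$ rate of Theorem~\ref{thm:kernel}, finishing with a stratified-sampling CLT. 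You instead keep the estimator as a genuine two-sample degree-$(1,1)$ U-statistic and use the exact Hoeffding decomposition: mean $\theta_n$, one projection per sample, and a doubly centered degenerate remainder $\mathcal{R}_n$, which you kill by a direct variance computation $\var[\mathcal{R}_n]=O(1/(n^2h^d))$, plus a separate bias bound $\theta_n-R=O(h^m)$ and an $L_2$ replacement of the $n$-dependent projections by $r(X)w(A,X)Y-R$ and $v(Z)-R$. Your route buys three things the paper leaves implicit: (i) unequal sample sizes $\nhs\neq\nev$ are handled natively, with no blocking or pairing device; (ii) the survival of the projection step under a shrinking bandwidth is proved rather than assumed --- van der Vaart's theorem is stated for a fixed kernel, whereas here $\E[g_n^2]=O(h^{-d})$ diverges, and your remainder bound isolates the exact condition $nh^d\to\infty$; (iii) because every error term is controlled in second moment, you obtain $\lim_{n\to\infty} n\E[(\hat R-R)^2]$ directly, which is what the paper's definition of $\Asmse$ literally requires, rather than only the limiting distribution. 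The one imprecise phrase is your closing claim that Assumptions~\ref{asm:global}--\ref{asm:global2} supply uniform integrability: they do not by themselves, since $\hat q_h/p$ is bounded only by $O(h^{-d})$; but this is harmless, because your own $L_2$ bounds on $\mathcal{R}_n$, the bias, and the projection replacement already yield the MSE limit without passing through the CLT. What the paper's route buys in exchange is brevity, reusing an off-the-shelf projection theorem instead of re-deriving the remainder bound.
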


\begin{proof}[Proof of Theorem~\ref{thm:ipw1}]
We follow the proof of \citet{newey94}. For the ease of notation, assume $\rho=k_1/(k_1+k_2)$.
In this case, $\nhs=k_1\noo$ and $\nev=k_2\noo$, where $\noo=n/(k_1+k_2)$. Note that in this asymptotic regime, $N_o\to \infty$. 
Therefore, we reindex the sample set as
\begin{align*}\ts
    \{X_i\}_{i=1}^{\nhs} &= \{X_{b,i}\}\,(1\leq b \leq k_1,\,1\leq i \leq \noo) ,\\
    \{Z_i\}_{j=1}^{\nhs} &= \{Z_{c,j }\}\,(1\leq c \leq k_2,\,1\leq j \leq \noo).
\end{align*}
Here, we only consider the  estimator  $\hat R_{\ipw1}(\epol)$ based on based on $\{X_{b,i}\}_{i=1}^{\noo}$ and $\{Z_{c,j}\}_{j=1}^{\noo}$, and denote it as $\hat R_{b,c}$. Then, the final estimator $\hat R_{\ipw1}(\epol)$ using all set of samples is equal to  
\begin{align*}\ts
    \frac{1}{k_1k_2}\sum_{b=1}^{k_1}\sum_{c=1}^{k_2}\hat R_{b,c},
\end{align*}
since the kernel estimator has a linear property. More specifically, we have 
\begin{align*}
    \hat R_{\ipw1}(\epol) &=\frac{1}{\nhs}\sum_{i=1}^{\nhs}\left\{\frac{1}{\nev}\sum_{j=1}^{\nev}K_h(Z_j-X_i) \right\}\frac{\epol(A_i\mid X_i)Y_i}{\bpol(A_i \mid X_i)p(X_i)}= \frac{1}{\nhs\nev}\sum_{i=1}^{\nhs}\sum_{j=1}^{\nev}\frac{K_h(Z_j-X_i)\epol(A_i\mid X_i)Y_i}{\bpol(A_i \mid X_i)p(X_i)}\\
    &= \frac{1}{k_1k_2}\sum_{b=1}^{k_1}\sum_{c=1}^{k_2}\left\{\frac{1}{\nhs\nev}\sum_{i=1}^{\nhs}\sum_{j=1}^{\nev}\frac{K_h(Z_{c,j}-X_{b,i})\epol(A_{b,i}\mid X_{b,i})Y_{b,i}}{\bpol(A_{b,i} \mid X_{b,i})p(X_{b,i})} \right\} =    \frac{1}{k_1k_2}\sum_{b=1}^{k_1}\sum_{c=1}^{k_2}\hat R_{b,c}. 
\end{align*}
First, we analyze $\hat R_{1,1}$. 

\paragraph{Step 1}
We prove the following in this step:
\begin{align*}\ts
\hat R_{b,c}=\frac{1}{\noo}\sum_{i=1}^{\noo}r(X_{b,i})w(X_{b,i},A_{b,i})Y_{b,i}+ \frac{1}{\noo}\sum_{j=1}^{\noo}v(Z_{c,j})+\op(n^{-1/2}). 
\end{align*}

Especially, we prove the statement for $\hat R_{1,1}$ when $k_1=1,k_2=1,\nhs=\nev=n/2$. We have 
\begin{align*}\ts
     \hat R_{1,1} &=\E_{\nhs}\left[\frac{\hat q_{h}(X)}{p(X)}\frac{\epol(A\mid X)Y}{\bpol(A\mid X)}\right] \\
     &= \frac{1}{\nhs}\sum_{i=1}^{\nhs}\frac{1}{p(X_i)}\frac{\bpol(A_i\mid X_i)Y_i}{\bpol(A_i\mid X_i)}\left\{\frac{1}{\nev}\sum_{j=1}^{\nev}K_h(Z_j-X_i)\right\} \\ 
     &=\E_{\nhs}\left[\frac{q(X)}{p(X)}\frac{\epol(A\mid X)Y}{\bpol(A\mid X)} \right]+\frac{1}{\nhs \nev}\sum_{i=1}^{\nhs}\sum_{j=1}^{\nev}a_{i,j} \\
          &=\E_{\nhs}\left[\frac{q(X)}{p(X)}\frac{\epol(A\mid X)Y}{\bpol(A\mid X)} \right]+\frac{2}{\nhs \nev}\sum_{i<j} b_{i,j},
\end{align*}
where 
\begin{align*}\ts
    a_{i,j}((X_i,A_i,Y_i),(Z_j)) & = \frac{1}{p(X_i)}\frac{\bpol(A_i\mid X_i)Y_i}{\bpol(A_i\mid X_i)}\{K_h(Z_j-X_i)-q(X_i)\},\\
    b_{i,j}((X_i,A_i,Y_i,Z_i),(X_j,A_j,Y_j,Z_j)) & =0.5\{ a_{i,j}+ a_{j,i}\}. 
\end{align*}
Then, 
\begin{align*}\ts
    &\frac{2}{\nhs \nev}\sum_{i<j}b_{i,j}(X_i,A_i,Y_i,Z_i),(X_j,A_j,Y_j,Z_j)) \\
    &= \frac{2}{\nhs}\left\{\sum_{i=1}^{\nhs}\E[b_{i,j}\mid X_i,A_i,Y_i,Z_i]\right\}+\op(n^{-1/2})\\
    &=\frac{1}{ \nev}\sum_{i=1}^{\nev}\E[a_{j,i}\mid X_i,A_i,Y_i,Z_i]+\frac{1}{ \nhs}\sum_{i=1}^{\nhs}\E[a_{i,j}\mid X_i, A_i, Y_i,Z_i]+\op(n^{-1/2})\\ 
    &=\frac{1}{ \nev}\sum_{i=1}^{\nev}\{v(Z_i)-R(\epol)\}+\op(n^{-1/2}). 
\end{align*}
From the first line to the second line, we used the U-statistics theory \citep[Theorem 12.3]{VaartA.W.vander1998As}. From the third line to the fourth line, based on Theorem \ref{thm:kernel}, we used
\begin{align*}\ts
    \E[a_{j,i}\mid X_i,A_i,Y_i,Z_i]&=\op(n^{-{1/2}})+\E[w(A_i,X_i)Y_i\mid X_i=Z_i]\frac{p(X_i)}{p(X_i)} +\E[r(X_i)w(A_i,X_i)Y_i]\\
    &= \op(n^{-{1/2}})+v(Z_i)-R(\epol), \\
    \E[a_{i,j}\mid X_i,A_i,Y_i,Z_i]&= \op(n^{-{1/2}})+ \frac{1}{p(X_i)}\frac{\bpol(A_i\mid X_i)Y_i}{\bpol(A_i\mid X_i)}\{q(X_i)-q(X_i) \} \\
    &= \op(n^{-{1/2}}).
\end{align*}
\begin{remark}
$\E[h(A_i,X_i,Y_i)\mid X_i=Z_i]$ is an abbreviation of $\{\E[h(A_i,X_i,Y_i)\mid X_i=x]\}_{x=Z_i}$. 
\end{remark}
Therefore, 
\begin{align*}\ts
    \hat R_{\mathrm{IPWCSB}} &=\E_{\nhs}\left[\frac{q(X)}{p(X)}\frac{\epol(A\mid X)Y}{\bpol(A\mid X)} \right]+\E_{\nev}[v(Z)]-R(\epol)+\op(n^{-1/2}). 
\end{align*}

\paragraph{Step 2}

Based on Step 1, we have 
\begin{align*}\ts
   \hat R_{\mathrm{IPW1}} &=\frac{1}{k_1k_2}\sum_{b=1}^{k_1}\sum_{c=1}^{k_2}\hat R_{b,c} \\
   &=    \frac{1}{k_1k_2}\sum_{b=1}^{k_1}\sum_{c=1}^{k_2}\left[\frac{1}{\noo}\sum_{i=1}^{\noo}r(X_{b,i})w(X_{b,i},A_{b,i})Y_{b,i}+ \frac{1}{\noo}\sum_{j=1}^{\noo}\{v(Z_{c,j})\}\right]-R(\epol)+\op(n^{-1/2})\\
    &=    \frac{1}{k_1\noo}\sum_{b=1}^{k_1}\left[\sum_{i=1}^{\noo}r(X_{b,i})w(X_{b,i},A_{b,i})Y_{b,i}\right]+ \frac{1}{k_2\noo}\sum_{c=1}^{k_2}\left[ \sum_{j=1}^{\noo}v(Z_{c,j})\right]-R(\epol)+\op(n^{-1/2}) \\ 
     &=    \frac{1}{\nhs}\sum_{i=1}^{\nhs}r(X_{i})w(X_{i},A_{i})Y_{i}+ \frac{1}{\nev}\sum_{j=1}^{\nev} v(Z_{j})-R(\epol)+\op(n^{-1/2}). 
\end{align*}
Finally, from stratified sampling CLT, the statement is concluded. 
\end{proof}

\subsection{Proof of Theorem~\ref{thm:main2}}
\begin{proof}
We denote 
$$\phi_1(x,a,y;r,w,f)=r(x)w(a,x)\{y-f(a,x)\},\,\phi_2(z;f)=v(z).$$
We also denote the union of $\nhs_i$ and $\nev_i$ as $\cD_i$ for $i=1,2$, and the number of $\nhs_1,\nhs_2,\nev_1,\nev_2$ as $n_{11},n_{21},n_{12},n_{22}$. 
For simplicity, we assume $n_{11}=n_{12},n_{21}=n_{22}$. 

Then, we have 
\begin{align}\ts
   &\sqrt{n}\{\E_{\nhs_1}[\phi_1(X,A,Y;\hat r^{(1)},\hat w^{(1)},\hat f^{(1)})]+\E_{\nev_1}[\phi_2(Z;\hat f^{(1)})]-R(\epol)\} \nonumber \\ &=\sqrt{n}\left\{\frac{1}{\sqrt{n_{11}}}\bG_{\nev_1}[\phi_1(X,A,Y;\hat r^{(1)},\hat w^{(1)},\hat f^{(1)})-\phi_1(X,A,Y;r,w, f)]+\frac{1}{\sqrt{n_{12}}}\bG_{\nev_1}[\phi_2(Z;\hat f^{(1)})-\phi_2(Z; f)]\right\}  \label{eq:term11}\\
   &+\sqrt{n}\{\E[\phi_1(X,A,Y;\hat r^{(1)},\hat w^{(1)},\hat f^{(1)})\mid \hat r^{(1)},\hat w^{(1)},\hat f^{(1)}]+\E[\phi_2(Z;\hat f^{(1)})\mid \hat f^{(1)}]\label{eq:term12} \\
   &-\E[\phi_1(X,A,Y;r,w, f)]-\E[\phi_2(Z;f)]\} \nonumber \\
   &+\sqrt{n}\{\E_{_{\nhs_1}}[\phi_1(X,A,Y;r,w, f)]+\E_{\nev_1}[\phi_2(Z;f)]-R(\epol)\}\label{eq:term13}. 
\end{align}
The  term \eqref{eq:term11} is $\op(1)$ by Step 1. The term \cref{eq:term12} is also $\op(1)$ by Step 2 as follows.

\paragraph{Step 1:}\cref{eq:term11} is $\op(1)$. 

If we can show that for any $\epsilon>0$, 
\begin{align}\ts
\label{eq:part}
& \lim_{n\to \infty}P[|\sqrt{n}\{\frac{1}{\sqrt{n_{11}}}\bG_{\nhs_1}[\phi_1(X,A,Y;\hat r^{(1)},\hat w^{(1)},\hat f^{(1)})-\phi_1(X,A,Y;r,w, f)] \\
&\ \ \ \ +\frac{1}{\sqrt{n_{12}}}\bG_{\nev_1}[\phi_2(Z;\hat f^{(1)})-\phi_2(Z; f)]\} |>\epsilon \mid \cD_{2}]=0, \nonumber
\end{align}
then by the bounded convergence theorem, we would have 
\begin{align*}\ts
&\lim_{n\to \infty}P[|\sqrt{n}\{\frac{1}{\sqrt{n_{11}}}\bG_{\nhs_1}[\phi_1(X,A,Y;\hat r^{(1)},\hat w^{(1)},\hat f^{(1)})-\phi_1(X,A,Y;r,w, f)]\\
&\ \ \ +\frac{1}{\sqrt{n_{12}}}\bG_{\nev_1}[\phi_2(Z;\hat f^{(1)})-\phi_2(Z; f)]\} |>\epsilon ]=0,
\end{align*}
yielding the statement. 

To show \eqref{eq:part}, we show that the conditional mean is $0$ and the conditional variance is $\op(1)$. Then, \eqref{eq:part} is proved by the Chebyshev inequality following the proof of \citep[Theorem 4]{KallusUehara2019}. 
The conditional mean is 
\begin{align*}\ts
&\E[\sqrt{n}\{\frac{1}{\sqrt{n_{11}}}\bG_{\nhs_1}[\phi_1(X,A,Y;\hat r^{(1)},\hat w^{(1)},\hat f^{(1)})-\phi_1(X,A,Y;r,w, f)] \\ 
&\ \ \ \ +\frac{1}{\sqrt{n_{12}}}\bG_{\nev_1}[\phi_2(Z;\hat f^{(1)})-\phi_2(Z; f)]\} \mid \cD_{2}] \\
&= \E[\sqrt{n}\{\frac{1}{\sqrt{n_{11}}}\bG_{\nhs_1}[\phi_1(X,A,Y;\hat r^{(1)},\hat w^{(1)},\hat f^{(1)})-\phi_1(X,A,Y;r,w, f)] \\ 
&\ \ \ \ +\frac{1}{\sqrt{n_{12}}}\bG_{\nev_1}[\phi_2(Z;\hat f^{(1)})-\phi_2(Z; f)]\} \mid \cD_{2}, \hat r^{(1)},\hat w^{(1)},\hat f^{(1)}] \\
&=0. 
\end{align*}
Here, we used a cross-fitting construction. More specifically, regarding the second term, we have
\begin{align*}\ts
    &\E[\E_{\nev_1}[\phi_2(Z;\hat f^{(1)})-\phi_2(Z; f)]- \E[\phi_2(Z;\hat f^{(1)})-\phi_2(Z; f)]\mid \cD_{2}, \hat r^{(1)},\hat w^{(1)},\hat f^{(1)} ]\\
    &=\E[\E_{\nev_1}[\phi_2(Z;\hat f^{(1)})-\phi_2(Z; f)] \mid \cD_{2}, \hat r^{(1)},\hat w^{(1)},\hat f^{(1)}] -\E[\phi_2(Z;\hat f^{(1)})-\phi_2(Z; f)]\mid \hat f^{(1)}]\\
    &=\E[\phi_2(Z;\hat f^{(1)})-\phi_2(Z; f)\mid \hat f^{(1)}]-\E[\phi_2(Z;\hat f^{(1)})-\phi_2(Z; f)\mid \hat f^{(1)}]=0. 
\end{align*}
The conditional variance is bounded as 
\begin{align*}\ts
&\var[\sqrt{n}\{\frac{1}{\sqrt{n_{11}}}\bG_{n_{11}}[\phi_1(X,A,Y;\hat r^{(1)},\hat w^{(1)},\hat f^{(1)})-\phi_1(X,A,Y;r,w, f)\mid \cD_{2}]\\
&+\frac{1}{\sqrt{n_{12}}}\bG_{n_{12}}[\phi_2(Z;\hat f^{(1)})-\phi_2(Z; f)]\} \mid \cD_{2}] \\
&= \frac{n}{n_{11}}\var[\phi_1(X,A,Y;\hat r^{(1)},\hat w^{(1)},\hat f^{(1)})-\phi_1(X,A,Y;r,w, f) \mid \cD_{2}] \\
&+\frac{n}{n_{22}}\var[\phi_2(Z;\hat f^{(1)})-\phi_2(Z; f)\mid \cD_{2} ] \\
&\leq \frac{n}{n_{11}}\E[\{\hat r^{(1)}(X) \hat w^{(1)}(A,X)(Y-\hat f^{(1)}(A,X))-r(X)w(A,X)(Y-f(A,X))\}^2 \mid \cD_{2} ]\\
&+\frac{n}{n_{22}}\E[\{ \hat v^{(1)}(Z)-v(Z)\}^2 \mid \cD_{2} ]=\op(1)+\op(1)=\op(1). 
\end{align*}

Here, we used
\begin{align}\ts
\label{eq:first_e}
    \frac{n}{n_{11}}\E[\{\hat r^{(1)}(X) \hat w^{(1)}(A,X)(Y-\hat f^{(1)}(A,X))-r(X)w(A,X)(Y-f(A,X))\}^2 \mid \cD_{2} ]=\op(1). 
\end{align}
and 
\begin{align}\ts
\label{eq:second_e}
    \E[\{ \hat v^{(1)}(Z)-v(Z)\}^2 \mid \cD_{2} ]=\op(1). 
\end{align}
The first equation \eqref{eq:first_e} is proved by 
\begin{align*}\ts
    &\E[\{\hat r^{(1)} \hat w^{(1)}(Y-\hat f^{(1)})-rw(Y-f)\}^2 \mid \cD_{2} ]\\
    &=\E[\{\hat r^{(1)} \hat w^{(1)}(Y-\hat f^{(1)})-\hat r^{(1)} \hat w^{(1)}(Y-f) + \hat r^{(1)} \hat w^{(1)}(Y-f) -rw(Y-f)\}^2 \mid \cD_{2} ] \\
    &\leq 2\E[\{\hat r^{(1)} \hat w^{(1)}(Y-\hat f^{(1)})-\hat r^{(1)} \hat w^{(1)}(Y-f)\}^2 \mid \cD_{2} ]+2\E[ \{\hat r^{(1)} \hat w^{(1)}(Y-f) -rw(Y-f)\}^2\mid \cD_{2} ] \\
    &\leq 2C_1 C_2 \|f-\hat f^{(1)}\|^2_2+ 2\times 4R^2_{\max}\|\hat r^{(1)}\hat w^{(1)}-rw\|^2_2=\op(1). 
\end{align*}
Here, we have used a parallelogram law from the second line to the third line. We have use $0<\hat r<C_1,0<\hat w<C_2,|\hat f|<R_{\max}$ according to the Assumption \ref{asm:global2} and convergence rate conditions, from the third line to the fourth line.  The second equation \eqref{eq:second_e} is proved by Jensen's inequality. 

\paragraph{Step 2:}\cref{eq:term12} is $\op(1)$. 

We have 
\begin{align*}\ts
    & |\E[\phi_1(X,A,Y;\hat r^{(1)},\hat w^{(1)},\hat f^{(1)})\mid \hat r^{(1)},\hat w^{(1)},\hat f]+\E[\phi_2(Z;\hat f^{(1)})\mid \hat f]-\E[\phi_1(x;r,w, f)]-\E[\phi_2(Z;f)]|\\
    &\leq |\E[\{\hat r^{(1)}(X)\hat w^{(1)}(A,X)-r(X)w(A,X) \}\{-\hat f^{(1)}(A,X)+f(A,X) \} \mid \hat r^{(1)},\hat w^{(1)},\hat f^{(1)}]|\\
    &+|\E[r(x)w(A,X)\{-\hat f(A,X)+f(A,X)\}\mid \hat r^{(1)},\hat w^{(1)},\hat f^{(1)} ]+\E[\hat v^{(1)}(Z)-v(Z)\mid \hat f^{(1)}]|\\
    &+|\E[\hat r^{(1)}(X)\hat w^{(1)}(A,X)\{Y-f(A,X)\}\mid \hat r^{(1)},\hat w^{(1)} ]|\\
    &\leq \|\hat r^{(1)}(X)\hat w^{(1)}(A,X)- r(X)w(A,X)\|_2\|\hat f^{(1)}(A,X)-f(A,X)\|_2+0+0 \\
    &=\alpha\beta+0+0=\op(n^{-1/2}).
\end{align*}
Here, we have used \Holder's inequality:
$$ \|fg \|_1 \leq  \|f \|_2  \|g \|_2,$$
the relation
\begin{align*}\ts
    &\E[r(X)w(A,X)\{-\hat f^{(1)}(A,X)+f(A,X)\}\mid \hat r^{(1)},\hat w^{(1)},\hat f^{(1)} ]+\E[\hat v^{(1)}(Z)-v(Z)\mid \hat f^{(1)}]\\ 
    &=\E[-r(X)w(A,X)\hat f^{(1)}(A,X)+\hat v^{(1)}(z) \mid \hat r^{(1)},\hat w^{(1)},\hat f^{(1)}]+\E[-r(X)w(A,X)f(A,X)+v(Z)] \\
    &=0+0=0,
\end{align*}
and 
\begin{align*}\ts
    &\E[\hat r^{(1)}(X)\hat w^{(1)}(A,X)\{Y-f(A,X)\}\mid \hat r^{(1)},\hat w^{(1)} ]\\
    &=\E[\hat r^{(1)}(X)\hat w^{(1)}(A,X)\{f(A,X)-f(A,X)\}\mid \hat r^{(1)},\hat w^{(1)} ]=0. 
\end{align*}

\paragraph{Step 3:} By combining everything, we have 
\begin{align*}\ts
&\mathbb{E}_{\nhs_1}[\phi_1(X,A,Y;\hat r^{(1)},\hat w^{(1)},\hat f^{(1)})]+\mathbb{E}_{\nev_1}[\phi_2(Z;\hat f^{(1)})]-R(\epol) \\  
    &=\mathbb{E}_{\nhs_1}[\phi_1(X,A,Y;r,w, f)]+\mathbb{E}_{\nev_1}[\phi_2(Z;f)]-R(\epol)+\op(1/\sqrt{n}). 
\end{align*}
Then, 
\begin{align*}\ts
\hat R_{\dml}&= 0.5\mathbb{E}_{\nhs_1}[\phi_1(X,A,Y;\hat r^{(1)},\hat w^{(1)},\hat f^{(1)})]+0.5\mathbb{E}_{\nev_1}[\phi_2(Z;\hat f^{(1)})]\\
&\ \ \ \ +0.5\mathbb{E}_{\nhs_2}[\phi_1(X,A,Y;\hat r^{(2)},\hat w^{(2)},\hat f^{(2)})]+0.5\mathbb{E}_{\nev_2}[\phi_2(Z;\hat f^{(2)})] \\
    &=0.5\mathbb{E}_{\nhs_1}[\phi_1(X,A,Y;r,w, f)]+0.5\mathbb{E}_{\nev_1}[\phi_2(Z;f)]+ \\
    &\ \ \ \ +0.5\mathbb{E}_{\nhs_2}[\phi_1(X,A,Y;r,w,f)]+0.5\mathbb{E}_{\nev_2}[\phi_2(Z;f)]+\op(1/\sqrt{n}) \\
    &=\mathbb{E}_{\nhs}[\phi_1(X,A,Y;r,w, f)]+\mathbb{E}_{\nev}[\phi_2(Z;f)]+ \op(1/\sqrt{n}). 
\end{align*}
Finally, by using a stratified sampling CLT \citep{WooldridgeJeffreyM.2001APOW}, the statement is concluded based on Assumption \ref{asm:global}. 
\end{proof}

\subsection{Proof of Lemma~\ref{lem:den}}
\begin{proof}
We can bound $\|\hat r(X)\hat w(A,X)-r(x)w(A,X)\|_2=\op(n^{-p})$:
\begin{align*}\ts
     \| \hat r(X)\hat w(A,X)-r(X)w(A,X)\|_2 &\leq \| \hat r(X)\hat w(A,X)-\hat r(X)w(A,X)\|_2\\
     &\ \ \ + \| \hat r(X)w(A,X)-r(X)w(A,X)\|_2  \\
     &\leq C_1\op(n^{-p})+C_2\op(n^{-p})=\op(n^{-p}). 
\end{align*}
Here, we used the assumptions that $r(X)$ is uniformly bounded by $C_1$ and $w(A,X)$ is uniformly bounded by $C_2$. 
\end{proof}

\subsection{Proof of Theorem~\ref{thm:model}}
\label{appdx:thm:model}

\begin{proof}
Let us define $\phi_1(x,a,y;r,w,f)$ and $\phi_2(z;f)$:
\begin{align}\ts
\phi_1(x,a,y;r,w,f)=r(x)w(a,x)\{y-f(a,x)\},\ \phi_2(z;f)=v(z).
\end{align}
We also denote the union of $\nhs_i$ and $\nev_i$ by $\cD_i$ for $i=1,2$, and the number of $\nhs_1,\nhs_2,\nev_1,\nev_2$ by $n_{11},n_{21},n_{12},n_{22}$. 
For simplicity, we assume $n_{11}=n_{12},n_{21}=n_{22}$. 

Then, we have 
\begin{align}\ts
   &\{\E_{\nhs_1}[\phi_1(X,A,Y;\hat r^{(1)},\hat w^{(1)},\hat f^{(1)})]+\E_{\nev_1}[\phi_2(Z;\hat f^{(1)})]-R(\epol)\} \nonumber \\ &=\left\{\frac{1}{\sqrt{n_{11}}}\bG_{\nhs_1}[\phi_1(X,A,Y;\hat r^{(1)},\hat w^{(1)},\hat f^{(1)})-\phi_1(X,A,Y;r^{\dagger},w^{\dagger}, f^{\dagger})]+\frac{1}{\sqrt{n_{12}}}\bG_{\nev_1}[\phi_2(Z;\hat f^{(1)})-\phi_2(Z; f^{\dagger})]\right\}+  \label{eq:term11_model}\\
   &+\{\E[\phi_1(X,A,Y;\hat r^{(1)},\hat w^{(1)},\hat f^{(1)})\mid \hat r^{(1)},\hat w^{(1)},\hat f^{(1)}]+\E[\phi_2(Z;\hat f^{(1)})\mid \hat f^{(1)}]\label{eq:term12_model} \\
   &-\E[\phi_1(X,A,Y;r^{\dagger},w^{\dagger}, f^{\dagger})]-\E[\phi_2(Z;f^{\dagger})]\} \nonumber \\
   &+\{\E_{_{\nhs_1}}[\phi_1(X,A,Y;r^{\dagger},w^{\dagger}, f^{\dagger})]+\E_{\nev_1}[\phi_2(Z;f^{\dagger})]-R(\epol)\}\label{eq:term13_model}. 
\end{align}
The  term \eqref{eq:term11_model} is $\op(1/\sqrt{n})$ by Step 1 in the previous theorem noting that what we have used is $\|\hat r(X)\hat w(A,X)-w^{\dagger}(A,X)r^{\dagger}(X)\|=\op(1),\,\|\hat f(A,X)-f^{\dagger}(A,X)\|=\op(1)$. The term \cref{eq:term12_model} is also $\op(1)$ by Step 1 as we will show soon.

\paragraph{Step 1:} \cref{eq:term12_model} is $\op(1)$. 
We have 
\begin{align*}\ts
    & |\E[\phi_1(X,A,Y;\hat r^{(1)},\hat w^{(1)},\hat f^{(1)})\mid \hat r^{(1)},\hat w^{(1)},\hat f^{(1)}]+\E[\phi_2(Z;\hat f^{(1)})\mid \hat f^{(1)}]-\E[\phi_1(x;r,w, f)]-\E[\phi_2(Z;f)]|\\
    &\leq |\E[\{\hat r^{(1)}(X)\hat w^{(1)}(A,X)-r^{\dagger}(X)w^{\dagger}(A,X) \}\{-\hat f^{(1)}(A,X)+f^{\dagger}(A,X) \} \mid \hat r^{(1)},\hat w^{(1)},\hat f^{(1)}]|\\
    &+|\E[r^{\dagger}(x)w^{\dagger}(A,X)\{-\hat f^{(1)}(A,X)+f^{\dagger}(A,X)\}\mid \hat r^{(1)},\hat w^{(1)},\hat f^{(1)} ]+\E[\hat v^{(1)}(Z)-v^{\dagger}(Z)\mid \hat f^{(1)}]|\\
    &+|\E[\hat r^{(1)}(X)\hat w^{(1)}(A,X)\{Y-f^{\dagger}(A,X)\}\mid \hat r^{(1)},\hat w^{(1)} ]. 
\end{align*}
Here, if $f^{\dagger}(a,x)=f(a,x)$, we have 
\begin{align*}\ts
    \op(1)\op(1)+\op(1)+0=\op(1)=\op(1). 
\end{align*}
if $r^{\dagger}(x)w^{\dagger}(a,x)=r(x)w(a,x)$, we have 
\begin{align*}\ts
    \op(1)\op(1)+0+\op(1)=\op(1)=\op(1). 
\end{align*}
Therefore, \cref{eq:term12_model} is $\op(1)$. 

\paragraph{Step 2:}

By combining togather, we have 
\begin{align*}\ts
&\mathbb{E}_{\nhs_1}[\phi_1(X,A,Y;\hat r^{(1)},\hat w^{(1)},\hat f^{(1)})]+\mathbb{E}_{\nev_1}[\phi_2(Z;\hat f^{(1)})]-R(\epol) \\  
    &=\mathbb{E}_{\nhs_1}[\phi_1(X,A,Y;r^{\dagger},w^{\dagger}, f^{\dagger})]+\mathbb{E}_{\nev_1}[\phi_2(Z;f^{\dagger})]-R(\epol)+\op(1). 
\end{align*}
Then, 
\begin{align*}\ts
\hat R_{\dml}&= 0.5\mathbb{E}_{\nhs_1}[\phi_1(X,A,Y;\hat r^{(1)},\hat w^{(1)},\hat f^{(1)})]+0.5\mathbb{E}_{\nev_1}[\phi_2(Z;\hat f^{(1)})]\\
&\ \ \ \ +0.5\mathbb{E}_{\nhs_2}[\phi_1(X,A,Y;\hat r^{(2)},\hat w^{(2)},\hat f^{(2)})]+0.5\mathbb{E}_{\nev_2}[\phi_2(Z;\hat f^{(2)})] \\
    &=0.5\mathbb{E}_{\nhs_1}[\phi_1(X,A,Y;r^{\dagger},w^{\dagger}, f^{\dagger})]+0.5\mathbb{E}_{\nev_1}[\phi_2(Z;f^{\dagger})]+ \\
    &\ \ \ \ +0.5\mathbb{E}_{\nhs_2}[\phi_1(X,A,Y;r^{\dagger},w^{\dagger},f^{\dagger})]+0.5\mathbb{E}_{\nev_2}[\phi_2(Z;f^{\dagger})]+\op(1) \\
    &=\mathbb{E}_{\nhs}[\phi_1(X,A,Y;r^{\dagger},w^{\dagger}, f^{\dagger})]+\mathbb{E}_{\nev}[\phi_2(Z;f^{\dagger})]+ \op(1). 
\end{align*}
Then, the statement is concluded since 
\begin{align*}\ts
  \E\left[\mathbb{E}_{\nhs}[\phi_1(X,A,Y;r^{\dagger},w^{\dagger}, f^{\dagger})]+\mathbb{E}_{\nev}[\phi_2(Z;f^{\dagger})]\right]=R(\epol)
\end{align*}
based on the double robust structure  and 
\begin{align*}\ts
    \mathbb{E}_{\nhs}[\phi_1(X,A,Y;r^{\dagger},w^{\dagger}, f^{\dagger})]+\mathbb{E}_{\nev}[\phi_2(Z;f^{\dagger})]=R(\epol)+ \op(1)
\end{align*}
from the law of large numbers based on Assumption \ref{asm:global}. 
\end{proof}

\subsection{Proof of Theorem \ref{thm:main1}}
\begin{proof}

We can prove similarly as in the proof of \cref{thm:efficiency}. Therefore, we omit the proof.  

\end{proof}

\subsection{Proof of Theorem~\ref{thm:ipw2}}
\begin{proof}[Proof of Theorem~\ref{thm:ipw2}]
For the ease of notation, we prove the case $\nhs=\nev$ noting the kernel estimator is linearized as in Theorem \ref{thm:ipw1} and and the generalization is easy. We have
\begin{align*}\ts
   \left \|\frac{\hat q_{h}(x)}{\hat p_{h}(x)}-\frac{ q(x)}{p(x)}-\hat e_{h}(x)\right\|_{\infty}= \op(n^{-1/2}), 
\end{align*}
where 
\begin{align*}\ts
    \hat e_{h}(x)= \frac{1}{p(x)}\{\hat q_{h}(x)-q(x) \}-\frac{q(x)}{p^2(x)}\{\hat p_{h}(x)-p(x) \}. 
\end{align*}
This is proved by Theorem \ref{thm:kernel}. Then, 
\begin{align*}\ts
     \hat R_{\mathrm{IPWCSB}} &=\E_{\nhs}\left[\frac{\hat q_{h}(X)}{\hat p_h(X)}\frac{\epol(A\mid X)Y}{\bpol(A\mid X)}\right] \\
     &= \frac{1}{\nhs}\sum_{i=1}^{\nhs}\frac{\epol(A_i\mid X_i)Y_i}{\bpol(A_i\mid X_i)}\left\{r(X_i)+\hat e_{h}(X_i) \right\} \\ 
     &=\E_{\nhs}\left[\frac{q(X)}{p(X)}\frac{\epol(A\mid X)Y}{\bpol(A\mid X)} \right]+\frac{1}{\nhs \nev}\sum_{i=1}^{\nhs}\sum_{j=1}^{\nev}a_{i,j} \\
          &=\E_{\nhs}\left[\frac{q(X)}{p(X)}\frac{\epol(A\mid X)Y}{\bpol(A\mid X)} \right]+\frac{2}{\nhs \nev}\sum_{i<j} b_{i,j},
\end{align*}
where 
\begin{align*}\ts
    a_{i,j}((X_i,A_i,Y_i),(Z_j)) & = \frac{1}{p(X_i)}\frac{\epol(A_i\mid X_i)Y_i}{\bpol(A_i\mid X_i)}\{K_h(Z_j-X_i)-q(X_i)\}\\
    &-\frac{q(X_i)}{p^2(X_i)}\frac{\epol(A_i\mid X_i)Y_i}{\bpol(A_i\mid X_i)}\{K_h(X_j-X_i)-p(X_i)\},\\
    b_{i,j}((X_i,A_i,Y_i,Z_i),(X_j,A_j,Y_j,Z_j)) & =0.5\{ a_{i,j}+ a_{j,i}\}. 
\end{align*}
Then, 
\begin{align*}\ts
    &\frac{2}{\nhs \nev}\sum_{i<j}b_{i,j}(X_i,A_i,Y_i,Z_i),(X_j,A_j,Y_j,Z_j)) \\
    &= \frac{2}{\nhs}\left\{\sum_{i=1}^{\nhs}\E[b_{i,j}\mid X_i,A_i,Y_i,Z_i]\right\}+\op(n^{-1/2})\\
    &=\frac{1}{ \nev}\sum_{i=1}^{\nev}\E[a_{j,i}\mid X_i, A_i, Y_i,Z_i]+\frac{1}{ \nhs}\sum_{i=1}^{\nhs}\E[a_{i,j}\mid X_i, A_i, Y_i,Z_i]+\op(n^{-1/2})\\ 
    &=\frac{1}{ \nev}\sum_{i=1}^{\nev}\{v(Z_i)-r(X_i)v(X_i)\}+\op(n^{-1/2}). 
\end{align*}
From the first line to the second line, we have used a U-statistics theory \citep[Chapter 12]{VaartA.W.vander1998As}. From the third line to the fourth line, we have used
\begin{align*}\ts
    &\E[a_{j,i}\mid Z_i,X_i,A_i,Y_i] \\
    &=\op(n^{-1/2})+\left\{\E \left[\frac{1}{p(X_i)}\frac{\epol(A_i\mid X_i)Y_i}{\bpol(A_i\mid X_i)}\mid  X_i=Z_i\right]-\E\left[\frac{q(X_i)}{p^2(X_i)}\frac{\epol(A_i\mid X_i)Y_i}{\bpol(A_i\mid X_i)} \mid X_i\right]\right\}p(X_i) \\
    &= \op(n^{-1/2})+v(Z_i)-r(X_i)v(X_i), \\
    &\E[a_{i,j}\mid Z_i,X_i,A_i,Y_i] = \op(n^{-1/2}).
\end{align*}
Therefore, 
\begin{align*}\ts
    \hat R_{\mathrm{IPWCSB}} &=\E_{\nhs}\left[\frac{q(X)}{p(X)}\left\{\frac{\epol(A\mid X)Y}{\bpol(A\mid X)}-v(X)\right\}\right]+\E_{\nev}[v(Z)]+\op(n^{-1/2}). 
\end{align*}
The final statement is concluded by CLT.
\end{proof}

\subsection{Proof of Theorem~\ref{thm:ipw3}}
\begin{proof}\label{appdx:thm:ipw3}
For the ease of the notation, we prove the case $\nhs=\nev$ noting the kernel estimator is linearized as in Theorem \ref{thm:ipw1} and the generalization is easy. Here, we concatenate $X$ and $A$ as $D$. We also write $p(x)\bpol(a\mid x)$ as $u(d)$. 
\begin{align*}\ts
   \left \|\frac{\hat q_{h}(x)}{\hat u_{h}(d)}-\frac{ q(x)}{u(d)}-\hat e_{h}(d)\right\|_{\infty}= \op(n^{-1/2}),
\end{align*}
where 
\begin{align*}\ts
    \hat e_{h}(d)= \frac{1}{u(d)}\{\hat q_{h}(x)-q(x) \}-\frac{q(x)}{u^2(d)}\{\hat u_{h}(d)-u(d) \}. 
\end{align*}
This is proved by Theorem \ref{thm:kernel}. Then, 
\begin{align*}\ts
     \hat R_{\mathrm{IPWCS}} &=\E_{\nhs}\left[\frac{\hat q_{h}(X)}{\hat p_{h}(X)}\frac{\epol(A\mid X)Y}{\hat{\pi}^{b}_h(A\mid X)}\right] \\
     &= \frac{1}{\nhs}\sum_{i=1}^{\nhs}\epol(A_i\mid X_i)Y_i\left\{\frac{q(X_i)}{u(D_i)}+\hat e_{h}(X_i) \right\} \\ 
     &=\E_{\nhs}\left[\frac{q(X)}{p(X)}\frac{\epol(A\mid X)Y}{\bpol(A\mid X)} \right]+\frac{1}{\nhs \nev}\sum_{i=1}^{\nhs}\sum_{j=1}^{\nev}a_{i,j} \\
          &=\E_{\nhs}\left[\frac{q(X)}{p(X)}\frac{\epol(A\mid X)Y}{\bpol(A\mid X)} \right]+\frac{2}{\nhs \nev}\sum_{i<j} b_{i,j},
\end{align*}
where 
\begin{align*}\ts
    a_{i,j}((X_i,A_i,Y_i),(Z_j)) & = \frac{1}{p(X_i)}\frac{\epol(A_i\mid X_i)Y_i}{\bpol(A_i\mid X_i)}\{K_h(Z_j-X_i)-q(X_i)\}\\
    &-\frac{q(X_i)}{u^2(D_i)}\epol(A_i\mid X_i)Y_i\{K_h(D_j-D_i)-u(D_i)\},\\
    b_{i,j}((X_i,A_i,Y_i,Z_i),(X_j,A_j,Y_j,Z_j)) & =0.5\{ a_{i,j}+ a_{j,i}\}. 
\end{align*}
Then, 
\begin{align*}\ts
    &\frac{2}{\nhs \nev}\sum_{i<j}b_{i,j}((X_i,A_i,Y_i,Z_i),(X_j,A_j,Y_j,Z_j)) \\
    &= \frac{2}{\nhs}\left\{\sum_{i=1}^{\nhs}\E[b_{i,j}\mid X_i,A_i,Y_i,Z_i]\right\}+\op(n^{-1/2})\\
    &=\frac{1}{ \nev}\sum_{i=1}^{\nev}\E[a_{j,i}\mid X_i, A_i, Y_i,Z_i]+\frac{1}{ \nhs}\sum_{i=1}^{\nhs}\E[a_{i,j}\mid X_i, A_i, Y_i,Z_i]+\op(n^{-1/2})\\ 
    &=\frac{1}{ \nev}\sum_{i=1}^{\nev}\left\{v(Z_i)-r(X_i)w(X_i,A_i)f(D_i) \right\}+\op(n^{-1/2}). 
\end{align*}
From the first line to the second line, we used the U-statistics theory \citep[Chapter 12]{VaartA.W.vander1998As}. From the third line to the fourth line, we used
\begin{align*}\ts
    &\E[a_{j,i}\mid Z_i,X_i,A_i,Y_i] \\
    &= \op(n^{-1/2})+\E\left[\frac{1}{p(X_i)}w(A_i,X_i)Y_i\mid X_i=Z_i\right]p(X_i)-\E\left[\frac{q(X_i)}{u^2(D_i)}\epol(A_i\mid X_i)Y_i\mid D_i=D_i \right]u(D_i)  \\
    &= \op(n^{-1/2})+v(Z_i)-r(X_i)w(X_i,A_i)f(D_i), \\
    &\E[a_{i,j}\mid Z_i,X_i,A_i,Y_i] =  \op(n^{-1/2}). 
\end{align*}
Therefore, 
\begin{align*}\ts
    \hat R_{\mathrm{IPWCS}} &=\E_{\nhs}\left[\frac{q(X)}{p(X)}\frac{\epol(A\mid X)}{\bpol(A\mid X)}\{Y-f(A,X)\}\right]+\E_{\nev}[v(Z)]+\op(n^{-1/2}). 
\end{align*}
The final statement is concluded by CLT. 
\end{proof}

\subsection{Proof of Theorem~\ref{thm:dm}}
\begin{proof}
For the ease of the notation, we prove the case $\nhs=\nev$ noting the kernel estimator is linearized as in Theorem \ref{thm:ipw1} and and generalization is easy.
Here, 
$\hat v_{h}(a,x)$ is defined as 
\begin{align*}\ts
    \hat v_{h}(a,x)&=\frac{\hat p_{h}(a,x)}{\hat u_{h}(a,x)},\,\hat p_{h}(a,x)=\frac{1}{\nhs}\sum_{i=1}^{\nhs}Y_i K_{h}(\{X_i,A_i\}-\{x,a\}) \\
    \hat u_{h}(a,x)&=\frac{1}{\nhs}\sum_{i=1}^{\nhs} K_{h}(\{X_i,A_i\}-\{x,a\}). 
\end{align*}
We have 
\begin{align*}\ts
   \left \|\frac{\hat p_{h}(a,x)}{\hat u_{h}(a,x)}-\frac{f(a,x)u(a,x) }{u(a,x)}-\hat e_{h}(x,a)\right\|_{\infty}= \op(n^{-1/2}), 
\end{align*}
where 
\begin{align*}\ts
    \hat e_{h}(x,a)= \frac{1}{u(a,x)}\{\hat p_{h}(a,x)-f(a,x)u(a,x) \}-\frac{f(a,x)}{u(a,x)}\{\hat u_{h}(a,x)-u(a,x) \}. 
\end{align*}
This is proved by Theorem \ref{thm:kernel}. Then, we have 
\begin{align*}\ts
    \hat R_{\dm} &=\E_{\nev}[\hat v_h(Z,A)] \\ 
    &= \E_{\nev}[v(Z,A)]+\frac{1}{\nhs \nev}\sum_{i=1}^{\nev}\sum_{j=1}^{\nhs}a_{i,j}\\
 &=\E_{\nev}[v(Z,A)]+\frac{2}{\nhs \nev}\sum_{i<j} b_{i,j},
\end{align*}
where 
\begin{align*}\ts
    a_{i,j}((Z_i,A_i),(X_j,A_j,Y_j)) 
    &= \frac{1}{u(Z_i,A_i)}\left\{Y_j  K_h(\{X_j,A_j\}-\{Z_i,A_i\})-u(Z_i,A_i)f(Z_i,A_i) \right\}-\\
    &\frac{f(Z_i,A_i)}{u(Z_i,A_i)}\{K_h(\{X_j,A_j\}-\{Z_i,A_i\})-u(Z_i,A_i)\},\\
    b_{i,j}((X_i,A_i,Y_i,Z_i),(X_j,A_j,Y_j,Z_j)) &=0.5\{ a_{i,j}+ a_{j,i}\}. 
\end{align*}
Then, 
\begin{align*}\ts
    &\frac{2}{\nhs \nev}\sum_{i<j}b_{i,j}(X_i,A_i,Y_i,Z_i),(X_j,A_j,Y_j,Z_j)) \\
    &= \frac{2}{\nhs}\left\{\sum_{i=1}^{\nhs}\E[b_{i,j}\mid X_i,A_i,Y_i,Z_i]\right\}+\op(n^{-1/2})\\
    &=\frac{1}{ \nev}\sum_{i=1}^{\nev}\E[a_{j,i}\mid X_i, A_i, Y_i,Z_i]+\frac{1}{ \nhs}\sum_{i=1}^{\nhs}\E[a_{i,j}\mid X_i, A_i, Y_i,Z_i]+\op(n^{-1/2})\\ 
    &=\frac{1}{ \nev}\sum_{i=1}^{\nev}\{Y_i-f(X_i,A_i)\}r(X_i)w(X_i,A_i)+\op(n^{-1/2}). 
\end{align*}
From the first line to the second line, we used a U-statistics theory \citep[Chapter 12]{VaartA.W.vander1998As}. From the third line to the fourth line, we used
\begin{align*}\ts
    &\E[a_{j,i}\mid Z_i,X_i,A_i,Y_i] \\
    &=\op(n^{-1/2})+\left\{\frac{Y_i}{u(X_i,A_i)} -\frac{f(X_i,A_i)}{u(X_i,A_i) }\right\}q(X_i)\epol(A_i \mid X_i) \\
    &= \op(n^{-1/2})+Y_ir(X_i)w(A_i,X_i)-r(X_i)f(X_i,A_i)w(A_i,X_i) , \\
    &\E[a_{i,j}\mid Z_i,X_i,A_i,Y_i] = \op(n^{-1/2}).
\end{align*}
This is proved by Theorem \ref{thm:kernel}. Therefore, 
\begin{align*}\ts
    \hat R_{\dm} &=\E_{\nhs}\left[r(X)w(A,X)\left\{Y-f(A,X)\right\}\right]+\E_{\nev}[v(Z)]+\op(n^{-1/2}). 
\end{align*}
The final statement is concluded by CLT.
\end{proof}

\subsection{Proof of Theorem \ref{thm:regret}}
\begin{proof}

We prove the statement following \citet{ZhouZhengyuan2018OMPL}. Though the proof is very similar, for completeness, we sketch the proof the case $\rho=0.5$. Since the estimator is asymptotically linear, the generalization is easy as in Theorem \ref{thm:ipw1}.

Define two scores;
\begin{align*}
\hat \Gamma_i &=    \hat r_w^{(D_i)}(X_i,A_i)\{Y_i-f^{(D_i)}(X_i,A_i) \}+[f(a_1,X_i),\cdots,f(a_{\alpha},X_i)]^{\top},\\
\Gamma_i &= r_w(X_i,A_i)\{Y_i-f(X_i,A_i) \}+[f(a_1,X_i),\cdots,f(a_{\alpha},X_i)]^{\top},
\end{align*}
where $D_i$ is an indicator which cross-fold estimator is used and $\alpha$ is a dimension of the action, $r_{w}(x)=r(x)/\bpol(a\mid x)$. Then, we have $\hat R_{\dml}(\pi)=\frac{2}{n}\{\sum_{i=1}^{n/2}\langle \pi(Z_i), \hat \Gamma_i \rangle \}$. Here, we define the estimator with oracle efficient influence function
$$
\tilde R(\pi)=\frac{2}{n}\sum_{i=1}^{n/2}\langle \pi(Z_i), \Gamma_i \rangle. 
$$
In addition, we define 
\begin{align*}
    \Delta(\pi_a,\pi_b)=R(\pi_a)-R(\pi_b),&\,     \tilde \Delta(\pi_a,\pi_b)=\tilde R(\pi_a)-\tilde R(\pi_b), \\
    \hat \Delta(\pi_a,\pi_b)=\hat R(\pi_a)-\hat R(\pi_b). 
\end{align*}

\paragraph{Step 1:} First, following \citet[Theorem 2]{ZhouZhengyuan2018OMPL}, we prove the following. Let $\tilde \pi\in \argmin_{\pi\in \Pi}\tilde R(\pi)$. Then for any $\delta>0$, with probability at least $1-2\delta$, 
\begin{align*}
    R(\tilde \pi)-R(\pi^{*})\leq O\left(\left\{k(\Pi)+\sqrt{\log(1/\delta)}\right\}\sqrt{\frac{\Upsilon_{*}}{n}}\right),
\end{align*}
where 
\begin{align*}
    \Upsilon_{*} &=\sup_{\pi \in \Pi}\E[\langle \Gamma_i, \pi(Z_i)\rangle^2 ] \\
    &=\sup_{\pi \in \Pi}\E[r(X_i)^2 w_{\pi}^2(X_i,A_i)\{Y_i-f(X_i,A_i)\}^2] +\E[\{v_{\pi}(Z_i)\}^2],
\end{align*}
when $w_{\pi}(a,x)=\pi(a,x)/\bpol(a,x),\,v_{\pi}(x)=\E_{\pi(a\mid x)}[f(a,x)\mid x]$. 

This is proved as follows. We have 
\begin{align*}
     R(\tilde \pi)-R(\pi^{*})\leq \sup_{\pi_a,\pi_b \in \Pi}|\tilde \Delta(\pi_a,\pi_b)-\Delta(\pi_a,\pi_b) |. 
\end{align*}
Then, by using a Chaining argument as Lemma 1 \citep{ZhouZhengyuan2018OMPL}, we can bound an expectation of $\sup_{\pi_a,\pi_b \in \Pi}|\tilde \Delta(\pi_a,\pi_b)-\Delta(\pi_a,\pi_b) |$ via Rademacher complexity. Then, as in Lemma 2 \citep{ZhouZhengyuan2018OMPL}, the high probability bound is obtained via Talagrand inequality. Then, we have
\begin{align*}
    R(\tilde \pi)-R(\pi^{*})\leq O\left(\left\{k(\Pi)+\sqrt{\log(1/\delta)}\right\}\sqrt{\frac{\Upsilon'_{*}}{n}}\right),
\end{align*}
where 
\begin{align*}
    \Upsilon'_{*} &=\sup_{\pi_a,\pi_b \in \Pi}\E[\langle \Gamma_i, \pi_a(Z_i)- \pi_b(Z_i)\rangle^2 ]. 
\end{align*}
This concludes the above statement since  
\begin{align*}
    \Upsilon_{*} &=\sup_{\pi_a,\pi_b \in \Pi}\E[\langle \Gamma_i, \pi_a(Z_i)- \pi_b(Z_i)\rangle^2 ] \\
    &=\sup_{\pi_a,\pi_b\in \Pi}\E[r(X_i)^2\{w_{\pi_a}-w_{\pi_b}\}^2\{Y_i-f(X_i,A_i)\}^2] +\E[\{v_{\pi_a}(Z_i)-v_{\pi_b}(Z_i)\}^2] \\ 
    &\leq  \sup_{\pi \in \Pi}2\E[r(X_i)^2 w_{\pi}^2(X_i,A_i)\{Y_i-f(X_i,A_i)\}^2] +2\E[\{v_{\pi}(Z_i)\}^2]. 
\end{align*}

\paragraph{Step 2:} Assume $\kappa(\Pi)<\infty$, then
\begin{align*}
    \sup_{\pi_a,\pi_b \in \Pi}|\tilde \Delta(\pi_a,\pi_b)- \hat \Delta(\pi_a,\pi_b) |=\op(n^{-1/2}).
\end{align*}

The proof of this statement is based on the double structure of the influence function and cross-fitting. We omit the proof since it is long, and almost the same as Lemma 3  \citep{ZhouZhengyuan2018OMPL}. 

\paragraph{Step 3:}
Finally, based on Theorem 3 \citep{ZhouZhengyuan2018OMPL}, we have 
\begin{align*}
R(\hat \pi)-R(\pi^{*})  &\leq \sup_{\pi_a,\pi_b \in \Pi}|\tilde \Delta(\pi_a,\pi_b)- \hat \Delta(\pi_a,\pi_b) |+ \sup_{\pi_a,\pi_b \in \Pi}|\tilde \Delta(\pi_a,\pi_b)-\Delta(\pi_a,\pi_b) |\\
&\leq O_{p}\left(\left\{k(\Pi)+\sqrt{\log(1/\delta)}\right\}\sqrt{\frac{\Upsilon_{*}}{n}}\right). 
\end{align*}
This means there exists an integer $N_{\delta}$ such that with probability at least $1-2\delta$, for all $n\geq N_{\delta}$:
\begin{align*}
    R(\hat \pi)-R(\pi^{*})\lessapprox \left(k(\Pi)+\sqrt{\log(1/\delta)}\right)\sqrt{\frac{\Upsilon_{*}}{n}}. 
\end{align*}

\begin{remark}
In the general case,
\begin{align*}
    \Upsilon_{*} &=\sup_{\pi \in \Pi}\rho^{-1}\E[r(X_i)^2w_{\pi}^2(X_i,A_i)\{Y_i-f(X_i,A_i)\}^2]+(1-\rho)^{-1}\E[\{v_{\pi}(Z_i)\}^2]. 
\end{align*}
\end{remark}
\end{proof}

\section{OPE with Known Distribution of Evaluation Data}
\label{sec:known}
In this section, we consider a special case where $q(x)$ is known.

By applying \eqref{eq:bound} in Section~\ref{sec:pre}, we obtain the efficiency bound under nonparametric model defined, which is defined as $\tilde{\Upsilon}(\epol)=\mathbb{E}[r^2(X)w^2(A,X)\mathrm{var}[Y \mid A,X]]$.

As the estimator $\hat{R}_{\mathrm{DRCS}}(\epol)$ in Section~\ref{sec:pe_csa}, we construct an estimator with cross-fitting. Instead of \eqref{eq:case2}, we use an estimator defined as $\mathbb{E}_{n^\mathrm{hst}_{k}}[\hat r^{(k)}(X)\hat w^{(k)}(A,X)\{Y-\hat f^{(k)}(A,X) \}]+\mathbb{E}_{q(z)\epol(a\mid z)}[\hat f^{(k)}(a,z)]$. The algorithm is almost the same as before. To estimate $r(x)$, we can simply use density estimation for $p(x)$ since $q(x)$ is known and the integration in $\rE_{q(z)\epol(a\mid z)}[\hat f(a,z)]$ can be taken exactly since $q(x)$ and $\epol(a\mid x)$ are known. Let us denote this estimator as $\tilde{R}_{\mathrm{DRCS}}$. 
We can show that $\tilde{R}_{\mathrm{DRCS}}(\epol)$ achieves the efficiency bound. 
\begin{theorem}[Efficiency of $\tilde{R}_{\mathrm{DRCS}}$]
\label{thm:main1}
For $k\in \{1,\cdots,\xi\}$, assume there exists $p>0,\,q>0,\,p+q\geq 1/2$ such that $\|\hat r^{(k)}(X)\hat w^{(k)}(A,X)-r(X)w(A,X)\|_2=\op(n^{-p})$ and $\|\hat f^{(k)}(A,X)-f(A,X)\|_2=\op(n^{-q})$.
Then, we have $\sqrt{\nhs}(\tilde{R}_{\mathrm{DRCS}}(\epol)-R(\epol))\stackrel{d}{\rightarrow}\bN(0,\tilde{\Upsilon}(\epol))$. 
\end{theorem}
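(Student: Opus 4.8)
The plan is to mirror the proof of Theorem~\ref{thm:main2}, exploiting the fact that the second summand of each fold's estimator is now an \emph{exact} integral against the known law $q(z)\epol(a\mid z)$ rather than an empirical average over the evaluation sample. Writing $\phi_1(x,a,y;r,w,f)=r(x)w(a,x)\{y-f(a,x)\}$ and $g(f):=\E_{q(z)\epol(a\mid z)}[f(a,z)]$, the $k$-th fold estimator is $\E_{\nhs_k}[\phi_1(X,A,Y;\hat r^{(k)},\hat w^{(k)},\hat f^{(k)})]+g(\hat f^{(k)})$. The single identity that drives everything is the change of measure $\E[r(X)w(A,X)\,h(A,X)]=\E_{q(x)\epol(a\mid x)}[h(a,x)]=g(h)$, valid for any integrable $h$, which follows from $r(x)w(a,x)p(x)\bpol(a\mid x)=q(x)\epol(a\mid x)$. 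In particular $g(f)=\E[r(X)w(A,X)f(A,X)]=R(\epol)$ and $\E[\phi_1(\cdot;r,w,f)]=0$, so the estimator is consistent when the nuisances are correct. I would then decompose, for each fold, $\hat R_k-R(\epol)$ into (i) an empirical-process remainder over the historical data, (ii) a conditional-bias term, and (iii) the oracle leading term $\E_{\nhs_k}[\phi_1(\cdot;r,w,f)]$.

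The heart of the argument is the conditional-bias term (ii). Conditioning on the complementary fold used to fit the nuisances, and using $\E[Y\mid A,X]=f(A,X)$ together with the change-of-measure identity applied to both $f$ and $\hat f^{(k)}$, the bias collapses to the single product
\[
\E\big[(\hat r^{(k)}(X)\hat w^{(k)}(A,X)-r(X)w(A,X))\,(f(A,X)-\hat f^{(k)}(A,X))\big].
\]
By \Holder's inequality this is bounded by $\|\hat r^{(k)}\hat w^{(k)}-rw\|_2\,\|\hat f^{(k)}-f\|_2=\op(n^{-p})\op(n^{-q})=\op(n^{-1/2})$ under the assumed rates with $p+q\ge 1/2$. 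This is precisely the Neyman-orthogonality/product-bias cancellation of the doubly robust form; crucially, because $g(\hat f^{(k)})$ is computed exactly, no estimation error from the evaluation sample enters this term, which is why the eventual variance contains no $\var[v(Z)]$ piece.

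For the empirical-process remainder (i) I would reuse the cross-fitting argument of Step~1 in the proof of Theorem~\ref{thm:main2} essentially verbatim: conditionally on the complementary fold $\cD^c_k$, the quantity $\sqrt{\nhs}\,\tfrac{1}{\sqrt{\nhs_k}}\bG_{\nhs_k}[\phi_1(\cdot;\hat r^{(k)},\hat w^{(k)},\hat f^{(k)})-\phi_1(\cdot;r,w,f)]$ has zero conditional mean and conditional variance controlled by $\E[\{\phi_1(\cdot;\hat r^{(k)},\hat w^{(k)},\hat f^{(k)})-\phi_1(\cdot;r,w,f)\}^2\mid\cD^c_k]$, which is $\op(1)$ by Assumption~\ref{asm:global2} together with the consistency $\|\hat r^{(k)}\hat w^{(k)}-rw\|_2=\op(1)$ and $\|\hat f^{(k)}-f\|_2=\op(1)$ (both hold since $p,q>0$); Chebyshev and bounded convergence then yield $\op(1)$. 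Averaging over folds leaves $\tilde R_{\mathrm{DRCS}}(\epol)-R(\epol)=\E_{\nhs}[\phi_1(X,A,Y;r,w,f)]+\op(n^{-1/2})$. Since the summands $r(X)w(A,X)\{Y-f(A,X)\}$ are i.i.d.\ with mean zero and variance $\E[r^2(X)w^2(A,X)\var[Y\mid A,X]]=\tilde\Upsilon(\epol)$, the ordinary CLT over the historical sample (which has size $\nhs=\rho n$, so the $\op(n^{-1/2})$ remainder is negligible at the $\sqrt{\nhs}$ scale) delivers $\sqrt{\nhs}(\tilde R_{\mathrm{DRCS}}(\epol)-R(\epol))\stackrel{d}{\rightarrow}\bN(0,\tilde\Upsilon(\epol))$. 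I expect the only genuinely delicate point to be verifying the conditional-variance bound in (i) without a Donsker condition, but this is handled identically to Theorem~\ref{thm:main2}; the absence of the $\var[v(Z)]$ term and the $\sqrt{\nhs}$ (rather than $\sqrt{n}$) scaling are both direct consequences of replacing the evaluation-data average by the exact integral $g(\hat f^{(k)})$.
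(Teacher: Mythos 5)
Your proposal is correct and takes essentially the same approach the paper intends: the paper's own proof of \cref{thm:main1} is omitted with only a pointer to the earlier cross-fitting argument, and your write-up is precisely that adaptation of the proof of \cref{thm:main2}. In particular, your three-term decomposition, the zero-conditional-mean/vanishing-conditional-variance treatment of the empirical-process term, and the \Holder{} bound $\op(n^{-p})\op(n^{-q})=\op(n^{-1/2})$ on the product bias all match the paper's template, together with the correct observation that replacing the evaluation-sample average by the exact integral $g(\hat f^{(k)})$ eliminates the $\var[v(Z)]$ contribution and reduces the stratified-sampling CLT to an ordinary CLT at scale $\sqrt{\nhs}$.
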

This asymptotic variance is equal to the asymptotic variance when $\rho=0$ as shown in Remark~2 because the case $\rho=0$ implies that we have infinite data from $q(x)$.

\section{Self-Normalized Doubly Robust Estimator with Cross-Fitting}
\label{appdx:self_dml}
In this section, we define self-normalized versions of estimators $\hat{R}_{\dml}(\epol)$ and $\hat R_{\mathrm{IPWCS}}(\epol)$. Let us define the self-normalized DRCS estimator as follows:
\begin{align*}\ts
    \hat{R}_{\mathrm{DRCS}\mathchar`-\mathrm{SN}}(\epol) =&  \frac{1}{\E_{\nhs}\left[ \frac{1}{\hat \pi^\mathrm{b}(A \mid X)}\right]}\E_{\nhs}\left[\hat r(X)\hat w(A,X)\{Y_i-\hat f(A,X)\}\right]\\
    &+\mathbb{E}_{\nev}[\E_{\epol(a \mid Z)}[\hat{f}(a,Z)\mid Z]],
\end{align*}
and the self-normalized IPWCS estimator as follows:
\begin{align*}\ts
    \hat R_{\mathrm{IPWCS}\mathchar`-\mathrm{SN}}(\epol)= \frac{1}{\E_{\nhs}\left[ \frac{1}{\hat \pi^\mathrm{b}(A \mid X)}\right]}\E_{\nhs}\left[\frac{\hat q(X_i)\epol(A \mid X_i)Y}{\hat p(X)\hat \pi^\mathrm{b}(A \mid X_i)}\right]. 
\end{align*}

\section{Algorithm for Off-Policy Learning with Cross-Fitting}
\label{appdx:opl_alg}
In the proposed method of OPL under a covariate shift, we train an evaluation policy by using an estimator $\hat{R}_{\dml}(\epol)$, which is constructed via cross-fitting. In this section, we introduce an algorithm where we use a linear-in-parameter model with kernel functions to approximate a new policy. For $x \in \mathcal{\mathcal{X}}$, a linear-in-parameter model is defined as follows:
\begin{align*}\ts
\label{linear_model}
\pi(a \mid x; \sigma^2) = \frac{\exp(g(a, x; \sigma^2))}{\sum_{a\in\mathcal{A}}\exp(g(a, x; \sigma^2))},
\end{align*}
where $g(a, x; \sigma^2) = \beta^{\top}_{a}\varphi(x; \sigma^2)+\beta_{0,a}$, $\varphi(x; \sigma^2)=\left[\varphi_1(x; \sigma^2), \dots, \varphi_m(x; \sigma^2)\right]^{\top}$, $\varphi_m(x; \sigma^2)$ is the Gaussian kernel defined as 
\begin{align*}\ts
\varphi_u(x; \sigma^2) = \exp\left(-\frac{\|x-c_u\|^2}{2\sigma^2}\right),\,1\leq u \leq m,
\end{align*}
$\{c_1,...,c_m\}$ is $m$ chosen points from $\{X_i\}^{n^{\mathrm{hst}}}_{i=1}$, $\beta_{a}\in\mathbb{R}^m$, and $\beta_{0,a}\in\mathbb{R}$. In optimization, we put a regularization term $\mathcal{R}(\{\beta_{a}, \beta_{0,a}\})$ and train a new policy as follows:
\begin{align*}\ts
      \hat \pi_{\dml} =\argmax_{\pi\in \Pi} \hat{R}_{\dml}(\pi) + \lambda \mathcal{R}(\{\beta_{a}, \beta_{0,a}\}),
\end{align*}
where $\lambda > 0$. The parameters $\sigma^2$ and $\lambda$ are hyper-parameters selected via cross-validation. Thus, in the proposed method, we use the cross-fitting and cross-validation. We describe the algorithm in Algorithm~\ref{alg:opl} with $K$ fold cross-fitting and $L$ fold cross-validation. For brevity, in the algorithm, let us assume $\nhs/\xi, \nhs/L, \nev/\xi, \nev/L \in \mathbb{N}$. In Algorithm~\ref{alg:opl}, we express the objective function with hyper-parameters $\sigma^2$ and $\lambda$ as follows:
\begin{align*}\ts
     \E_{\nhs}\left[\hat r(X)\frac{\pi(A \mid X; \sigma^2)}{\hat \pi^{\mathrm{b}}(A \mid X)}\{Y-\hat f(A,X)\}\right]+ \mathbb{E}_{\nev}[\E_{\epol(a \mid Z)}[\hat f(a,Z) \pi(a \mid Z; \sigma^2)] + \lambda \mathcal{R}(\{\beta_{a}, \beta_{0,a}\}).
\end{align*}

\footnotesize
\begin{algorithm}[tb]
   \caption{Off-policy learning using $\hat{R}_{\dml}(\epol)$ with $\xi$-fold cross-fitting. }
   \label{alg:opl}
\begin{algorithmic}
    \STATE \textbf{Input}: $\xi$: the number of the folds of the cross-fitting for constructing $\hat{R}_{\dml}(\epol)$. $L$: the number of the folds of the cross-validation for constructing the optimal policy. $\Pi$: a hypothesis class of $\epol$. $\{\sigma^2_1,\dots,\sigma^2_{n_{\sigma^2}}\}$: candidates of $\sigma^2$. $\{\lambda_1,\dots,\lambda_{n_\lambda}\}$: candidates of $\lambda$.
    \STATE Take a $\xi$-fold random partition $(I_k)^\xi_{k=1}$ of observation indices $[\nhs] = \{1,\dots,\nhs\}$ such that the size of each fold $I_k$ is $\nhs_k=\nhs/\xi$.
    \STATE Take a $\xi$-fold random partition $(J_k)^\xi_{k=1}$ of observation indices $[\nev] = \{1,\dots,\nev\}$ such that the size of each fold $J_k$ is $\nev_k=\nev/\xi$.
    \STATE For each $k\in[\xi]=\{1,\dots,\xi\}$, define $I^c_k:=\{1,\dots,\nhs\}\setminus I_k$ and $J^c_k:=\{1,\dots,\nev\}\setminus J_k$.
    \STATE Define $\mathcal{S}_k = \{(X_i, A_i, Y_i)\}_{i\in I^c_k}$. 
    \FOR{$k\in[K]$}
    \STATE Construct nuisance estimators $\hat \pi^\mathrm{b}_k(a \mid X)$, $\hat r_k(x)$, and $\hat f_k(a,x)$ using $\mathcal{S}_k$.
    \ENDFOR
    \STATE Take a $L$-fold random partition $(I_\ell)^L_{\ell=1}$ of observation indices $[\nhs] = \{1,\dots,\nhs\}$ such that the size of each fold $I_\ell$ is $\nhs_\ell=\nhs/L$.
    \STATE Take a $L$-fold random partition $(J_\ell)^L_{\ell=1}$ of observation indices $[\nev] = \{1,\dots,\nev\}$ such that the size of each fold $J_\ell$ is $\nev_\ell=\nev/L$.
    \STATE For each $\ell\in[L]=\{1,\dots,L\}$, define $I^c_\ell:=\{1,\dots,\nhs\}\setminus I_\ell$ and $J^c_\ell:=\{1,\dots,\nev\}\setminus J_\ell$.
    \FOR{$\tilde \sigma^2 \in \{\sigma^2_1,\dots,\sigma^2_{n_{\sigma^2}}\}$}
    \FOR{$\tilde \lambda \in \{\lambda_1,\dots,\lambda_{n_\lambda}\}$}
    \STATE Define $Score_{\tilde \sigma^2, \tilde \lambda}=0$.
    \FOR{$\ell\in[L]$}
    \STATE Obtain $\tilde \pi$ by solving the following optimization problem:
    \begin{align*}\ts
	\tilde \pi =& \argmax_{\pi\in \Pi} \E_{\nhs_{I_\ell}}\left[\hat r(X)\frac{\pi(A \mid X; \tilde \sigma^2)}{\hat \pi^{\mathrm{b}}(A \mid X)}\{Y-\hat f(A,X)\}\right]\\
	&+ \E_{\nev_{J_\ell}}[\E_{\epol(a \mid Z)}[\hat f(a,Z) \pi(a \mid Z; \tilde \sigma^2)] + \tilde \lambda \mathcal{R}(\{\beta_{a}, \beta_{0,a}\}),
    \end{align*}
    where $\E_{\nhs_{I_\ell}}$ denotes a empirical approximation using $i\in I_\ell$, $\E_{\nev_{J_\ell}}$ denotes a sample approximation using $j\in J_\ell$, and $\hat \pi^\mathrm{b}$, $\hat r$, and $\hat{f}$ are the corresponding nuisance estimators chosen from $\hat \pi^\mathrm{b}_k$, $\hat r_k$, and $\hat f_k$. 
    \STATE Update the score $Score_{\tilde \sigma^2, \tilde \lambda}$ by 
    \begin{align*}\ts
    &Score_{\tilde \sigma^2, \tilde \lambda}\\
    &= Score_{\tilde \sigma^2, \tilde \lambda} + \E_{\nhs_{I^c_\ell}}\left[\hat r(X)\frac{\tilde \pi(A \mid X; \tilde \sigma^2)}{\hat \pi^{\mathrm{b}}(A \mid X)}\{Y-\hat f(A,X)\}\right] + \E_{\nev_{J_\ell}}[\E_{\epol(a \mid Z)}[\hat f(a,Z) \tilde \pi(a \mid Z; \tilde \sigma^2)],
    \end{align*}
    where $\E_{\nhs_{I^c_\ell}}$ denotes a empirical approximation using $i\in I^c_\ell$, and $\E_{\nev_{J^c_\ell}}$ denotes a sample approximation using $j\in J^c_\ell$.
    \ENDFOR
    \ENDFOR
    \ENDFOR
    \STATE Obtain $\tilde \pi$ by solving the following optimization problem:
    \begin{align*}\ts
	\hat \pi =& \argmax_{\pi\in \Pi} \E_{\nhs}\left[\hat r(X)\frac{\pi(A \mid X; \hat \sigma^2)}{\hat \pi^{\mathrm{b}}(A \mid X)}\{Y-\hat f(A,X)\}\right]\\
	&+ \E_{\nev_{J_\ell}}[\E_{\epol(a \mid Z)}[\hat f(a,Z) \pi(a \mid Z; \hat \sigma^2)] + \hat \lambda \mathcal{R}(\{\beta_{a}, \beta_{0,a}\}),
    \end{align*}
    where $(\hat \sigma^2, \hat \lambda) = \argmax_{(\tilde \sigma^2, \tilde \lambda)\in \{\{\sigma^2_1,\dots,\sigma^2_{n_{\sigma^2}}\}, \{\lambda_1,\dots,\lambda_{n_\lambda}\}\}} Score_{\tilde \sigma^2, \tilde \lambda}$.
\end{algorithmic}
\end{algorithm}
\normalsize

\section{Details of Experiments in Section~\ref{subsec:exp_ope}}
\label{appdx:details_exp}
First, we show the description of the datasets in Table. All datasets are downloaded from \url{https://www.csie.ntu.edu.tw/~cjlin/libsvmtools/datasets/}. 

\begin{table}[t]
\label{Dataset}
\caption{Specification of datasets}
\begin{center}
\scalebox{0.9}[0.9]{\begin{tabular}{cccc}
\hline
Dataset&the number of samples &Dimension &the number of classes\\
\hline
 satimage & 4,435  &  35 & 6 \\
 vehicle & 846  &  18 & 4 \\
 pendigits & 7,496 &  16 & 10\\
\end{tabular}}
\end{center}
\end{table}

In addition to the results shown in Section~\ref{subsec:exp_ope}, we show the performances of IPWCS and DM estimator with nuisance functions estimated by the kernel Ridge regression, which are referred as IPWCS-R and DM-R, and the self-normalized versions of the proposed estimators, DRCS and IPWCS estimator, which are referred as DRCS-SN and IPWCS-SN. In addition, for OPE, we also show the results with the different sample size.

In Tables~\ref{tbl:table1_ape}--\ref{tbl:table2_ape}, we show the additional experimental results with the same setting as Section~\ref{sec:exp}. In this setting, the sample size is fixed at $800$.

In Tables~\ref{tbl:table3_ape}--\ref{tbl:table4_ape}, we show the additional experimental results with $500$ samples. The other setting is the same as Section~\ref{subsec:exp_ope}.

In Tables~\ref{tbl:table5_ape}--\ref{tbl:table6_ape}, we show the additional experimental results with $300$ samples. The other setting is the same as Section~\ref{subsec:exp_ope}.

We also add the OPE and OPL experiment with PenDigits in \cref{tbl:table6,tbl:table7}. In this experiment, the sample size is fixed at $800$.

In Tables~\ref{tbl:table7_ape}--\ref{tbl:table8_ape}, we show the additional experimental results with $1,000$ samples for SatImage and PenDigits datasets. We could not conduct experiments for Vehicle dataset because it only has $800$ samples. The other setting is the same as Section~\ref{subsec:exp_ope}.

For OPE, we highlight in bold the best two estimators in each case. For OPL, we highlight in bold the best one estimator in each case. The proposed DRCS estimator performs well in many datasets. The DM estimator also works well, but the performance dramatically drops when the model is misspecified. 

\begin{table*}[t!]
\begin{center}
\caption{Off-policy evaluation with SatImage dataset with $800$ samples} 
\label{tbl:table1_ape}
\scalebox{0.65}[0.65]{
\begin{tabular}{l|rr|rr|rr|rr|rr|rr|rr}
\hline
\multirow{2}{*}{Behavior Policy} & \multicolumn{2}{c|}{DRCS}  & \multicolumn{2}{c|}{IPWCS} & \multicolumn{2}{c|}{DM} & \multicolumn{2}{c|}{IPWCS-R} &  \multicolumn{2}{c|}{DM-R} & \multicolumn{2}{c|}{DRCS-SN} & \multicolumn{2}{c}{IPWCS-SN}\\
 & MSE & std & MSE & std & MSE & std & MSE & std & MSE & std & MSE & std & MSE & std \\
\hline
$0.7\pi^d+0.3\pi^u$ &  0.107 &  0.032 &  67.448 &  144.845 &  {\bf 0.042} &  0.043 &  {\bf 0.045} &  0.049 &  0.073 &  0.023 &  0.188 &  0.033 &  0.245 &  0.127 \\
$0.4\pi^d+0.6\pi^u$  &  {\bf 0.096} &  0.025 &  74.740 &  155.704 &  0.134 &  0.052 &  {\bf 0.093} &  0.069 &  0.177 &  0.033 &  0.189 &  0.028 &  0.232 &  0.088 \\
$0.0\pi^d+1.0\pi^u$  &  {\bf 0.154} &  0.051 &  58.031 &  103.632 &  0.336 &  0.079 &  {\bf 0.022} &  0.026 &  0.372 &  0.050 &  0.358 &  0.058 &  0.372 &  0.087 \\
\hline
\end{tabular}
}

\caption{Off-policy evaluation with Vehicle dataset with $800$ samples} 
\label{tbl:table2_ape}
\scalebox{0.65}[0.65]{
\begin{tabular}{l|rr|rr|rr|rr|rr|rr|rr}
\hline
\multirow{2}{*}{Behavior Policy} & \multicolumn{2}{c|}{DRCS}  & \multicolumn{2}{c|}{IPWCS} & \multicolumn{2}{c|}{DM} & \multicolumn{2}{c|}{IPWCS-R} &  \multicolumn{2}{c|}{DM-R} & \multicolumn{2}{c|}{DRCS-SN} & \multicolumn{2}{c}{IPWCS-SN}\\
 & MSE & std & MSE & std & MSE & std & MSE & std & MSE & std & MSE & std & MSE & std \\
\hline
$0.7\pi^d+0.3\pi^u$ &  {\bf 0.029} &  0.019 &  218390.000 &  285382.247 &  {\bf 0.038} &  0.035 &  0.568 &  0.319 &  0.040 &  0.014 &  0.086 &  0.019 &  0.099 &  0.044 \\
$0.4\pi^d+0.6\pi^u$ &  {\bf 0.019} &  0.024 &  329825.704 &  454301.175 &  0.095 &  0.062 &  0.576 &  0.357 &  0.089 &  0.019 &  {\bf 0.086} &  0.015 &  0.125 &  0.063 \\
$0.0\pi^d+1.0\pi^u$ &  {\bf 0.037} &  0.030 &  173603.802 &  141163.618 &  0.213 &  0.049 &  0.233 &  0.193 &  0.210 &  0.031 &  {\bf 0.174} &  0.026 &  0.193 &  0.040 \\
\hline
\end{tabular}
}
\end{center}
\end{table*}

\begin{table*}[t!]
\begin{center}
\caption{Off-policy evaluation with SatImage dataset with $500$ samples} 
\label{tbl:table3_ape}
\scalebox{0.65}[0.65]{
\begin{tabular}{l|rr|rr|rr|rr|rr|rr|rr}
\hline
\multirow{2}{*}{Behavior Policy} & \multicolumn{2}{c|}{DRCS}  & \multicolumn{2}{c|}{IPWCS} & \multicolumn{2}{c|}{DM} & \multicolumn{2}{c|}{IPWCS-R} &  \multicolumn{2}{c|}{DM-R} & \multicolumn{2}{c|}{DRCS-SN} & \multicolumn{2}{c}{IPWCS-SN}\\
 & MSE & std & MSE & std & MSE & std & MSE & std & MSE & std & MSE & std & MSE & std \\
\hline
$0.7\pi^d+0.3\pi^u$ &  0.112 &  0.039 &  729.208 &  2433.557 &  {\bf 0.049} &  0.042 &  0.177 &  0.407 &  {\bf 0.079} &  0.033 &  0.181 &  0.041 &  0.269 &  0.120 \\
$0.4\pi^d+0.6\pi^u$  &  {\bf 0.087} &  0.036 &  790.188 &  2139.882 &  0.146 &  0.074 &  {\bf 0.130} &  0.170 &  0.173 &  0.045 &  0.187 &  0.054 &  0.263 &  0.093 \\
$0.0\pi^d+1.0\pi^u$  &  {\bf 0.179} &  0.066 &  453.553 &  1148.372 &  0.335 &  0.097 &  {\bf 0.047} &  0.071 &  0.374 &  0.070 &  0.360 &  0.075 &  0.362 &  0.070 \\
\hline
\end{tabular}
}

\caption{Off-policy evaluation with Vehicle dataset with $500$ samples} 
\label{tbl:table4_ape}
\scalebox{0.65}[0.65]{
\begin{tabular}{l|rr|rr|rr|rr|rr|rr|rr}
\hline
\multirow{2}{*}{Behavior Policy} & \multicolumn{2}{c|}{DRCS}  & \multicolumn{2}{c|}{IPWCS} & \multicolumn{2}{c|}{DM} & \multicolumn{2}{c|}{IPWCS-R} &  \multicolumn{2}{c|}{DM-R} & \multicolumn{2}{c|}{DRCS-SN} & \multicolumn{2}{c}{IPWCS-SN}\\
 & MSE & std & MSE & std & MSE & std & MSE & std & MSE & std & MSE & std & MSE & std \\
\hline
$0.7\pi^d+0.3\pi^u$ &  {\bf 0.029} &  0.020 &  104311.242 &  126027.165 &  {\bf 0.028} &  0.027 &  0.379 &  0.317 &  0.036 &  0.022 &  0.080 &  0.025 &  0.119 &  0.055 \\
$0.4\pi^d+0.6\pi^u$ &  {\bf 0.014} &  0.010 &  186170.520 &  260715.112 & {\bf  0.081} &  0.040 &  0.585 &  0.553 &  0.082 &  0.031 &  0.084 &  0.025 &  0.141 &  0.062 \\
$0.0\pi^d+1.0\pi^u$ & {\bf 0.034} &  0.038 &   82883.403 &  115580.232 &  {\bf 0.149} &  0.064 &  0.230 &  0.235 &  0.184 &  0.042 &  0.160 &  0.035 &  0.212 &  0.044 \\
\hline
\end{tabular}
}
\end{center}
\end{table*}

\begin{table*}[t!]
\begin{center}
\caption{Off-policy evaluation with SatImage dataset with $300$ samples} 
\label{tbl:table5_ape}
\scalebox{0.65}[0.65]{
\begin{tabular}{l|rr|rr|rr|rr|rr|rr|rr}
\hline
\multirow{2}{*}{Behavior Policy} & \multicolumn{2}{c|}{DRCS}  & \multicolumn{2}{c|}{IPWCS} & \multicolumn{2}{c|}{DM} & \multicolumn{2}{c|}{IPWCS-R} &  \multicolumn{2}{c|}{DM-R} & \multicolumn{2}{c|}{DRCS-SN} & \multicolumn{2}{c}{IPWCS-SN}\\
 & MSE & std & MSE & std & MSE & std & MSE & std & MSE & std & MSE & std & MSE & std \\
\hline
$0.7\pi^d+0.3\pi^u$ &  0.103 &  0.043 &   765.985 &   2922.342 &  {\bf 0.026} &  0.027 &  0.125 &  0.115 &  {\bf 0.067} &  0.035 &  0.169 &  0.043 &  0.302 &  0.133 \\
$0.4\pi^d+0.6\pi^u$  &  {\bf 0.074} &  0.051 &    40.273 &     89.381 & {\bf  0.126} &  0.098 &  0.261 &  0.309 &  0.155 &  0.055 &  0.172 &  0.049 &  0.333 &  0.134 \\
$0.0\pi^d+1.0\pi^u$  &  {\bf 0.169} &  0.095 &  4367.009 &  15791.530 &  {\bf 0.297} &  0.084 &  0.375 &  1.293 &  0.341 &  0.073 &  0.324 &  0.060 &  0.333 &  0.105 \\
\hline
\end{tabular}
}

\caption{Off-policy evaluation with Vehicle dataset with $300$ samples} 
\label{tbl:table6_ape}
\scalebox{0.65}[0.65]{
\begin{tabular}{l|rr|rr|rr|rr|rr|rr|rr}
\hline
\multirow{2}{*}{Behavior Policy} & \multicolumn{2}{c|}{DRCS}  & \multicolumn{2}{c|}{IPWCS} & \multicolumn{2}{c|}{DM} & \multicolumn{2}{c|}{IPWCS-R} &  \multicolumn{2}{c|}{DM-R} & \multicolumn{2}{c|}{DRCS-SN} & \multicolumn{2}{c}{IPWCS-SN}\\
 & MSE & std & MSE & std & MSE & std & MSE & std & MSE & std & MSE & std & MSE & std \\
\hline
$0.7\pi^d+0.3\pi^u$ &  {\bf 0.036} &  0.023 &   78064.888 &   80378.226 &  {\bf 0.029} &  0.028 &  0.328 &  0.391 &  0.038 &  0.021 &  0.086 &  0.030 &  0.135 &  0.072 \\
$0.4\pi^d+0.6\pi^u$ &  {\bf 0.020} &  0.020 &  108655.809 &  136013.160 &  0.096 &  0.055 &  0.668 &  0.608 &  {\bf 0.084} &  0.033 &  0.090 &  0.038 &  0.174 &  0.078 \\
$0.0\pi^d+1.0\pi^u$ &  {\bf 0.063} &  0.051 &   59301.622 &   74435.924 &  0.175 &  0.074 &  {\bf 0.125} &  0.161 &  0.204 &  0.053 &  0.173 &  0.043 &  0.216 &  0.064 \\
\hline
\end{tabular}
}
\end{center}
\end{table*}

\begin{table*}[t!]
\begin{center}
\caption{Off-policy evaluation with PenDigits dataset with $800$ samples} 
\label{tbl:table6}
\scalebox{0.65}[0.65]{
\begin{tabular}{l|rr|rr|rr|rr|rr|rr|rr}
\hline
\multirow{2}{*}{Behavior Policy} & \multicolumn{2}{c|}{DRCS}  & \multicolumn{2}{c|}{IPWCS} & \multicolumn{2}{c|}{DM} & \multicolumn{2}{c|}{IPWCS-R} &  \multicolumn{2}{c|}{DM-R} & \multicolumn{2}{c|}{DRCS-SN} & \multicolumn{2}{c}{IPWCS-SN}\\
 & MSE & std & MSE & std & MSE & std & MSE & std & MSE & std & MSE & std & MSE & std \\
\hline
$0.7\pi^d+0.3\pi^u$ &  0.118 &  0.020 &  1074.278 &   838.074 &  {\bf 0.083} &  0.035 &  {\bf 0.052} &  0.045 &  0.089 &  0.014 &  0.237 &  0.035 &  0.174 &  0.065 \\
$0.4\pi^d+0.6\pi^u$ &  {\bf 0.110} &  0.026 &  1328.069 &  1045.287 &  0.220 &  0.053 &  {\bf 0.056} &  0.040 &  0.231 &  0.026 &  0.254 &  0.038 &  0.247 &  0.039 \\
$0.0\pi^d+1.0\pi^u$ &  {\bf 0.314} &  0.086 &   231.043 &   217.068 &  0.503 &  0.049 &  {\bf 0.116} &  0.187 &  0.511 &  0.037 &  0.509 &  0.046 &  0.482 &  0.036 \\
\hline
\end{tabular}
}

\caption{Off-policy learning with PenDigits dataset with $800$ samples} 
\centering
\label{tbl:table7}
\scalebox{0.65}[0.65]{
\begin{tabular}{l|rr|rr|rr}
\hline
\multirow{2}{*}{Behavior Policy} &   \multicolumn{2}{c|}{DRCS}  & \multicolumn{2}{c|}{IPWCS} & \multicolumn{2}{c}{DM}  \\
 &   RWD & STD & RWD & STD & RWD & STD  \\
\hline
$0.7\pi^d+0.3\pi^u$ &  {\bf 0.683} &  0.030 &  0.241 &  0.048 &  0.507 &  0.060 \\
$0.4\pi^d+0.6\pi^u$ &  {\bf 0.678} &  0.039 &  0.252 &  0.032 &  0.445 &  0.096 \\
$0.0\pi^d+1.0\pi^u$ &  {\bf 0.409} &  0.067 &  0.204 &  0.031 &  0.212 &  0.041 \\
\hline
\end{tabular}
}

\end{center}
\end{table*}

\begin{table*}[t!]
\begin{center}
\caption{Off-policy evaluation with SatImage dataset with $1,000$ samples} 
\label{tbl:table7_ape}
\scalebox{0.65}[0.65]{
\begin{tabular}{l|rr|rr|rr|rr|rr|rr|rr}
\hline
\multirow{2}{*}{Behavior Policy} & \multicolumn{2}{c|}{DRCS}  & \multicolumn{2}{c|}{IPWCS} & \multicolumn{2}{c|}{DM} & \multicolumn{2}{c|}{IPWCS-R} &  \multicolumn{2}{c|}{DM-R} & \multicolumn{2}{c|}{DRCS-SN} & \multicolumn{2}{c}{IPWCS-SN}\\
 & MSE & std & MSE & std & MSE & std & MSE & std & MSE & std & MSE & std & MSE & std \\
\hline
$0.7\pi^d+0.3\pi^u$ &  0.111 &  0.024 &   58.724 &   91.964 &  {\bf 0.052} &  0.034 &  {\bf 0.050} &  0.069 &  0.067 &  0.019 &  0.186 &  0.032 &  0.255 &  0.097 \\
$0.4\pi^d+0.6\pi^u$  &  {\bf 0.090} &  0.026 &  118.317 &  188.729 &  0.173 &  0.097 &  {\bf 0.099} &  0.087 &  0.170 &  0.039 &  0.180 &  0.036 &  0.259 &  0.094 \\
$0.0\pi^d+1.0\pi^u$  &  {\bf 0.145} &  0.038 &   82.801 &  103.326 &  0.369 &  0.106 &  {\bf 0.018} &  0.026 &  0.395 &  0.046 &  0.362 &  0.055 &  0.382 &  0.072 \\
\hline
\end{tabular}
}

\caption{Off-policy evaluation with PenDigits dataset with $1,000$ samples} 
\label{tbl:table8_ape}
\scalebox{0.65}[0.65]{
\begin{tabular}{l|rr|rr|rr|rr|rr|rr|rr}
\hline
\multirow{2}{*}{Behavior Policy} & \multicolumn{2}{c|}{DRCS}  & \multicolumn{2}{c|}{IPWCS} & \multicolumn{2}{c|}{DM} & \multicolumn{2}{c|}{IPWCS-R} &  \multicolumn{2}{c|}{DM-R} & \multicolumn{2}{c|}{DRCS-SN} & \multicolumn{2}{c}{IPWCS-SN}\\
 & MSE & std & MSE & std & MSE & std & MSE & std & MSE & std & MSE & std & MSE & std \\
\hline
$0.7\pi^d+0.3\pi^u$ &  0.118 &  0.021 &  1299.936 &   829.752 &  0.094 &  0.029 &  {\bf 0.040} &  0.040 &  {\bf 0.090} &  0.012 &  0.236 &  0.029 &  0.164 &  0.035 \\
$0.4\pi^d+0.6\pi^u$ &  {\bf 0.106} &  0.021 &  1483.730 &  1014.923 &  0.256 &  0.063 &  {\bf 0.067} &  0.078 &  0.241 &  0.023 &  0.262 &  0.028 &  0.255 &  0.043 \\
$0.0\pi^d+1.0\pi^u$ &  {\bf 0.313} &  0.091 &   300.599 &   216.541 &  0.496 &  0.064 &  {\bf 0.099} &  0.167 &  0.531 &  0.033 &  0.523 &  0.025 &  0.496 &  0.033 \\
\hline
\end{tabular}
}
\end{center}
\end{table*}

\end{document}